\documentclass[11pt]{article}
\usepackage[margin=1.0in,a4paper]{geometry}

\usepackage{mathtools}
\usepackage{amsmath, amssymb, bbm, natbib, graphicx, url, amsthm, subcaption, float, dsfont}
\usepackage{mathrsfs}
\usepackage{algorithm, algpseudocode}
\bibliographystyle{plainnat}
\usepackage[mathcal]{eucal}
\usepackage{tikz}
\usetikzlibrary{arrows.meta, positioning, calc, fit}
\usepackage{verbatim}
\usepackage{bm}
\usepackage{standalone}
 \usepackage{tcolorbox}
 \newtcolorbox{assbox}{colback=black!5!white,colframe=black!75!black}
  \newtcolorbox{thmbox}{colback=red!5!white,colframe=red!75!black}

\usepackage{amsfonts}       % blackboard math symbols
\usepackage{enumitem}
\usepackage{microtype}      % microtypography
\usepackage{xcolor}
\usepackage[colorlinks=true,citecolor=blue]{hyperref}    % hyperlinks
\usepackage{diagbox}   
\usepackage{tcolorbox}
\usepackage{nicefrac}

\newcommand{\RMS}{{\bar{2}}}

\newcommand{\loss}{\mathrm{loss}}
\newcommand{\Law}{\mathrm{Law}}

\newcommand{\Update}
{\boldsymbol{\mathcal{U}}}
\newcommand{\tin}{\mathrm{in}}
\newcommand{\tout}{\mathrm{out}}
\newcommand{\clip}{\mathrm{cl}}
\DeclarePairedDelimiterX{\ipD}[2]{\langle}{\rangle_{\bar  2}}{#1,#2}

% SHORTCUTS

\newcommand{\RR}{\mathbb{R}}
\newcommand{\NN}{\mathbb{N}}

\newcommand{\Ll}{\mathcal{L}}

\newcommand{\Nn}{\mathcal{N}}

\newcommand{\Pp}{\mathcal{P}}

\newcommand{\E}{\mathbf{E}}
\renewcommand{\P}{\mathbf{P}}

\renewcommand{\d}{\mathrm{d}}

\newcommand{\id}{\mathrm{id}}

\newcommand{\eps}{\varepsilon}

% SPECIAL SHORTCUTS

\newcommand{\Lip}{\mathrm{Lip}}

% Iterations

% Math Operators

% COMMENTS

% THEOREMS
\newtheorem{theorem}{Theorem}
\newtheorem{proposition}{Proposition}[section]
\newtheorem{lemma}{Lemma}[section]

\newtheorem{remark}{Remark}[section]

\newtheorem{assumption}{Assumption}

%\newtheorem{assumptions}[theorem]{Assumptions}
%\newtheorem{property}[theorem]{Property}

%\newcounter{assumption}
%\newtheorem{assumption}[assumption]{Assumption}

%\newcommand{\Var}{\mathrm{Var}}

\definecolor{ForestGreen}{cmyk}{0.91,0,0.88,0.12}
\colorlet{pierrem}{ForestGreen}

\graphicspath{ {./images/} } 

\title{
%Depth Begets Width: \\
%Dynamics, Error Bounds, and Phase Diagram of Deep ResNets
The Hidden Width of Deep ResNets:\\
Tight Error Bounds and Phase Diagram
}

\author{
%Lénaïc
L\'ena\"ic Chizat\thanks{Ecole Polytechnique Fédérale de Lausanne (EPFL), Institute of Mathematics, 1015 Lausanne, Switzerland. \texttt{lenaic.chizat@epfl.ch}}%,\quad
}

\begin{document}
\maketitle
\begin{abstract}
We study the gradient-based training of large-depth residual networks (ResNets)  from standard random initializations. We show that infinite-depth ResNets behave as if they were infinitely wide, regardless of their actual width. More precisely, we obtain that with a fixed embedding dimension $D$,  the training dynamics converges to a unique \emph{Neural Mean ODE} training dynamics as the depth $L$ diverges, regardless of the scaling of the hidden width $M$.
For a residual scale $\Theta_D\big(\frac{\alpha}{LM}\big)$ with $\alpha=\Theta_D(1)$, we obtain the error bound $O_D\big(\frac{1}{L}+ \frac{1}{\sqrt{LM}}\big)$ between the model's output and its limit after a fixed number gradient of steps. In this regime, the limit exhibits maximal \emph{local} feature updates, i.e.~the Mean ODE is genuinely non-linearly parameterized.
In contrast, we show that $\alpha \to \infty$ yields a \emph{lazy ODE} regime where the Mean ODE is linearly parameterized, and we derive a convergence rate in this case as well.
We then focus on the particular case of ResNets with two-layer perceptron blocks, for which we study how these scalings depend on the embedding dimension $D$. We identify the residual scale  $O\big(\frac{\sqrt{D}}{LM}\big)$ as necessary and sufficient for maximal local feature updates. In this regime, we prove a high-probability error bound $O\big(\frac{1}{L}+ \frac{\sqrt{D}}{\sqrt{LM}}\big)$ between the ResNet and its limit after a fixed number of gradient steps.
Our convergence results rely on a novel mathematical perspective on ResNets : (i) due to the randomness of the initialization, the forward and backward pass through the ResNet behave as the stochastic approximation of certain mean ODEs, and (ii) by propagation of chaos—that is, asymptotic independence of the units—this behavior is preserved through the training dynamics. We verify empirically that all our rates are tight.
\end{abstract}

\tableofcontents

\section{Introduction}
Scaling up dataset sizes and deep learning architectures has been a key driver of the performance gains observed in recent years in artificial intelligence. However, many hyperparameters (HPs) and choices determine a model's behavior—its architecture, initial weights, training algorithm, and so on—and tuning all these for optimal performance on very large models is computationally prohibitive. In this context, the theoretical analysis of large neural networks—such as the derivation of phase diagrams with tight error estimates—offers principled ways to gain intuition on the behavior of large models and organize this search space.

In this paper, we pursue this program in the context of residual architectures, which have constituted the backbones of state-of-the-art models since~\citep{he2016deep}. In our analysis, the key HPs are the depth $L$, the embedding dimension $D$, the hidden width $M$, the layerwise initialization scales (and/or scaling factors) and learning rates (LRs). In Transformers~\citep{vaswani2017attention}, the hidden width $M$ corresponds to the feedforward width in multilayer perceptron (MLP) blocks or to the number of attention heads in attention blocks. We ask the following question:
\begin{center}
\emph{What are the large-depth ($L\to\infty$) behaviors of the training dynamics of ResNets?}
\end{center}

Prior work has associated the $L \to \infty$ limit with the Neural ODE model, but establishing this connection rigorously requires highly specific weight-tied initializations, which differ from practical setups~\citep{avelin2021neural, marion2023implicit}. Another line of work combines the large-depth ($L \to \infty$) and large-width ($M \to \infty$) limits for randomly initialized ResNets, and shows that the asymptotic dynamics is that of a \emph{Mean-Field Neural ODE}~\citep{lu2020mean}, with an approximation rate $O_D\big(\frac{1}{L} + \frac{1}{\sqrt{M}}\big)$~\citep{ding2022overparameterization}. However, taking $M \to \infty$ with $D$ fixed departs significantly from practice, where $M$ is typically comparable to $D$, so it is a priori unclear whether this limit bears any connection with practical setups.

In this paper, we show that this infinite width-and-depth limit in fact faithfully models practical architectures, because it arises as $L \to \infty$ \emph{regardless} of how $M$ scales. Unlike prior works, we exhibit the central role of the interaction between hidden width $M$ and depth $L$ towards approximating the limit. We obtain an error bound that is the sum of a ``depth-discretization'' error in $O(1/L)$---the usual error of the Euler method---and a novel ``sampling error'' that follows the Monte-Carlo rate $O_D(\alpha/\sqrt{ML})$ with \emph{effective width} $ML$ where $\alpha$ is a variance term that depends on the choice of HP scaling. 
The convergence of the trained ResNet to the infinite width-and-depth model is illustrated on Figure~\ref{fig:cover} in a setting where the hidden width is $M=1$. The convergence rates are shown on Figure~\ref{fig:experiment1} (see Figure~\ref{fig:experiment-D} for the dependency in $D$).
\begin{figure}
\centering
\includegraphics[scale=0.6]{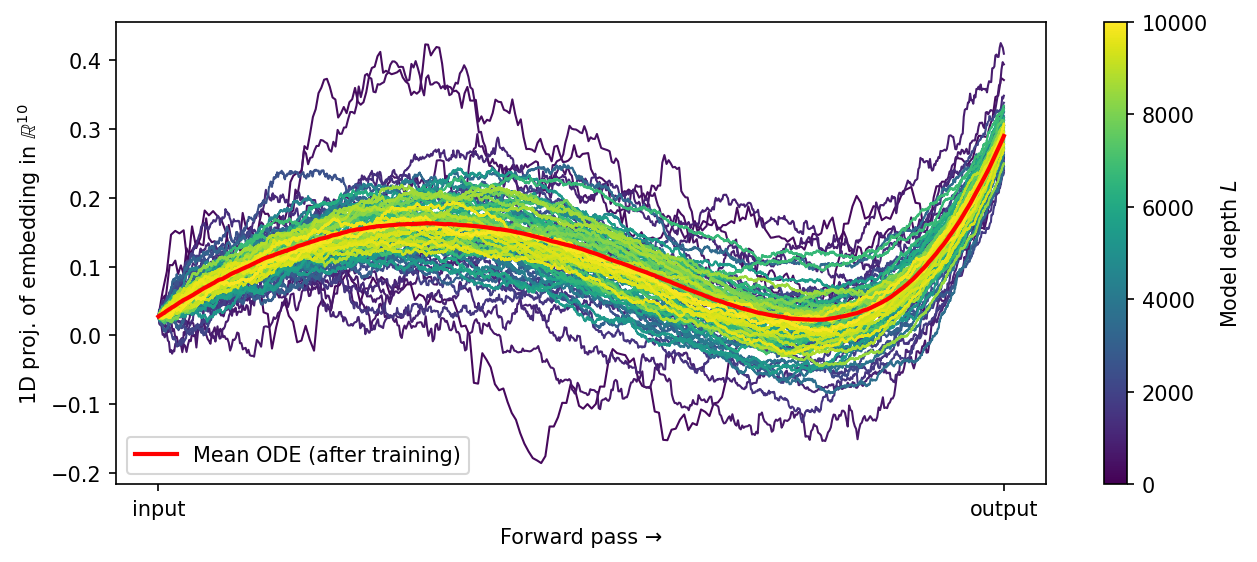}
\caption{Forward pass (1D projection, fixed input) of trained ResNets (after $K=100$ GD iterations) with  two-layer-perceptron blocks, varying depths $L$ and hidden width $M=1$. The red curve shows the corresponding forward pass for the limit model, approximated with a ResNet of very large hidden width and depth (setting detailed in Section~\ref{sec:experiments-LM}). The convergence rate towards the red curve is shown in Figure~\ref{fig:experiment1} and characterized in Theorem~\ref{claim:D-dependence}.}\label{fig:cover}
\end{figure}

\begin{figure}
\centering
\begin{subfigure}{0.48\linewidth}
\centering
\includegraphics[scale=0.5]{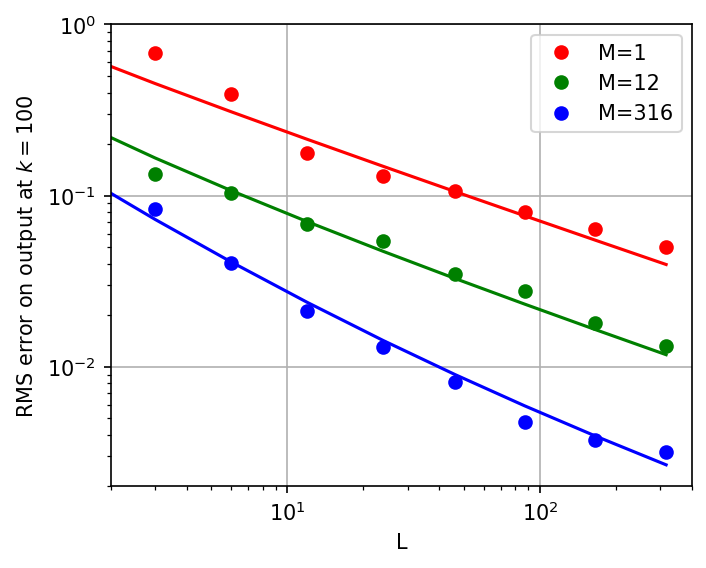}
\caption{Error on output vs depth $L$}
\end{subfigure}%
\begin{subfigure}{0.48\linewidth}
\centering
\includegraphics[scale=0.5]{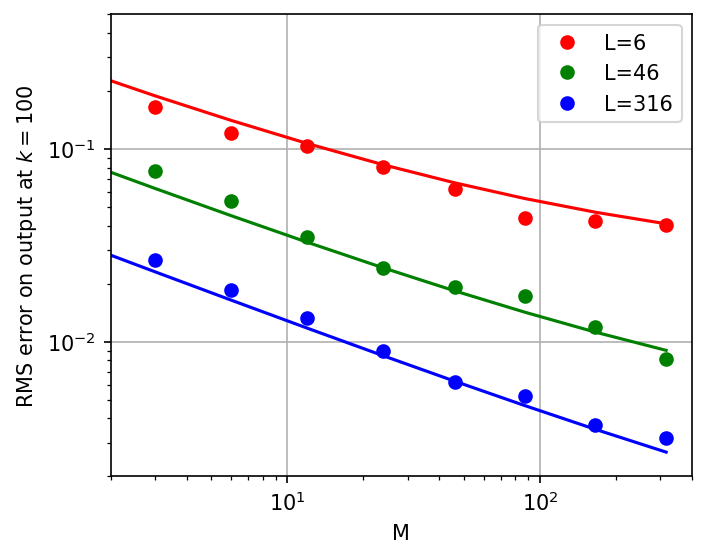}
\caption{Error on output vs hidden width $M$}
\end{subfigure}
\caption{Comparison of the experimental errors (bullets) with the theoretical upper-bound $a/L+b/\sqrt{ML}$ from Theorem~\ref{thm:main} with $a=0.15$ and $b=0.22$ manually adjusted to fit observations (plain lines). The y-axis shows root mean square error (averaged over $10$ random repetitions) on the output after $k=100$ GD steps  (same setting as Figure~\ref{fig:cover}, details in Section~\ref{sec:experiments-LM}).}\label{fig:experiment1}
\end{figure}

    From a mathematical standpoint, our key insights to obtain these estimates are: (i) due to random initialization, the forward and backward passes through a ResNet behave as stochastic approximations of certain mean ODEs, and (ii) by propagation of chaos—i.e., asymptotic independence of the units—this behavior is preserved throughout training. To reflect this interpretation and highlight that the limit does not require $M\to \infty$ (in fact, our viewpoint also applies to well-studied architectures with a single weight matrix per block where $M=1$), we propose to name it the \emph{Neural Mean ODE}, a name inspired by the stochastic approximation literature~\citep{kushner2003stochastic, benaim2006dynamics}.

 \subsection{Summary of contributions}
 The contributions of this paper are broadly divided into two parts: in the first part, we consider generic ResNets architectures and ignore the dependencies in the embedding dimension $D$. In the second part, we focus on ResNets with two-layer perceptrons (2LP) blocks and track the dependencies in $D$.
 
The contributions in the first part, for generic ResNets, can be summarized as follows:
\begin{enumerate}
\item In Theorem~\ref{thm:main}, for a residual scale $\Theta_D\big(\frac{1}{LM}\big)$, we show that after $k$ steps of gradient descent (GD) from a random initialization, the difference between the  the ResNet and the \emph{Neural Mean ODE} is, with high probability, bounded by
\[
O_{D,k}\Big(\frac{1}{L}+\frac{1}{\sqrt{ML}}\Big).
\]
In this case, the dynamics exhibits maximal local feature updates (MLU) and the limit Mean ODE is genuinely non-linearly parameterized.
\item In Theorem~\ref{thm:main-lazy},  for a residual scale $\Theta_D\big(\frac{\alpha}{LM}\big)$ with $1\ll \alpha \ll \sqrt{ML}$, we show that after $k$ steps of GD from a random initialization, the difference between  the ResNet and the \emph{Neural Tangent ODE}, i.e.~the linearization of the Mean ODE's drift around its initial parameters, is with high probability, bounded by
\[
O_{D,k}\Big(\frac{1}{\alpha}+ \frac{1}{L}+\frac{\alpha}{\sqrt{ML}}\Big).
\]
\end{enumerate}
In the second part, we focus on ResNets with two-layer perceptron (2LP) blocks, which are not covered by the assumptions of the first part. We obtain a detailed description of their behavior, including the dependencies in $D$:
\begin{enumerate}[resume]
\item First, we study the phase diagram of this architecture as a function of the residual scale, namely the product of the branch multiplier with the initialization scale of the block's output layer. We identify the residual scale  $O\big(\frac{\sqrt{D}}{LM}\big)$ as necessary and sufficient for having maximal local feature updates (MLU), see the phase diagram in Figure~\ref{fig:sigmav-phase-diagram}. This classification of scalings extends CompleteP~\citep{dey2025don}, known for $M=\Theta(D)$, to general architecture shapes $(L,M,D)$ with non-proportional scalings. 
\item In Theorem~\ref{claim:D-dependence}, our most technical result, we prove that with a residual scale $O\big(\frac{\sqrt{D}}{LM}\big)$ and if $D=O(M)$, the difference between the ResNet (with 2LP blocks and gradient clipping) and its $L\to \infty$ limit is, with high probability, bounded by
\[
O_{k}\Big(\frac{1}{L}+\sqrt{\frac{D}{ML}}\Big).
\]
This confirms the validity of the limit in practical regimes where $M\approx D$ and $ML\gg D$. 
\end{enumerate}

We also verify experimentally\footnote{The code to reproduce the numerical experiments is available at: \url{https://github.com/lchizat/2025-hidden-width-deep-resnet/}} in basic settings that all our predicted rates and phase diagrams are tight in their dependency in $L,M$ and $D$ and the residual scale.

\paragraph{Organization} The contributions for general ResNets are presented in Section~\ref{sec:limit-1} for the MLU regime and Section~\ref{sec:NTODE} for the lazy-ODE regime, and their proofs are together in Section~\ref{sec:proof-generic}. The contributions for ResNets with 2LP blocks are presented in Section~\ref{sec:scaling-D} and their proofs are in Section~\ref{sec:proof-2LP}.

\subsection{Related work}\label{sec:related-work}

\paragraph{Bridging Mean-field and Neural ODE analyses.}  
The first infinite-dimensional analyses of neural network training dynamics appeared in three forms: the Neural ODE framework~\citep{weinan2017proposal, lu2018beyond, chen2018neural}, the mean-field analysis of two-layer perceptrons~\citep{rotskoff2022trainability, chizat2018global, mei2018mean, sirignano2020mean}, and the Neural Tangent Kernel (NTK)~\citep{jacot2018neural, du2019gradient}. Soon after, it was observed that the infinite-depth ($L \to \infty$) and infinite-width ($M \to \infty$) limits could be combined~\citep{lu2020mean, ding2022overparameterization, barboni2024understanding, isobe2023convergence}. These works consider the joint limit $L \to \infty$ and $M \to \infty$, with fixed $D$. In particular, \citep{ding2022overparameterization} obtained convergence of the training dynamics to the limit with an error bound of $O_D\!\left(\frac{1}{L} + \frac{1}{\sqrt{M}}\right)$: the first term corresponds to depth discretization—also present in our analysis—while the second term accounts for fluctuations due to finite width. We also note that their proof technique requires a non-standard initialization with correlations across depth. By adopting a different viewpoint, our analysis improves over this bound and shows that taking $L\to \infty$ from a standard iid initialization, even with $M$ fixed, is sufficient to converge to the same limit.

\paragraph{Large-width HP scalings.}  
The theoretical tractability of the NTK limit stems from an initialization scaling that makes the model asymptotically linear in its parameters. The key role of the initialization scale (or explicit scaling factors) in determining the asymptotic training regime was first emphasized in~\citep{chizat2019lazy}, which also argued that this lazy-kernel\footnote{We write lazy-kernel to mark the distinction with the lazy-ODE regime.} regime is suboptimal due to the absence of feature learning. The classification of HP scalings was then extended to deep MLPs in~\citep{geiger2020disentangling}, and a complete classification for finite depth MLPs was proposed in~\citep{yang2021tensor}. The latter identified \(\mu\)P—combining mean-field scaling in the output layer with standard scaling in other layers—as achieving the maximal (feature) update (MU) regime. It was demonstrated in~\citep{yang2021tuning} that $\mu$P enables zero-shot HP transfer between models of different widths. In our setting, this scaling corresponds to requiring a backward pass with entrywise scale \(1/D\) (this condition appears in our analysis in Section~\ref{sec:scaling-D}).  

\paragraph{Large-depth HP scalings.}   More recently, HP scalings in terms of depth have also been studied~\citep{yang2023tensor, bordelon2023depthwise}, with criteria that singled-out a residual-block scaling of \(\Theta_{M,D}\Big(\frac{1}{\sqrt{L}}\Big)\). However, those works also noticed that this scaling leads to a linearization of each residual block—what we call the lazy ODE regime—and experiments in~\citep{dey2025don} suggest that this behavior might be empirically suboptimal. The mechanism at play in this regime is comparable to the one in the lazy-kernel regime, where the random initial weights over-amplify the effect of the updates of pre-activations in the forward pass, thereby preventing $\Theta(1)$ local feature updates (i.e.~feature updates due to local weight updates, see Section~\ref{sec:scaling-D} for details). They proposed instead  \emph{CompleteP} with residual scale \(\Theta\Big(\frac{1}{L\sqrt{D}}\Big)\) under the assumption \( M = \Theta(D)\).  Relatedly, it was clear from the \emph{Mean-field Neural ODE} literature that the residual scale $\Theta_D\Big(\frac{1}{ML}\Big)$ leads to local feature updates as $M,L\to \infty$ with $D$ fixed. Our analysis allows to bridge these viewpoints and to complete the phase diagram, showing that a necessary and sufficient condition for maximal local feature updates (MLU) when $D=O(LM)$ is the residual scale \(O\big(\frac{\sqrt{D}}{LM}\big)\). 

\paragraph{Other approaches to large neural networks.}  
A variety of other frameworks have been proposed to analyze large neural networks and the role of architectures and HP scalings. Examples include the Neural Network Gaussian Process~\citep{lee2018deep, matthews2018gaussian}, dynamical isometry~\citep{pennington2017resurrecting}, and the study of gradients~\citep{hanin2018neural} or conjugate/tangent kernels at initialization~\citep{hayou2019impact, hayou2021stable}. A limitation of these approaches is that, being restricted to initialization, they do not capture inherently dynamical phenomena such as feature updates, which are critical for identifying optimal scalings. For instance, in the Neural Mean ODE considered here, the first forward and backward passes are asymptotically trivial---they compute the identity map---nevertheless, \citep{dey2025don} found that transformers in this regime can achieve optimal performance in large-scale language modeling tasks. Another line of work concerns the description of the training dynamics via Dynamical Mean Field Theory~\citep{bordelon2022self}, or its algorithmic/programmatic counterpart Tensor Programs~\citep{yang2021tensor}. These works first take the infinite-width limit assuming $M=\Theta(D)$ and then take the large depth limit $L\to \infty$. While this approach has led to the first descriptions of the dynamics of full infinite-size ResNets, the limits are sequential (rather than joint), non-quantitative, limited to the proportional scaling of $M$ and $D$, and rely on heuristic arguments. 

In the companion paper~\cite{chaintron2026resnets}, we study the quantitative $D\to \infty$ limit of the Neural Mean ODE and obtain a $O(1/\sqrt{D})$  high probability bound. Combined with our result in Theorem~\ref{claim:D-dependence}, this leads to a quantitative and rigorous picture of the large-size limits of ResNets with general shapes, see~\cite{chaintron2026resnets} for further discussion.

\section{Generic ResNets in the maximal local update (MLU) regime}\label{sec:limit-1}
In this section, we introduce the training dynamics of ResNets with generic blocks and of the Mean ODE limit model, and then state our quantitative convergence theorem in the MLU regime ($\alpha=\Theta(1)$).

\subsection{Training dynamics of generic ResNets} 
\label{sec:resnet}
Consider a ResNet with depth $L\in \NN^*$, embedding dimension $D\in \NN^*$ and $M\in \NN^*$ units per layers. For an input $x\in \RR^D$, weights $\theta = (z^{j,\ell})_{j,\ell}\in (\RR^p)^{M\times L}$, and scaling factor $\alpha>0$ (think $\alpha=1$ for now), its output $\hat h^L_\theta(x)\in \RR^D$ is computed via the \emph{forward pass} recursion\footnote{ We keep the embedding/unembedding matrices fixed since their behavior is not the focus of this work. One can think of them as being absorbed in the input and the loss.}
\begin{align}\label{eq:discrete-forward}
\hat h^0_\theta(x) = x,&&
\hat h^{\ell}_\theta(x)&=\hat h^{\ell-1}_\theta(x)+ \frac{\alpha}{LM}\sum_{i=1}^M \phi(\hat h^{\ell-1}_\theta(x),z^{i,\ell}),\quad \ell\in [1:L]
\end{align}
where $\phi: \RR^D\times \RR^p\to \RR^D$ represents one ``unit'' parameterized by $z\in \RR^p$, such as a neuron in a vanilla two-layer perceptron, a gated linear unit, an attention head, etc. 

\paragraph{Examples} A ResNet with two-layer perceptron (2LP) blocks without intercepts, is obtained by letting $z=(u,v)\in \RR^{D}\times \RR^D$ (ie $p=2D$) and for $x\in \RR^D$,
\begin{align}\label{eq:MLP-example}
\phi_{\text{mlp}}(x,(u,v))=v\rho(u^\top x/D)
\end{align}
where $\rho:\RR\to\RR$ is the activation function, acting entrywise.  Summing $M$ such units is equivalent to the standard 2LP block $x\mapsto V\rho(D^{-1}Ux)$ with $U\in \RR^{M\times D}$ and $V\in \RR^{D\times M}$ (we introduce here a factor $D^{-1}$ for consistency with the analysis of this architecture in Section~\ref{sec:scaling-D}). 

Also, ResNets architectures with a single weight matrix per block are covered by our analysis by letting $M=1$ and for instance $\phi(x,W)=W\rho(x)$ or $\phi(x,W)=\rho(Wx)$ with $W\in \RR^{D\times D}$ with a centered iid initialization $W_0$. For the latter, observe that if $\rho$ is not odd then $\E[\phi(x,W_0)]\neq 0$, in which case there is no lazy ODE regime (see Section~\ref{sec:NTODE}). 

Another possible block is the attention block, obtained by letting $z=(W_K,W_Q,W_V,W_O)\in (\RR^{d_k\times D})^4$ and for an input family of $T$ tokens $x=(x_1,\dots,x_T)\in (\RR^D)^T$,
\begin{align}\label{eq:attention}
\phi_{\text{att}}(x,z) = \Big(W_O^\top \sum_{i=1}^T \frac{e^{(W_Qx_t)^\top (W_Kx_i)/\sqrt{d_k}}}{\sum_{j=1}^T e^{(W_Qx_t)^\top (W_K x_j)/\sqrt{d_k}}} W_Vx_i\Big)_{0\leq t\leq T} \in (\RR^D)^T.
\end{align}
In this setting, the hidden-width $M$ is known as the number \emph{attention heads} per layer while $d_k$ is the key/query dimension, which is considered a constant in our analysis\footnote{It is in fact not clear whether scaling-up $d_k$ is beneficial. For instance, in the Llama 3.1 family of models, $d_k$ is constant equal to $128$ across all model sizes~\citep{grattafiori2024llama}.}.

\paragraph{Training dynamics} 
Consider a training set of size $n$, where for the $i$-th training sample the input is $x_i\in \RR^D$ and the loss is $\loss_i:\RR^D\to \RR$, assumed differentiable. This leads to an objective function $\hat \Ll$ in the variable  $\theta = (z^{j,\ell})_{j,\ell}\in (\RR^p)^{M\times L}$ defined as:
\begin{align}\label{eq:loss-discrete}
\hat \Ll(\theta) \coloneqq \frac1n \sum_{i=1}^n \hat \Ll_i(\theta), && \hat \Ll_i(\theta)\coloneqq\loss_i(\hat h^L_\theta(x_i)).
\end{align}
%Consider an initial distribution $\mu_0 = (\rho_0(\cdot|s))$
Consider an initial probability distribution $\mu_0\in \Pp(\RR^p)$ and a learning-rate $\eta>0$. The gradient descent (GD) dynamics $( \theta_k)_{k\geq 0} = (\hat Z^{i,\ell}_k)_{i,\ell,k}$ is defined by
\begin{align}\label{eq:GD-discrete}
\hat Z^{j,\ell}_0 &\overset{iid}{\sim} \mu_0,&
\hat Z^{j,\ell}_{k+1} &= \hat Z^{j,\ell}_k - \frac{LM\eta}{\alpha^2}\nabla_{z^{j,\ell}} \hat \Ll(\theta_k),\qquad \forall j\in [1:M], \forall \ell\in [1:L], \forall k\in \NN.
\end{align}
We switched to capital letters in the notation to indicate that these quantities are random variables.
As long as $\alpha=\Omega(1)$, the factor $ML/\alpha^2$ is the appropriate LR scaling since it prevents the update of the forward and backward pass from vanishing/exploding asymptotically as is clear from the expression of the gradient (see~\eqref{eq:gradient-discrete} below).
We focus on GD only to fix ideas; our technique would apply to any update rule that is a Lipschitz function of the sample gradients such as GD, SGD, clip SGD, Adam\footnote{For Adam, from~\cite[Eq.(9)]{orvieto2025search}, the Lipschitz property holds uniformly when the sequence of batch gradients has uniformly lower-bounded empirical variance.}, etc.

\paragraph{Expression in terms of update map} 
For $x,w\in \RR^D$ and $\theta\in (\RR^p)^{M\times L}$, 
define the \emph{backward pass} $\hat b^\ell_\theta(x,w) \coloneqq \Big(\frac{\partial \hat h^L_\theta}{\partial \hat h^\ell_\theta}(x)\Big)^\top w \in \RR^D$ where $\frac{\partial \hat h^L_\theta}{\partial \hat h^\ell_\theta}(x) \in \RR^{D\times D}$ is the Jacobian of the map $\hat h^\ell_\theta(x) \mapsto \hat h^L_\theta(x)$ defined by the recursion~\eqref{eq:discrete-forward}. By the chain rule, we have $\forall j\in [1:M]$, $\forall \ell\in [1,L]$,
\begin{align}\label{eq:gradient-discrete}
\nabla_{z^{j,\ell}} \Ll_i(\theta) = \frac{\alpha}{LM}D_2\phi(\hat h^{\ell-1}_\theta(x_i),z^{j,\ell})^\top \hat b^{\ell}_\theta(x_i,\nabla \loss_i(\hat h^{L}_\theta(x_i)))
\end{align}
where $(\hat b^{\ell}_\theta)_{\ell\in [1:L]}$ can be obtained from the backward recursion
\begin{align}\label{eq:discrete-backward}
\hat b^L_\theta(x,w) = w, && \hat b^{\ell-1}_\theta(x,w)=b^{\ell}_\theta(x,w)+\frac{\alpha}{LM} \sum_{j=1}^M D_1\phi(\hat h^{\ell-1}_\theta(x),z^{j,\ell})^\top b^{\ell}_\theta(x,w).
\end{align}
In those expressions, $D_1\phi$ and $D_2\phi$ stand for the Jacobians of $\phi$ in its first and second argument, respectively.
We can therefore rewrite the GD equations defining $(\theta_k)_{k\geq 0}=(\hat Z^{j,\ell}_k)_{j,\ell,k}$ in~\eqref{eq:GD-discrete} as 
\begin{align}\label{eq:discrete-update-simplified}
\hat Z^{j,\ell}_0 &\overset{iid}{\sim} \mu_0,&&
\hat Z^{j,\ell}_{k+1}=\hat Z^{j,\ell}_k + \frac{\eta}{\alpha n} \sum_{i=1}^n \Update(\hat Z^{j,\ell}_k, \hat h^{\ell-1}_{k,i}, \hat b^\ell_{k,i}) \quad \forall k\geq 0
\end{align}
where we have used the shortcuts $\hat h_{k,i}=\hat h_{\theta_k}(x_i)$, $\hat b_{k,i}=\hat b_{\theta_k}(x_i,\hat w_{i,k})$ and $\hat w_{i,k}\coloneqq \nabla \loss_i(\hat h^{L}_{k,i})$ and the (per-sample) update map $\Update: \RR^p\times \RR^D\times \RR^D\to \RR^p$ is defined as
\begin{align}\label{eq:update-map}
\Update(z,h,b) \coloneqq - D_2\phi(h,z)^\top b.
\end{align}

\subsection{Training dynamics of Neural Mean ODEs}\label{sec:NMODE}
We now present the limit model, which we refer to as the (Neural) Mean ODE.  We parameterize this model by a $L^2$ map $Z:[0,1]\times \Omega \to \RR^p$ where $(\Omega,\P)$ is an abstract probability space.  We may interpret $Z$ as a stochastic process indexed by a depth index $s\in [0,1]$ whose distribution given $s$ represents the distribution of parameters at this layer\footnote{Most prior works parameterize the model by the family of probability measures $(\text{Law}(Z(s)))_{s\in [0,1]}$. While for two-layer networks this measure-based representation is appealing — in particular because it convexifies the objective — this advantage disappears for ResNets. In contrast, representing the model as an $L^2$-map (or equivalently as a stochastic process) preserves the natural optimization geometry of finite-depth ResNets without resorting to optimal transport tools, and it allows to capture the evolution of individual parameters. Conceptually, this choice mirrors the classical dichotomy between the PDE and the McKean–Vlasov representations of mean-field interacting particle systems.}.  

The forward pass $h_Z(s,x)\in \RR^D$ is a function of depth $s\in [0,1]$, input $x\in \RR^D$ and the stochastic process $Z$ that encodes the parameters of the limit model. It is characterized as the solution to the forward Mean ODE:
\begin{align}\label{eq:limit-forward}
h_Z(0,x)=x,&& \partial_s h_Z(s,x)= \alpha \E\big[ \phi(h_Z(s,x),Z(s))\big],\quad \forall s\in {[0,1]}, \forall x\in \RR^D.
\end{align}
Note that $h$ depends on $Z$ only via its marginal distributions $(\Law(Z(s)))_{s\in [0,1]}$.
Similarly as in~\eqref{eq:loss-discrete}, the objective is defined as  
\begin{align*}
 \Ll(Z) \coloneqq \frac1n \sum_{i=1}^n \Ll_i(Z), && \Ll_i(Z)\coloneqq\loss_i(h_Z(1,x_i))
\end{align*}
and we consider GD of $\Ll$ in the $L^2$ geometry starting from a random constant $\xi\sim \mu_0$:
\begin{align}\label{eq:GD-continuous}
Z_0(s)=\xi,\; \forall s\in [0,1]&&
Z_{k+1} = Z_{k}  - \frac{\eta}{\alpha^2}\nabla \Ll(Z_k),\quad \forall k\in \NN.
\end{align}
Observe that this is a deterministic dynamics in $L^2([0,1]\times \Omega;\RR^p)$. Our choice to initialize with a random constant function $Z_0(s)=\xi$ is just a convenient convention because only the $s$-marginals of $Z_0$ matter. This convention will allow us to control the regularity in $s$ of the ODE~\eqref{eq:limit-forward} associated to $Z_k$ directly in terms of the regularity of $s\mapsto Z_k(s)$, which is easy to track.

\paragraph{Expression in terms of update map} 
The gradient's expression can be derived with the adjoint method (i.e.~continuous-time backpropagation). The backward Mean ODE $b_Z(s,x,w)\in \RR^D$ with $s\in [0,1]$ and $x,w\in \RR^D$ is the solution to $b(1,x,w,Z)=w$ and
\begin{align}\label{eq:limit-backward}
\partial_s b_Z(s,x,w)= - \alpha \E\Big[ D_1\phi(h_Z(s,x),Z(s))^\top b_Z(s,x,w)\Big] ,\; s\in {[0,1]}.
\end{align}
One then has the following equations for the GD dynamics $(Z_k)_{k\geq 0}$, with $\xi_0\sim \mu_0$ and $\forall s\in [0,1]$
\begin{align}\label{eq:GD-limit}
Z_0(s)=\xi_0, &&
Z_{k+1}(s) = Z_{k}(s) - \frac{\eta}{\alpha n}\sum_{i=1}^n \Update(Z_k(s), h_{k,i}(s), b_{k,i}(s)), && \forall k\geq 0.
\end{align}
where $h_{k,i}(s)\coloneqq h_{Z_k}(s,x_i)$, $b_{k,i}(s)\coloneqq b_{Z_k}(s,x_i,w_{i,k})$, $w_{i,k}\coloneqq \nabla \loss_i(\hat h_{k,i}(1))$ and the per-sample update map $\Update(z,h,b)=-D_2\phi(h,z)^\top b$ is the same as in~\eqref{eq:update-map}.

The rigorous connection between this dynamics and the ResNet dynamics is the object of Theorem~\ref{thm:main} in the next section.

\paragraph{Transformer Mean ODE} Let us mention that our analysis can be easily adapted to deal with various types of blocks in the same ResNet---computed in parallel or sequentially\footnote{The basic argument is that for a scheme defining for $\ell$ even $x_{\ell+1}=x_\ell+L^{-1} f(x_\ell)$ and $x_{\ell+2}=x_{\ell+1}+L^{-1} g(x_{\ell+1})$, the $L\to \infty$ limit is $\dot{x}(s)=f(x(s))/2+g(x(s))/2$ as can be seen from the two-step expansion $x_{\ell+2}=x_{\ell} + 2L^{-1}(f(x_{\ell})/2+g(x_{\ell})/2)+O(L^{-2})$.}. Each block type leads to one term in the Mean ODE. For instance, the Transformer architecture alternates between perceptron~\eqref{eq:MLP-example} and attention blocks~\eqref{eq:attention}. Given a family of tokens $(x_1,\dots,x_T)\in (\RR^{d_{in}})^T$, the Transformer Mean ODE lives in $\RR^{D\times T}$ and is given by
\begin{align*}
\left\{
\begin{aligned}
h(0,x) &=W_{\tin} x\\
\partial_s h(s,x) &= \frac12 \E[\phi_{\text{mlp}}(h(s,x),Z_{\text{mlp}}(s))]+ \frac12 \E[\phi_{\text{att}}(h(s,x),Z_{\text{att}}(s))]\\
f(x) &= W_{\tout}^\top h(1,x)
\end{aligned}
\right.
\end{align*}
where $f(x)$ are the logit outputs, $Z_{\text{att}}:[0,1]\to (\RR^{D\times d_k})^4$ and $Z_{\text{mlp}}:[0,1]\to (\RR^{D})^2$ are stochastic processes that parameterize the limit model and $W_\tin,W_\tout\in \RR^{D\times d_{in}}$ are the embedding and unembedding matrices. Note that in MLP blocks, the tokens are processed independently.

%\section{: non-asymptotic error bounds}
\subsection{Convergence theorem: large-depth limit in the MLU regime}\label{sec:main}
We consider the following regularity assumptions.

\begin{assbox}
\begin{assumption}[Regularity assumptions] \,\label{ass:regularity} There exists $B>0$ such that:
\begin{enumerate}
\item $\phi$ is $B$-Lipschitz, differentiable, its differential $D\phi$ is $B$-Lipschitz and $\Vert \phi(0,0)\Vert_2 \leq B$;
\item The losses $\mathrm{loss}_i$ are differentiable with $B$-Lipschitz derivatives and $\Vert \nabla \mathrm{loss}_i(0)\Vert_2\leq B$;
\item The inputs satisfy $\max_i \Vert x_i\Vert_2\leq B$.
\end{enumerate}
\end{assumption}
\end{assbox}
The regularity that we assume on $\phi$ in this section is quite restrictive but allows us to focus on the main mechanisms in our proofs. In Section~\ref{sec:scaling-D}, we carry a similar analysis but in the case of 2LP blocks where $\phi$ and $D\phi$ are only pseudo-Lipschitz (i.e. locally Lipschitz with a controlled growth).

We recall that a $\RR^p$-valued random variable $Z$ is said subgaussian with variance-proxy $\sigma^2>0$ (written $\sigma^2$-subgaussian) if for all $u\in \RR^p$ with $\Vert u\Vert_2=1$  and $\lambda\in \RR$, it holds
\begin{align}\label{eq:variance-proxy}
\E [\exp(\lambda u^\top (Z-\E Z))] \leq e^{\sigma^2\lambda^2/2}.
\end{align}
We say that a probability measure $\mu\in \Pp(\RR^p)$ is $\sigma^2$-subgaussian if $Z$ is  $\sigma^2$-subgaussian for any (and therefore all) $Z\sim\mu$. Background and useful results on subgaussian random variables are gathered in Appendix~\ref{sec:subgaussian}.

In order to state our first convergence theorem, consider the ResNet's dynamics $(\hat Z^{j,\ell}_0)$ (defined in~\eqref{eq:discrete-update-simplified}) and consider $M\times L$ independent copies of the limit dynamics $(Z^{j,\ell}_k)_{k\geq 0}$ (defined in~\eqref{eq:GD-limit}) coupled via $Z^{j,\ell}_0(s)=\hat Z^{j,\ell}_0$, $\forall s\in [0,1]$. Consider the distance between the two dynamics in parameter space, forward pass and backward pass respectively defined, with $s_\ell=\ell/L$ for $\ell\in [0:L]$, as
\begin{align*}
\Delta_k^Z &\coloneqq \max_{\substack{j\in [1:M] \\ \ell\in [1:L]}} \Vert \hat Z^{j,\ell}_k - Z^{j,\ell}_k(s_{\ell-1})\Vert,&
\Delta_k^h &\coloneqq \max_{\substack{i\in [1:n] \\ \ell\in [0:L]}} \Vert \hat h^\ell_{k,i} - h_{k,i}(s_\ell)\Vert,&
\Delta_k^b &\coloneqq \max_{\substack{i\in [1:n] \\ \ell\in [0:L]}} \Vert \hat b_{k,i}^\ell - b_{k,i}(s_\ell)\Vert.
\end{align*}
In this section, since we do not track the dimensional constants, the norm $\Vert \cdot \Vert$ is arbitrary.

\begin{thmbox}
\begin{theorem}[Convergence in the MLU regime]\label{thm:main}
Let Assumption~\ref{ass:regularity} hold with $B>0$, let $\alpha=1$ (MLU regime), and let $\mu_0\in \Pp(\RR^p)$ be a subgaussian distribution with variance proxy $\sigma_0^2\leq B$. 
%Consider $(\hat \bZ_k)_{k\geq 0}$ the iterates of GD on the ResNet~\eqref{eq:GD-discrete} and $(Z_k)_{k\geq 0}$ the iterates of the limit dynamics~\eqref{eq:GD-limit}. Fix a number $K\geq 1$ of GD iterations and let $s_\ell=\ell/L$ for $\ell\in [0:L]$. 
Then $\forall k\geq 0$, there exists $c_1,c_2>0$  that only depend on $B,D$ and $k\eta$ such that with probability at least $1-\delta$, it holds:
 \begin{align}\label{eq:main-thm-convergence}
\max_{k'\in [0:k]} \big\{ \Delta_{k'}^Z,\Delta_{k'}^h, \Delta_{k'}^b\big\} \leq c_1 \left( \frac{1}{L}+ \frac{\sigma_0(1+\sqrt{\log(kn/\delta)})}{\sqrt{LM}}\right)
\end{align}
provided that the right-hand side is smaller than $c_2$.
\end{theorem}
\end{thmbox}

We can make the following comments:
\begin{itemize}
\item These errors bounds are the sum of a depth-discretization error in $O(1/L)$, and a sampling error in $O(\sigma_0/\sqrt{LM})$. Notably, the latter only depends on the product $LM$ which can therefore be interpreted as an \emph{effective width} of the architecture. We experimentally confirm in Figure~\ref{fig:experiment1} that these rates are tight in their dependency in $L$ and $M$. Observe that $L\to \infty$ is sufficient for the bound to vanish.
\item The case $\sigma_0=0$ corresponds to a deterministic initialization and there is no sampling error in this case. This is the classical Neural ODE setting, studied in~\citep{avelin2021neural, marion2023implicit}.
\item  Note that the convergence of the parameters $\hat Z^{j,\ell}_k$ is towards a stochastic object while the convergence of the forward and backward pass is towards deterministic limits.
\end{itemize}

\paragraph{Proof idea: stochastic approximation and propagation of chaos} Let us briefly explain the proof idea of Theorem~\ref{thm:main}. To start, suppose that we would like to compute numerically the forward Mean ODE~\eqref{eq:limit-forward} at iteration $k$ of GD for a given input $x$:
\begin{align}\label{eq:proof-idea-0}
h_k(0,x)&=x,& \partial_s h_k(s,x) = \E[\phi(h_k(s,x),Z_k(s))],\; s\in [0,1].
\end{align}
A natural scheme in this context is the ``Euler/Monte-Carlo'' scheme 
\begin{align}\label{eq:proof-idea}
\bar h^0_k(x)=x,&&
\bar h^{\ell}_k(x) = \bar h^{\ell-1}_k(x)+\frac{1}{ML} \sum_{j=1}^M \phi(\bar h^{\ell-1}_k(x), Z^{j,\ell}_k(s_{\ell-1})), \; \forall \ell\in [1:L]
\end{align}
where $Z^{j,\ell}_k$ are $M\times L$ independent samples from $Z_k$. By classical arguments from stochastic approximation (see Lemma~\ref{lem:SA} in our case), it can be shown that this scheme has, with high probability, an approximation error of order $O\big(\frac{\mathrm{Lip}_k}{L} + \frac{\sigma_k}{\sqrt{ML}}\big)$, that is the sum of the Euler scheme error with step $1/L$ and the Monte-Carlo error with $M\times L$ samples. Here $\Lip_k$ is the Lipschitz constant of $s\mapsto Z_k(s)$ and $\sigma_k$ is a variance-proxy of the random variable $\phi(h_k(s,x),Z_k(s))$ (uniformly in $s$). In the context of Theorem~\ref{thm:main}, it is not difficult to obtain estimates for these two quantities, see Lemmas~\ref{lem:propagation-regularity} and~\ref{lem:propagation-subgaussianity}.

Clearly, the forward pass after $k$ steps of GD in a ResNet is very similar to~\eqref{eq:proof-idea}, but there is an important difference: in the ResNet, the $\hat Z_k^{j,\ell}$ are not sampled from the limit dynamics and are not independent, except at $k=0$. The core of the proof of Theorem~\ref{thm:main} consists then in an argument by recursion over $k$ to jointly control $\Delta_k^Z$, $\Delta_k^h$ and $\Delta_k^b$. This argument shows that the only new source of error at each iteration if the approximation error of~\eqref{eq:proof-idea}, and its analog for the backward pass. This proof scheme is common in the  \emph{propagation of chaos} literature~\citep{dobrushin1979vlasov, sznitman2006topics}, although the fact that here the particles---the $M\times L$ samples $(Z^{j,\ell})$ of the stochastic process---interact through a system of stochastically approximated ODEs adds a layer of complexity.

\begin{remark}[Analogy between ResNets and SGD]
There is a direct analogy between the convergence of~\eqref{eq:proof-idea} to the Mean ODE~\eqref{eq:proof-idea-0} and the classical result that mini-batch SGD converges to gradient flow as the LR tends to zero. In this analogy, $1/L$ plays the role of the learning rate and $M$ corresponds to the batch size. For SGD with a fixed compute budget $M\times L$, increasing the batch-size $M$ does not accelerate convergence towards gradient flow but enables greater parallelism. Our analysis shows that the trade-off between $M$ and $L$ (for $L\times M$ fixed) in a ResNet follows precisely the same principle.
\end{remark}

\subsection{Experimental validation}\label{sec:experiments-LM} 
We plot on Figure~\ref{fig:experiment1} the distance between the output and its limit as a function of $L$ and $M$ and compare it with the rate $a/L+b/\sqrt{ML}$ with adjusted coefficients $(a,b)$. We observe a very good agreement even though the distance is measured after $k=100$ GD iterations, where the ResNet is close to the end of training (for our specific toy setting), as confirmed by the loss plot on Figure~\ref{fig:loss-log}. Figure~\ref{fig:lipschitz} shows evidence that the dynamics is in the maximal local updates (MLU) regime since the displacement of the parameters is in $\Theta(1)$  (as a matter of fact, since 2LP blocks are linear in $v$, what distinguishes the MLU regime is the displacement of the \emph{input weights} $u$ of each block which is also $\Theta(1)$ in this case, see Section~\ref{sec:scaling-D}). Figure~\ref{fig:lipschitz} also illustrates the regularity of $(s,k) \mapsto Z_k(s)$ proved later in Lemma~\ref{lem:propagation-regularity}-(iii) and which plays a key role in our theory.

\paragraph{Experimental setting.} The training data is $n=10$ input/output pairs with $\Nn(0,1)$ iid entries in embedding dimension $D=10$, the objective is the mean square loss and the residual blocks are two-layer perceptrons~\eqref{eq:MLP-example} with $\rho=\tanh$ nonlinearity. All weights initialized with $\Nn(0,\sqrt{D})$ entries and the LRs are $(\eta_u,\eta_v)=(D,D)$ in accordance with the prescriptions of Section~\ref{sec:scaling-D}. We use  a large ResNet with $M=L=10^3$ as a proxy for the Mean ODE dynamics.

\begin{figure}[h!]
\centering
\begin{subfigure}{0.35\linewidth}
\centering
\includegraphics[scale=0.4]{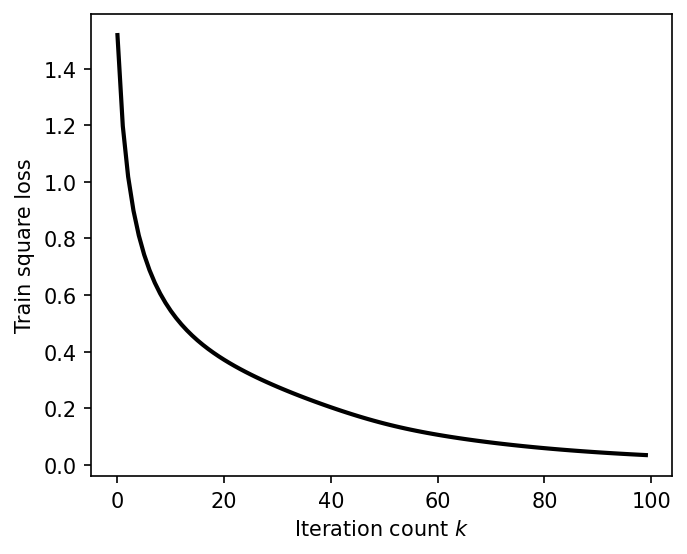}
\caption{Evolution of the square loss}\label{fig:loss-log}
\end{subfigure}%
\begin{subfigure}{0.65\linewidth}
\centering
\includegraphics[scale=0.48,trim= 0 0 5 0,clip]{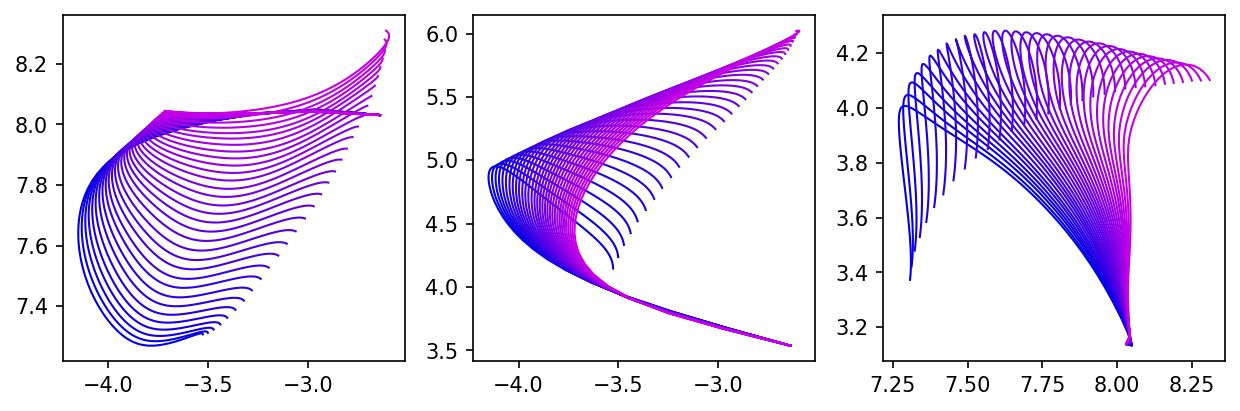}
\caption{Evolution of the weights $(u^{1,\ell}_k)$}\label{fig:lipschitz}
\end{subfigure}%
\caption{(left) The square loss of the Mean ODE model is close to $0$ at $k=100$ indicating approximate convergence (right) Various 2D projections of the curve in $\RR^D$ representing the evolution of the weight $(\hat U^{1,\ell}_k)_{k\in [1:100]}$ where $\ell$ ranges from $1$ (blue) to $L$ (purple). For the purpose of illustration and for this plot only, we have initialized $(\hat U^{1,\ell}_0,\hat V^{1,\ell}_0)=(U_0,V_0)$ $\forall \ell$ (while the rest of the weights for $j\geq 2$ are independently initialized across layers). This illustrates two important properties: (i) the evolution of $\hat U$ is significant (MLU regime) and (ii) the map $(\ell,k) \mapsto U_k(s_{\ell-1}) \approx \hat U^{1,\ell}_k$ is regular in $\ell$ and $k$.}\label{fig:experiment2}
\end{figure}

\section{Generic ResNets in the lazy-ODE regime}\label{sec:NTODE}
When $\alpha\to \infty$, the limit model is different and corresponds to a linearization of the Mean ODE model~\eqref{eq:limit-forward} around $Z\approx Z_0$. In this section, to ensure that the initial forward and backward passes do not explode as $\alpha\to \infty$, we assume that the initialization $Z_0$ and the nonlinearity $\phi$ are such that $\E[\phi(x,Z_0)]=0$ and $\E[ D_1\phi(x,Z_0)]=0$, $\forall x\in \RR^D$.

\subsection{Training dynamics of Tangent Mean ODEs}
We parameterize the limit model by a random pair $(Z_0,\zeta)$ where $\zeta: [0,1]\to \RR^p$ represent the updates of the parameters and $Z_0\in \RR^p$ represents the initialization. At an informal level, $\zeta$ is related to the parameterization $Z$ of the previous section via $\zeta = \alpha\cdot(Z-Z_0)$.

At first order in $\alpha^{-1}$, we have $$\alpha \phi(x,Z(s))=\alpha \phi(x,Z_0+\alpha^{-1} \zeta(s))\approx \alpha \phi(x,Z_0(s))+D_2 \phi(x,Z_0)\zeta(s)+O(\alpha^{-1}).$$ After taking expectation, the first term vanishes by assumption (however, it is important to keep in mind that this term contributes to non-asymptotic fluctuations, all the more that they are amplified by the factor $\alpha$). This suggests to define, in this regime, the forward pass $\underline h_\zeta(s,x) \in \RR^D$ (with an implicit dependency in the law of $Z_0$) as the solution to the (forward Neural) \emph{tangent ODE}
\begin{align}\label{eq:forward-tangent-ODE}
\underline h_\zeta(0,x) = x,&& \partial_s \underline h_\zeta(s,x)=\E[ D_2\phi(\underline h_\zeta(s,x),Z_0) \zeta(s)].
\end{align}
Although the right-hand side of this ODE is linear in the parameter $\zeta$, the output $\underline h_\zeta(1,x)$ remains nonlinear both in $x$ and in $\zeta$. Analogously, the backward tangent ODE is the solution to $\underline b_\zeta(1,x,w)=w$ and
\begin{align}\label{eq:backward-tangent-ODE}
 \partial_s \underline b_\zeta(s,x,w)= -  \E\Big[ D_{1,2}\phi(\underline h_\zeta(s,x),Z_0)^{*_1} [\underline b_\zeta(s,x,w),\zeta(s)]\Big] ,\; s\in {[0,1]}
\end{align}
where $D_{1,2}\phi(x,z)^{*_1}$ is the partial adjoint in the first variable of the mixed second derivative of $\phi$, which we interpret as a linear operator with type $\RR^D\times \RR^p\to \RR^D$.
The equations driving the training dynamics $(\zeta_k)_{k\geq 0}$, which can again be interpreted as a GD in the $L^2$ geometry, initialized at $0$ are, with $\xi_0\sim \mu_0$
\begin{align}\label{eq:GD-limit-lazy}
Z_0(s)\sim \xi_0,&&
\zeta_0(s)=0,&&
\zeta_{k+1}(s) = \zeta_{k}(s) - \frac{\eta}{n} \sum_{i=1}^n \Update(Z_0(s),\underline h_{k,i}(s), \underline b_{k,i}(s))
\end{align}
where we used the abbreviations $\underline h_{k,i}(s) \coloneqq \underline h_{\zeta_k}(s,x_i)$, $\underline b_{k,i}(s) \coloneqq \underline b_{\zeta_k}(s,x_i,\underline w_{i,k})$, $\underline w_{i,k}\coloneqq \nabla \loss_i(\underline h_{k,i}(1)$ and the map $\Update$ is defined in~\eqref{eq:update-map}.
Observe that in~\eqref{eq:GD-limit-lazy}, the update map is evaluated at $Z_0$ (irrespective of the value of $\zeta_k$) while in~\eqref{eq:GD-limit}, it is evaluated at $Z_k(s)$. This is because the value of $Z_k(s) \approx Z_0+\alpha^{-1}\zeta_k(s)$ approaches $Z_0$ in the $\alpha\to +\infty$ limit.

\subsection{Convergence theorem: large-depth limit in the lazy-ODE regime}

To handle the $\alpha\to \infty$ limit and this linearized model, we require one more degree of regularity on $\phi$ and we require that the initial forward and backward passes are centered.
\begin{assbox}
\begin{assumption}[Lazy-ODE regime regularity assumptions] \,\label{ass:lazy-regularity} Assumption~\ref{ass:regularity} holds with $B>0$ and moreover:
\begin{enumerate}
\item $\phi$ is twice differentiable with a $B$-Lipschitz cross differential $D_{1,2}\phi$;
\item it holds 
$\E[ \phi(x,Z_0)]=\E[ D_1\phi(x,Z_0)]=0$, $\forall x\in \RR^D$.
\end{enumerate}
\end{assumption}
\end{assbox}

We are now ready to state the convergence theorem in the $\alpha\to \infty$ case. As before, consider the ResNet's dynamics $(\hat Z^{j,\ell}_0)$ (defined in~\eqref{eq:discrete-update-simplified}) and consider $M\times L$ independent copies of the limit dynamics $(Z_0^{j,\ell},\zeta^{j,\ell}_k)_{k\geq 0}$ (defined in~\eqref{eq:GD-limit-lazy}) such that $Z^{j,\ell}_0(s)=\hat Z^{j,\ell}_0$, $\forall s\in [0,1]$. Consider the distances between the two dynamics  defined, with $s_\ell=\ell/L$ for $\ell\in [0:L]$, as
\begin{align*}
\Delta_k^\zeta &\coloneqq \max_{\substack{j\in [1:M] \\ \ell\in [1:L]}} \Vert \alpha(\hat Z_k^{j,\ell}-\hat Z_0^{j,\ell}) -\zeta_k^{j,\ell}(s_{\ell-1}) \Vert,\\
\Delta_k^{\underline h} &\coloneqq \max_{\substack{i\in [1:n] \\ \ell\in [0:L]}} \Vert \hat h_{k,i}^\ell - \underline h_{k,i}(s_\ell)\Vert,\\
\Delta_k^{\underline b} &\coloneqq \max_{\substack{i\in [1:n] \\ \ell\in [0:L]}} \Vert \hat b_{k,i}^\ell(x_i) - \underline b_{k,i}(s_\ell)\Vert.
\end{align*}

\begin{thmbox}
\begin{theorem}[Convergence in the lazy ODE regime]\label{thm:main-lazy}
Let Assumption~\ref{ass:lazy-regularity} hold with $B>0$, let $\alpha\geq 1$ and let $\mu_0\in \Pp(\RR^p)$ be a subgaussian distribution with variance proxy $\sigma_0^2\leq B$. 
Then $\forall k\geq 0$, there exist $c_1,c_2>0$ that only depend on $B,D$ and $k\eta$ such that with probability at least $1-\delta$, it holds:
\begin{align}
\max_{k'\leq k}\big\{\Delta_{k'}^\zeta, \Delta_{k'}^{\underline h}, \Delta_{k'}^{\underline b} \big\} \leq c_1 \left( \frac{1}{\alpha}+ \frac{1}{L}+\frac{\alpha \sigma_0 (1+\sqrt{\log(kn/\delta)})}{\sqrt{LM}}\right)
\end{align}
provided the right-hand side is smaller than $c_2$.
\end{theorem}
\end{thmbox}

 In the lazy ODE regime $\alpha\to \infty$ with a fixed initialization scale $\sigma_0$, the bound vanishes if and only if $1\ll \alpha\ll \sqrt{LM}$. For $\alpha =\Theta(\sqrt{LM})$, we still expect a similar linearization behavior however the limit is different because the random fluctuations at initialization do not vanish anymore—in particular the first forward pass is described by an SDE (see e.g.~\citep{yang2023tensor, bordelon2023depthwise}). Observe that in parameter space the updates are in $\Theta(1/\alpha)$ while they are in $\Theta(1)$ in the forward pass. This phenomenon is similar to what happens in the lazy-kernel regime~\citep{chizat2019lazy} (on a side note, in the lazy-kernel regime, the output is linear in the parameters, so the lazy-kernel regime implies the lazy-ODE regime but the converse is not true).

\section{Two-layer perceptron blocks and explicit scalings in $D$}\label{sec:scaling-D}
In this section, we extend our results to take into account the dependency in the embedding dimension $D$; both in the asymptotic behavior and in the error bounds. 
For the sake of concreteness, we limit ourselves to the particular case of ResNets with two-layer perceptron (2LP) residual blocks (which was not covered by the generic results of the previous section due to a lack of global Lipschitzness).

\paragraph{Notation for RMS.} 
In the large $D$ setting, it is convenient to manipulate the root-mean-square (RMS) norm rather than the $\ell_2$ norm.  For vectors $x,y\in \RR^D$, the RMS dot product is defined as $\ipD{x}{y} = (x^\top y)/D$ and the RMS norm as $\Vert x\Vert_\RMS \coloneqq \sqrt{\ipD{x}{x}}=D^{-1/2}\Vert x\Vert_2$. The RMS norm can be interpreted as the typical entrysize of $x$ when $x$ is not sparse.  Throughout, whenever we refer to the \emph{scale} of a vector, we mean its RMS norm.

\subsection{Training dynamics of ResNets with 2LP blocks}\label{sec:2LP-setup} 
We consider this section the following architecture, parameterized by $\theta=((u^{j,\ell},v^{j,\ell})_{j,\ell})\in (\RR^D\times \RR^D)^{L\times M}$,
\begin{align}\label{eq:parameterization-uv}
\left\{
\begin{aligned}
\hat h^{0}_\theta(x) &= W_\tin x, \\
\hat h^{\ell}_\theta(x) &= \hat h^{\ell-1}_\theta(x) +\frac{1}{LM} \sum_{j=1}^M v^{j,\ell} \rho\big(\ipD{u^{j,\ell}}{ \hat h^{\ell-1}_\theta(x)} \big),\quad \ell\in [1:L] \\ \hat f_\theta(x)&= \frac1D W_\tout^\top \hat h^L_\theta(x)
\end{aligned}
\right.
\end{align}
where $\rho:\RR\to\RR$ is a smooth nonlinearity. Here the input is $x\in \RR^{d_\tin}$, the output is $\hat f_\theta(x)\in \RR^{d_\tout}$. We have also introduced the embedding $W_\tin\in \RR^{D\times d_\tin}$ and unembedding $W_\tout\in \RR^{D\times d_\tout}$ matrices. In order to keep notations light, we consider these matrices fixed; but it would be immediate to extend our analysis to the case where these matrices are trained. 

Let $\mu_0=(\mu^u_0)^{\otimes D}\otimes (\mu^v_0)^{\otimes D}$ where $\mu^u_0,\mu^v_0\in \Pp(\RR)$ have mean $0$ and entrywise variance $\sigma_u^2$ and $\sigma_v^2$ respectively.
We consider $(\theta_k)_k=((\hat U^{j,\ell},\hat V^{j,\ell})_{j,\ell})_k$ the iterates of GD on the loss $\hat \Ll$ defined as in~\eqref{eq:loss-discrete} with $\hat \Ll_i(\theta)=\loss_i(\hat f_\theta(x_i))$ from a random initialization and LRs $(\eta_u,\eta_v)$:
\begin{align}\label{eq:2LP-dynamics}
\left\{
\begin{aligned}
\hat U^{j,\ell}_{0} \overset{iid}{\sim}  (\mu_0^u)^{\otimes D}\\
\hat V^{j,\ell}_{0} \overset{iid}{\sim}  (\mu_0^v)^{\otimes D}
\end{aligned}
\right. ,&&
\left\{
\begin{aligned}
\hat U^{j,\ell}_{k+1} &= \hat U^{j,\ell}_k -  \eta_{u} LMD\nabla_{u^{j,\ell}} \hat \Ll(\theta_k)\\ \hat V^{j,\ell}_{k+1} &= \hat V^{j,\ell}_k - \eta_{v} LMD  \nabla_{v^{j,\ell}} \hat \Ll(\theta_k)
\end{aligned}
\right.
.
\end{align}
 Note also that we have already pre-multiplied the LR by $LMD$ (this is consistent with the previous sections where the factor was $LM$ and $D$ was $\Theta(1)$). 

 Consider $\hat b^{\ell}_{k,i} \coloneqq {\color{red}D} \big(\frac{\partial \Ll_i}{\partial h^{\ell}}\big)^\top \nabla \loss_i(\hat f_{\theta_k}(x_i))\in \RR^D$ the \emph{normalized} backward pass  at iteration $k$ and sample $i$ (we insist on the factor $D$, introduced to obtain a $\Theta(1)$ RMS norm). It is given at GD iteration $k$ by the backward recursion
 \begin{align}
 \hat b_{k,i}^L &= -W_\tout \nabla \loss_i(\hat f_{k,i}),& \hat b_{k,i}^{\ell-1} = \hat b_{k,i}^\ell + \frac{1}{LM} \sum_{j=1}^M \rho'(\hat P_{k,i}^{j,\ell}) \cdot \hat Q_{k,i}^{j,\ell}\cdot \hat U^{j,\ell}_k
 \end{align}
 where $\hat P_{k,i}^{j,\ell}\coloneqq \ipD{ \hat U_k^{j,\ell}}{\hat h_{k,i}^{\ell-1}}$ are the preactivations and $\hat Q_{k,i}^{j,\ell}\coloneqq \ipD{ \hat V_k^{j,\ell}}{\hat b_{k,i}^{\ell}}$. In these notations, the update equations can be written more explicitly as:
\begin{align}\label{eq:2LP-update-finite}
\left\{
\begin{aligned}
\hat U^{j,\ell}_{k+1} &= \hat U^{j,\ell}_k -  \eta_{u} \sum_{i=1}^n \rho'\big(\hat P_{k,i}^{j,\ell}\big) \cdot \hat Q_{k,i}^{j,\ell} \cdot \hat h^{\ell-1}_{k,i}\\ \hat V^{j,\ell}_{k+1} &= \hat V^{j,\ell}_k - \eta_{v} \sum_{i=1}^n  \rho\big(\hat P_{k,i}^{j,\ell}\big) \cdot \hat b^{\ell}_{k,i}
\end{aligned}
\right.
.
\end{align}

\subsection{Derivation of the large $L,M,D$ phase diagram} \label{sec:2LP-phase-diagram}
We now discuss the effect of the scalings of the four HPs $\eta_u,\eta_v,\sigma_u,\sigma_v$ as a function of the scaling of $L,M$ \emph{and $D$}. In this section, we proceed informally and do not justify all our claims for the sake of conciseness (from the next section onward we proceed again at a rigorous level).

The asymptotic phases can be described simply by comparing the scales of \textbf{loss decay} and  \textbf{local feature updates} (the following computations are in the spirit of~\cite{chizat2024feature}). We consider for convenience the following continuous-time dynamics with a single input sample (in new, but transparent notations):
\begin{align}
    \partial_t \hat U^{j,\ell}_t &=-\eta_u \rho'(\hat P_t^{j,\ell}) \cdot \hat Q_t^{j,\ell} \cdot \hat h_t^{\ell-1}&
    \partial_t \hat V^{j,\ell}_t &=-\eta_v \rho(\hat P_t^{j,\ell}) \cdot b_t^\ell\\
    \hat P_t^{j,\ell} &\coloneqq \ipD{ \hat U^{j,\ell}_t}{\hat h^{\ell-1}_t}&
    \hat Q_t^{j,\ell} &\coloneqq \ipD{ \hat V^{j,\ell}_t}{\hat b^{\ell}_t}.
\end{align}
Let us right away fix some HPs using standard criteria:
\begin{enumerate}
\item We require $ \sigma_\tin \coloneqq \Vert W_\tin x\Vert_\RMS = \Theta(1)$, $\sigma_u=\Theta(\sqrt{D})$ and $\sigma_v=O(\sqrt{LM})$. This ensures proper signal propagation at initialization $\E^{1/2} \Vert \hat h^{\ell}_0\Vert_\RMS^2=\Theta(1)$ for $\ell\in [0:L]$ and feature\footnote{Let us recall that for ResNets with 2LP blocks, the \emph{features} of input $x$ are the $L\times M$ scalars $\rho(\hat P^{j,\ell}(x))$, and not the vectors $\hat h^\ell(x)$ of length $D$, which are linear projections of the features.} diversity at initialization $\E^{1/2}\vert \hat P^{j,\ell}_0\vert^2 =\Theta(1)$ for $\ell\in [1:L]$, $j\in [1:M]$.
\item  We moreover require $\sigma_\tout \coloneqq \Vert W_\tout \nabla \loss(\cdot)\Vert_\RMS  =\Theta(1)$, which implies that the \emph{normalized} backward pass satisfies $\E^{1/2} [\Vert \hat b^\ell_0\Vert^2_\RMS]=\Theta(1)$. This ensures that, for a loss decay in $\Theta(1)$, the forward pass evolves in $\Theta(1)$. This is the key property distinguishing the mean-field/maximal-update regime from the lazy-kernel regime (see e.g.~\citep[Prop.~4.1]{chizat2024feature}).
\end{enumerate}

Now, the evolution of the preactivations can be decomposed as
\begin{align}
\partial_t \hat P_t^{j,\ell} &= \underbrace{\ipD{\partial_t \hat U_t^{j,\ell}}{\hat h_t^{\ell-1}}}_{\text{local update}} + \underbrace{\ipD{ \hat U_t^{j,\ell}}{ \partial_t \hat h_t^{\ell-1}}}_{\text{global update}}%\\
\end{align}
and in particular the local update, denoted $\delta_t \hat P_t^{j,\ell}$, is given by
\begin{align}\label{eq:LFU}
\delta_t \hat P_t^{j,\ell} \coloneqq \ipD{\partial_t \hat U_t^{j,\ell}}{\hat h_t^{\ell-1}} = -\eta_u \rho'(\hat P_t^{j,\ell}) \cdot \hat Q_t^{j,\ell} \cdot  \Vert \hat h_t^{\ell-1}\Vert_\RMS^2 .
\end{align}
For small training times $t$ we have $\hat Q_t^{j,\ell}=\ipD{\hat V_0^{j,\ell}}{\hat b_t^{\ell}} + t \eta_v \rho(\hat P_0^{j,\ell}) \Vert \hat b^\ell_0\Vert^2_\RMS +o(t)$ hence $\E^{1/2}[|\hat Q_t^{j,\ell}|^2]=\Theta \Big(\sqrt{\sigma_v^2/D+t^2\eta^2_v}\Big)$ (using the independence of $\hat V_0^{j,\ell}$ and $\hat b_0^\ell$) so the scale of the local feature update is:
\begin{align}\label{eq:scale-LFU}
\E^{1/2}[|\delta_t \hat P_t^{j,\ell}|^2] = \Theta\Big(\eta_u\sqrt{\frac{\sigma_v^2}{D}+t^2\eta^2_v}\Big).
\end{align}
On the other hand, the evolution of the loss is, by the chain rule,
\begin{align*}
\partial_t \hat \Ll(\theta_t) &= -\frac{\eta_u}{ML} \sum_{j,\ell} \rho'(\hat P_t^{j,\ell})^2 \cdot (\hat Q_t^{j,\ell})^2 \cdot  \Vert \hat h_t^{\ell-1}\Vert_\RMS^2 - \frac{\eta_v}{ML}\sum_{j,\ell}  \rho(\hat P_t^{j,\ell})^2 \cdot  \Vert  \hat b_t^{\ell}\Vert_\RMS^2\\
&=\Theta\Big(\eta_u \Big(\frac{\sigma_v^2}{D} +t^2\eta_v^2\Big) + \eta_v \Big)
\end{align*}
where the last estimate is with high probability. Then,  having a loss decay in $O(1)$ over bounded training time implies
\[
|\partial_t \hat \Ll(\theta_0)|=O(1) \implies 
\Big[\eta_u=O(D/\sigma_v^2) \text{ and } \eta_v = O(1) \text{ and } \eta_u\eta_v^2 = O(1) \Big].
\]
We call such LRs ``stable''. Clearly, the largest stable LRs with balanced contributions are given by $\eta_v=\Theta(1)$ and $\eta_u=\Theta(\min\{1, D/\sigma_v^2\})$.
Therefore, we have the following behaviors for stable LRs: 
\begin{itemize}
    \item (Lazy ODE regime) If $\sigma_v \gg \sqrt{D}$, then $\eta_u=\Theta(D/\sigma^2_v)$ and by~\eqref{eq:scale-LFU}, the scale of the local updates is vanishing at a rate $\Theta(\sqrt{D}/\sigma_v)$.
    \item (Maximal Local Update (MLU) regime) If instead $\sigma_v=O(\sqrt{D})$, then the maximal stable LRs are $\eta_u,\eta_v=\Theta(1)$ and we have loss decay and local feature updates in $\Theta(1)$.
\end{itemize}

The phase diagram obtained from this discussion is represented on Figure~\ref{fig:sigmav-phase-diagram}. Interestingly, this phase diagram and its derivation, share a lot with the phase diagram and derivation of a 2LP of hidden width $M\times L$ and diverging input and output dimension $D$. In the rest of the paper, we focus on a rigorous analysis in the MLU regime.

\begin{figure}
\begin{center}
\begin{tikzpicture}[x=12cm, y=1cm]
  %==== positions (relative, easy to tweak) ====
  \def\xA{0.0}     % near 0 (left frame)
  \def\xB{0.35}     % 1/(LM)
  \def\xC{0.70}     % 1/sqrt(LM)
  \def\xH{1.05}     % "higher" (right frame)
  \def\gap{0.005}   % small white gap to show OPEN endpoints

  %==== frame ====
  \draw[line width=0.4pt] (\xA,0) rectangle (\xH,1);

  %==== colored bands ====
  % [A, B]  — closed at B (touches B)
  \fill[green!24]   (\xA,0) rectangle (\xB,1);

  % (B, C)  — open at both ends: leave small white gaps at B and C
  \fill[orange!22] (\xB+\gap,0) rectangle (\xC-\gap,1);

  % (C, H)  — open at C: leave white gap at C, closed on the right end of the frame
  \fill[red!14]    (\xC+\gap,0) rectangle (\xH,1);

  %==== thick boundaries (sharp transitions) ====
  \draw[blue!60, line width=4pt] (\xB,0) -- (\xB,1);  % boundary at B
  \draw[brown!60, line width=4pt] (\xC,0) -- (\xC,1);  % boundary at C

  %==== single-point phase at C: "lazy SDE" ====
  % Place a dot slightly above the strip so it doesn't collide with the gaps.
  \filldraw[black] (\xB,1.25) circle (1.6pt);
  \node[anchor=south] at (\xB,1.25) {\scriptsize Critical MLU};
  \draw[-{Latex[length=2mm]}, gray!70] (\xB,1.2) -- (\xB,1.02);

  \filldraw[black] (\xC,1.25) circle (1.6pt);
  \node[anchor=south] at (\xC,1.25) {\scriptsize Lazy SDE};
  \draw[-{Latex[length=2mm]}, gray!70] (\xC,1.2) -- (\xC,1.02);

  %==== labels inside each band ====
  \node at ({(\xA+\xB)/2},0.5) {\footnotesize {Maximal Local Updates} };
  \node at ({(\xB+\xC)/2},0.5) {\footnotesize  {Lazy ODE} };
  \node at ({(\xC+\xH)/2},0.5) {\footnotesize {Explosion} };

  %==== tick labels beneath ====
  \draw (\xA,0) -- ++(0,-0.10) node[below] {$0$};
  \draw (\xB,0) -- ++(0,-0.10) node[below] {$\Theta\big(\frac{\sqrt{D}}{LM}\big)$};
  \draw (\xC,0) -- ++(0,-0.10) node[below] {$\Theta\big(\frac{1}{\sqrt{LM}}\big)$};
  \draw (\xH,0) -- ++(0,-0.10) node[below] {$+\infty$};

  %==== axis label ====
  \node[below=7pt] at ({(\xA+\xH)/2},-0.55) {\small Residual Scale $\frac{\sigma_v}{ML}$};
\end{tikzpicture}%input
\end{center}

\caption{Phase diagram for the ResNets~\eqref{eq:parameterization-uv} as a function of the initialization scale $\sigma_u$ (for all shapes such that $D=O(LM)$. In the sub-critical MLU regime (green area), the behavior is asymptotically the same as if $\sigma_v=0$ (see Remark~\ref{rmk:subcritical}). Therefore, the critical MLU regime (blue) stands out as the only scaling with MLU and also maximal feature diversity.}\label{fig:sigmav-phase-diagram}
\end{figure}

\paragraph{A heuristic using operator norms.}  Let us mention another argument based on operator norms estimates that identifies the critical residual scale $\frac{\sigma_v}{LM}=\frac{\sqrt{D}}{LM}$. This argument is perhaps easier for intuition but only deals with upper-bounds so it is not sufficient by itself to justify the phase diagram (in contrast to the previous argument). It can be seen that the time-derivative of the output of the ResNet on a given input $x$ is given, at time $0$, by 
$$
\partial_t \hat h_0^L= \frac{1}{ML}\sum_{\ell=1}^L \sum_{j=1}^M \rho(\hat P_0^{j,\ell}) \partial_t \hat  V^{j,\ell}_0 + \frac{1}{ML}\sum_{\ell=1}^L \sum_{j=1}^M \underbrace{\rho'(\hat P_0^{j,\ell})\ipD{\partial_t \hat U^{j,\ell}_0}{\hat h^\ell_0}}_{\text{Local Feature Update $\delta_t \hat A_0^{j,\ell}$}} V_0^{j,\ell}+ \dots
$$
where we have ignored terms in $O(\sigma_v/\sqrt{ML})$ (the deviation of the first forward/backward passes from the identity map) and we recall that $\hat P_t^{j,\ell} \coloneqq \ipD{ \hat U^{j,\ell}_t}{\hat h^{\ell-1}_t}$.
Consider $\delta_t \hat P_t^{j,\ell}\coloneqq \ipD{\partial_t \hat U^{j,\ell}_t}{\hat h^\ell_t}$ the local pre-activation update and $\delta_t \hat A_t^{j,\ell}\coloneqq  \rho'(\hat P_t^{j,\ell})\ipD{\partial_t \hat U^{j,\ell}_t}{\hat h^\ell_t}$ the local feature update (both are of comparable magnitude). Let us study how large the local updates can be at time $0$ while maintaining the \emph{stability} property $\Vert \partial_t \hat h_0^L\Vert_{\RMS} =O(1)$.

Consider the matrix $V_0\in \RR^{D\times (ML)}$ and the vector $\delta_t \hat A_0\in \RR^{ML}$ built by aggregating the corresponding quantities for all units of all layers. The stability property requires
$$
\Big\Vert \frac{1}{ML} \sum_{\ell=1}^L \sum_{j=1}^M V_0^{j,\ell}\delta_t \hat A_0^{j,\ell}\Big\Vert_\RMS = \frac{1}{ML} \Vert V_0(\delta_t \hat A_0)\Vert_\RMS = O(1).
$$
By standard results on random matrices~\cite[Theorem 4.4.5]{vershynin2018high}, we have 
$$
\Vert V_0\Vert_{\RMS\to \RMS}=\frac{\sqrt{ML}}{\sqrt{D}}\Vert V_0\Vert_{2\to 2}=\Theta\Big( \sigma_v\Big(\frac{ML}{\sqrt{D}}+\sqrt{ML}\Big)\Big)=\Theta\Big(\sigma_v \frac{ML}{\sqrt{D}}\Big)
$$
where the last equality holds in the regime $D=O(ML)$. Hence, assuming that the operator norm bound $\Vert V_0(\delta_t \hat A_0)\Vert_\RMS\leq   \Vert V_0\Vert_{\RMS\to \RMS}\Vert \delta_t \hat A_0\Vert_\RMS$ is tight up to $\Theta(1)$ factors (which is in fact implied by our derivation above) we obtain that, if $D=O(ML)$, then
$$
\Vert \delta_t \hat A_0 \Vert_\RMS=O(\sqrt{D}/\sigma_v).
$$
This recovers informally the critical scaling $\sigma_v=\Theta(\sqrt{D})$ beyond which the local feature updates must asymptotically vanish to maintain stability.

\paragraph{Scaling of attention.} The phase  diagram for ResNets with attention blocks~\eqref{eq:attention} is the same than with 2LP blocks because, from the perspective of our analysis, they share a common structure. Indeed, for an input $X\in \RR^{D\times T}$ ($T$ tokens in $\RR^D$), the attention block is of the form $\phi_{\text{att}}(X,z)=W_O\psi(W_KX,W_QX,W_VX)$ where $\psi:\RR^{3d_k\times T}\to \RR^{d_k\times T}$ is a non-linear map and, if $d_k=1$, the parameters are vectors in $\RR^D$ (more generally, they are ``vector-like'' if $d_k$ is bounded while $D$ diverges). We therefore have the following scalings for the local feature update regime:
\begin{itemize}
\item $W_K,W_Q,W_V$ initialized with entrywise variance $\Theta_{d_k}(1/\sqrt{D})$ (taking into account that we did not insert a $1/D$ scaling factor in~\eqref{eq:attention}). These matrices play the role of $U$ in 2LP blocks.
\item $W_O$ initialized with entrywise variance $O_{d_k}(\sqrt{D})$ (if one also uses an explicit $1/(ML)$ branch scale, as in~\eqref{eq:discrete-forward}). This matrix plays the role of $V$ 2LP blocks.
\end{itemize}

\subsection{Convergence theorem with dimensional dependency}\label{sec:2LP-error}
We now derive an error bound between the ResNet and the Neural Mean ODE with 2LP blocks with an explicit dependency in $D$. For convenience, we limit ourselves to the MLU regime i.e., residual scales $O\big(\frac{\sqrt{D}}{LM}\big)$ (see Figure~\ref{fig:sigmav-phase-diagram}). 

The Mean ODE dynamics~\eqref{eq:GD-limit}, in the case of ResNets with 2LP blocks, can be written as follows. Let $\xi^u \sim (\mu_0^u)^{\otimes D}$ and $\xi^v \sim (\mu_0^v)^{\otimes D}$ and $\forall s\in [0,1]$,
\begin{align}\label{eq:limit-dynamics-2LP}
\left\{
\begin{aligned}
U_0(s)&=\xi^u \\
V_0(s)&=\xi^v 
\end{aligned}
\right. ,
&&
\left\{
\begin{aligned}
 U_{k+1}(s) &= U_k(s)-\eta_u \sum_{i=1}^n \rho'(P_{k,i}(s)){\color{red}\clip_u (}Q_{k,i}(s){\color{red})} h_{k,i}(s)\\
V_{k+1}(s) &= V_k(s) -\eta_v \sum_{i=1}^n \rho(P_{k,i}(s)) b_{k,i}(s) 
\end{aligned}
\right.
\end{align}
where 
\begin{align}
h_{k,i}(0) &= W_\tin x_i, & \partial_s h_{k,i}(s) &= \E[\rho(P_{k,i}(s)) V_k(s) ]\\
b_{k,i}(1) &= g_i(h_{k,i}(1)), & \partial_s b_{k,i}(s) &= - \E[\rho'(P_{k,i}(s)) {\color{red} \clip_b (} Q_{k,i}(s){\color{red})} U_k(s)]\\
P_{k,i}(s) &= \ipD{U_k(s)}{h_{k,i}(s)}, & Q_{k,i}(s) &= \ipD{V_k(s)}{ b_{k,i}(s)}
\end{align}
where $g_i$ is defined in Assumption~\ref{ass:high-dim} below.

\begin{remark} We have introduced clipping functions $\clip_u, \clip_b:\RR\to\RR$ which, in the vanilla GD dynamics, should be equal to the identity. We will always assume that $\clip_u$ is bounded (e.g. $\clip_u(x)= -1 \vee (x \wedge 1)$) since, otherwise, the error estimates suffer from an inevitable loss of integrability at each gradient step. We note that it is very common in practice to introduce some form of gradient clipping. We also obtain our tightest results when assuming $\clip_b$ is bounded, but since clipping in the backward pass is less usual -- and since the effect of not clipping at this level is less problematic -- we will also derive results covering $\clip_b=\id$ (at the cost of more stringent shape requirements).
\end{remark}

We make the following assumptions (where $B$ should be thought as independent of $D$).
\begin{assbox}
\begin{assumption}[High-dimensional assumptions]\label{ass:high-dim} There exists $B>0$ such that:
\begin{enumerate}
\item (Smooth activation) $\vert \rho(0)\vert \wedge \Vert \rho'\Vert_\infty \wedge \Vert \rho''\Vert_\infty\leq B$
\item (Normalized input) $\max_{i\in [1:n]} \Vert W_\tin x_i\Vert_\RMS \leq B$ 
\item (Smooth and normalized loss) The function $g_i:x\mapsto -W_\tout \nabla \loss_i(D^{-1} W_\tout^\top x)$ satisfies $\Vert g_i(0)\Vert_\RMS\leq B$ and is $B$-Lipschitz in RMS norm, for all $i\in [1:n]$.
\item (MLU regime) The initialization's standard deviation satisfies $\sigma_u \wedge \sigma_v \leq B\sqrt{D}$ and the LRs $\eta_u \wedge \eta_v \leq B$.
\item (Clipping functions) The functions $\clip_u,\clip_b:\RR\to\RR$ are $B$-Lipschitz and vanish at $0$. Moreover, $\clip_u$ has range in $[-B,B]$ (note that we do not exlude $\clip_b=\id$ for now).
\end{enumerate}
\end{assumption}
\end{assbox}

To state the convergence theorem, consider the ResNet's dynamics $(\hat Z^{j,\ell}_k = (\hat U^{j,\ell}_k,\hat V^{j,\ell}_k))_{k\geq 0}$ (defined in~\eqref{eq:2LP-update-finite}) and consider $M\times L$ independent copies of the limit dynamics $(Z^{j,\ell}_k =(U^{j,\ell}_k,V^{j,\ell}_k))_{k\geq 0}$ (defined in~\eqref{eq:limit-dynamics-2LP}) such that $U^{j,\ell}_0(s)=\hat U^{j,\ell}_0$ and $V^{j,\ell}_0(s)=\hat V^{j,\ell}_0$, $\forall s\in [0,1]$.
Consider the RMS distance between the two dynamics in parameter space, forward pass and backward pass respectively defined, with $s_\ell=\ell/L$ for $\ell\in [0:L]$, as
\begin{align*}
\Delta_k^U &\coloneqq \max_{\ell\in [1:L]} \Big(\frac{1}{M} \sum_{j=1}^M \Vert \hat U^{j,\ell}_k - U^{j,\ell}_k(s_{\ell-1})\Vert^2_\RMS\Big)^{1/2}, & \Delta_k^V &\coloneqq \max_{\ell\in [1:L]} \Big(\frac{1}{M} \sum_{j=1}^M \Vert \hat V^{j,\ell}_k - V^{j,\ell}_k(s_{\ell-1})\Vert^2_\RMS\Big)^{1/2}\\
\Delta_k^h &\coloneqq \max_{\substack{i\in [1:n] \\ \ell\in [0:L]}} \Vert \hat h_{k,i}^\ell - h_{k,i}(s_\ell)\Vert_\RMS,&
\Delta_k^b &\coloneqq \max_{\substack{i\in [1:n] \\ \ell\in [0:L]}} \Vert \hat b_{k,i}^\ell - b_{k,i}(s_\ell)\Vert_\RMS.
\end{align*}

\begin{thmbox}
\begin{theorem}[Error bound for ResNets with 2LP blocks with dimensional dependency]\label{claim:D-dependence}
Let Assumption~\ref{ass:high-dim} hold for $B>0$ and assume that the initial distribution $\mu_0^u$ (resp.~$\mu_0^v$) is $\bar \sigma_u^2$ (resp.~$\bar \sigma_v^2$) subgaussian, with $\bar \sigma^2_u \wedge \bar \sigma^2_v \leq B\sqrt{D}$. Suppose also that $\max\{D,\log(L)\}\leq BM$ and that the range of $\clip_b$ is included in $[-B,B]$.

Then for any $k\geq 0$, there exists $c_1,c_2>0$ that only depend on $B$ and $k$ such that for any $\delta > e^{-M}$, it holds with probability at least $1-\delta$ 
\begin{align}\label{eq:dimension-scaling}
\max_{k'\in [0:k]} \big\{ \Delta_{k'}^U,\Delta_{k'}^V,\Delta_{k'}^h, \Delta_{k'}^b\big\} \leq c_1 \left( \frac{1}{L}+ \frac{\sqrt{D}+\log(n/\delta)}{\sqrt{LM}}\right)
\end{align}
provided that the right-hand side is smaller than $c_2$.

If we do not assume that $\clip_b$ is bounded (e.g. $\clip_b=\id$) then the same conclusion holds under the additional conditions $\max\{D,\log(L)\}\leq B\sqrt{M}$ and $\delta > e^{-\sqrt{M}}$.
\end{theorem}
\end{thmbox}

We can make the following comments (we focus on the case $\clip_b$ bounded to fix ideas):
\begin{itemize}
\item  The theorem requires that the hidden width $M$ grows at least proportionally to the embedding dimension $D$ (or even $D^2=O(M)$ if $\clip_b=\id$). We believe that this condition could be replaced by  $D=O(LM)$ (or $D^2=O(LM)$ if $\clip_b=\id$) and we leave this potential improvement as an open question.
\item  When the right-hand side of~\eqref{eq:dimension-scaling} is large (or in events of probability smaller than $e^{-M}$),  the errors may begin to compound exponentially across depth and therefore increase dramatically. This phenomenon is visible as the red marks on Figure~\ref{fig:fluctuations}. In fact, we believe that in-expectation bounds are impossible in this setting (unless some stronger form of clipping is introduced) because of the existence of very rare events of catastrophic explosion.
\end{itemize}

\begin{remark}[Subcritical MLU regime]\label{rmk:subcritical}
For large $D$, the dynamics $(U_k,V_k)_k$ in the subcritical MLU regime   $\sigma_v=o(\sqrt{D})$ is the same as the dynamics $(\tilde U_k,\tilde V_k)_k$  obtained by taking $\sigma_v=0$ (all else being equal and with coupled randomness). To see this, let
\[
\tilde \Delta^Z_k \coloneq \sup_{s\in [0,1]} \E^{1/2}[\Vert U_k(s)-\tilde U_k(s)\Vert_\RMS^2+ \Vert V_k(s)-V_0(s)-\tilde V_k(s)\Vert_\RMS^2].
\]
Clearly, $\tilde \Delta^Z_0 = 0$, and by investigating the stability estimates of Section~\ref{eq:proof-limit-phase}, it is not difficult to obtain that  $\forall k\in \NN$, there exists $c_k$ independent of $\sigma_v$ and $D$ such that $\forall k'\leq k\vee \max\{ \tilde k\;;\; \tilde \Delta^Z_{\tilde k}\leq 1\}$, 
\begin{align*}
\tilde \Delta^Z_{k+1} \leq (1+\eta c_k)\tilde \Delta^Z_{k}+ c_k\sigma_v/\sqrt{D}.
\end{align*}
It follows, by  Gr\"onwall's lemma, that for any $k\geq 0$, there exists $c_1,c_2>0$ independent of $\sigma_v$ and $D$ such that 
$$
\sup_{k'\leq k} \tilde \Delta^Z_{k'}  \leq c_1\frac{\sigma_v}{\sqrt{D}}
$$
provided that the right-hand side is smaller than $c_2$. Of course, similar arguments show that the dynamics with $\sigma_u=o(\sqrt{D})$ is asymptotically the same as the one with $\sigma_u=0$.

As a consequence, in the subcritical MLU regime, two distinct units $(U^{j,\ell}_k,V^{j,\ell}_k)$ and $(U^{j',\ell}_k,V^{j',\ell}_k)$ at the same layer will represent the same feature throughout training as soon as $U^{j,\ell}_0= U^{j',\ell}_0$, because it is as if $V^{j,\ell}_0= V^{j',\ell}_0=0$. In contrast, at the critical MLU scaling, two such units will typically evolve differently. In that sense, there is more feature diversity in the critical scaling than in the subcritical scaling.
\end{remark}

\subsection{Experimental validation}
On Figure~\ref{fig:experiment-D} we compare our theoretical predictions with numerical experiments. The experimental setting is exactly as in Section~\ref{sec:experiments-LM}, but now we also make $D$ vary. We use the critical MLU regime scalings for initialization and LRs, as suggested by analysis in Section~\ref{sec:2LP-phase-diagram}. We observed that, as long as we are in the regime $D/(LM)=O(1)$ (the blue dots), there is a very good agreement between our theoretical predictions and the observed rates.

\begin{figure}
\centering
\begin{subfigure}{0.33\linewidth}
\centering
\includegraphics[scale=0.43]{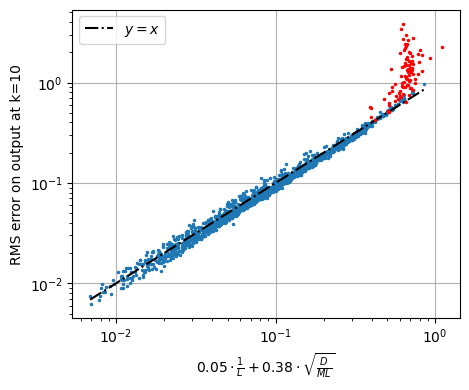}
\caption{$k=10$}\label{fig:fluctuations}
\end{subfigure}\hspace{0.05cm}%
\begin{subfigure}{0.33\linewidth}
\centering
\includegraphics[scale=0.43]{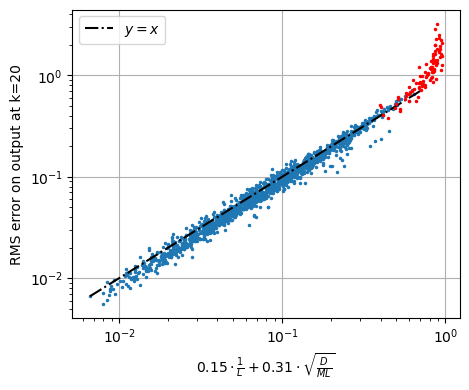}
\caption{$k=20$}\label{fig:laziness}
\end{subfigure}\hspace{0.05cm}%
\begin{subfigure}{0.33\linewidth}
\centering
\includegraphics[scale=0.43]{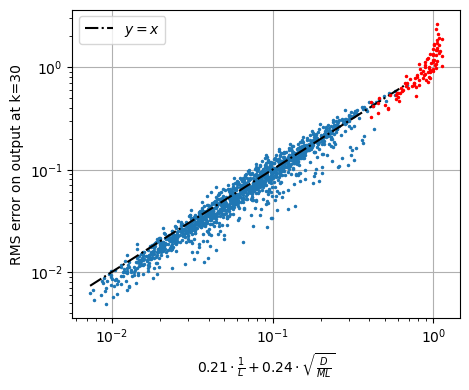}
\caption{$k=30$}\label{fig:all-losses}
\end{subfigure}
\caption{ RMS error on the output  between the ResNet and the limit (Mean ODE) model after $k$ GD steps. The red dots correspond to a large ratio $D/(ML)$ and are outside of the regime covered by our theory, while the blue dots are within the theory. We train 2100 ResNets of various shapes with $M\in \{1,\dots,200\}$, $L\in \{1,\dots, 200\}$ and $D\in \{1,\dots,100\}$ and compare the error with $a_k/L + b_k\sqrt{D/(LM)}$ where for each $k$, the coefficients $a_k$ and $b_k$ are estimated via least-squares fit. These results demonstrate the tightness of our analysis.}\label{fig:experiment-D}
\end{figure}

\section{Proofs for generic ResNets}\label{sec:proof-generic}

\subsection{Mean ODEs and regularity estimates}\label{sec:MODE-regularity}
Consider a generic Mean ODE of the form
\begin{align}\label{eq:generic-mean-ODE}
a(0)&\in \RR^D, & \dot a(s)&=F(s,a(s)), & F(s,x)\coloneqq \E[f(s,x,Z(s)]
\end{align}
where $f:[0,1]\times \RR^D\times \RR^p\to \RR^D$ and $(Z(s))_{s\in [0,1]}$ is a $\RR^p$-valued stochastic process. Let us start with a basic regularity lemma for generic ODEs which we will invoke repeatedly.
\begin{lemma}\label{lem:ODE-bound}
Let $\Vert \cdot\Vert$ be a norm on $\RR^D$. Assume that $F$ is continuous in $s$ and $L_x$-Lipschitz in $x$ (in the $\Vert \cdot \Vert$ norm) for some $L_x\geq 1$. Let $B= \max_{s\in [0,1]} \Vert F(s,0)\Vert $ and $R\coloneqq e^{L_x}(B+L_x\Vert a(0)\Vert)$. Then the mean ODE~\eqref{eq:generic-mean-ODE} has a unique solution $a:[0,1]\to \RR^d$ and it holds $\sup_{a\in [0,1]} \Vert a(s)\Vert\leq R$ and $s\mapsto a(s)$ is $R$-Lipschitz continuous. 
\end{lemma}
\begin{proof}
    Since $F$ is continuous in $s$ and $L_x$-Lipschitz in $x$, the ODE admits a unique global solution $a$ on $[0,1]$ by Picard-Lindel\"of theorem. Moreover, we have the linear growth control $\Vert \dot a(s)\Vert=\Vert F(s,a(s))\Vert\leq B+L_x\Vert a(s)\Vert$, and we obtain the following estimates by integrating: 
    \begin{align}
    \Vert a(s)\Vert& \leq e^{sL_x}\Vert a(0)\Vert+\frac{B}{L_x}(e^{sL_x}-1)\leq R, \\ \Vert \dot a(s)\Vert &\leq B +L_x\Vert a(s)\Vert \leq B+L_x(e^{sL_x}\Vert a(0)\Vert+\frac{B}{L_x}(e^{sL_x}-1))\leq R
    \end{align}
    which concludes the proof.
\end{proof}

We now come back to the training dynamics of the Neural Mean ODE~\eqref{eq:GD-limit} (with $\alpha=1$ here) and derive some regularity estimates. For this, it will be useful to remark that for each sample $x_i$, $i\in [1:n]$,
\begin{itemize}
\item the forward pass at iteration $k$ is a Mean ODE of the form~\eqref{eq:generic-mean-ODE} with $a(0)=x$ and $f=f^h_k:(s,x,z)\mapsto \phi(x,z)$;
\item the backward pass at iteration $k$ is, after depth-reversal (that is, composition with $s\mapsto 1-s$), a Mean ODE of the form~\eqref{eq:generic-mean-ODE} with $a(0)=w_{i,k}$ and $f=f^b_k:(s,x,z)\mapsto D_1\phi(h_{k,i}(s),z)^\top x$.
\end{itemize}

\begin{lemma}[Propagation of regularity]\label{lem:propagation-regularity} Let Assumption~\ref{ass:regularity} hold with $B>0$ and consider the Mean ODE dynamics~\eqref{eq:GD-limit} with $\mu_0\in \Pp(\RR^p)$ with first moment bounded by $B$ and $\alpha=1$. Then there exists $c$ that only depends on $B$ such that for all $k\geq 0$, $i\in [1:n]$,
\begin{enumerate}
\item[(i)] $Z_k$, $h_{k,i}$ and $b_{k,i}$  are uniquely well-defined;
\item[(ii)] the functions $s\mapsto h_{k,i}(s)$ and $s\mapsto b_{k,i}(s)$ are $c$-Lipschitz and bounded in norm by $c$;
\item[(iii)] the function $s\mapsto Z_k(s)$ is $(e^{ck\eta}-1)$-Lipschitz (surely).
\end{enumerate}
\end{lemma}

Although we do not explore the continuous-time limits $\eta\to0$ in this work, it should be noted that those regularity properties are preserved in the continuous-time limit, in particular because the estimate in (iii) only depends on $k\eta$. See~\citep{ding2022overparameterization, barboni2024understanding} for an analysis of the properties of the continuous time dynamics.

\begin{proof}
Let us prove the three claims by recursion over $k$. For $k=0$, $Z_0$ is constant by definition. Then, for $i\in [1:n]$, $s\mapsto h_{0,i}(s)$ is well-defined and Lipschitz by Lemma~\ref{lem:ODE-bound} applied to the function $f^h_0$ that is Lipschitz in $x$. In turn $s\mapsto b_{0,i}(s)$ is well-defined and Lipschitz by Lemma~\ref{lem:ODE-bound} applied to the function $f^b_0$ that is Lipschitz in $x$.

Now assume that the claims hold at $k\in \NN$ and that $s\mapsto Z_k(s)$ is $\Gamma_k$-Lipschitz. Since the map $\Update$ is Lipschitz when restricted to $b$ in a ball, there exists $c>0$ that depends only on $B$ such that
\begin{multline*}
\Vert Z_{k+1}(s)-Z_{k+1}(s')\Vert \leq \\
\Vert Z_{k}(s)-Z_{k}(s')\Vert +\eta \max_{i\in [1:n]} \Vert \Update(Z_k(s),h_k(s,x_i),b_k(s,x_i))- \Update(Z_k(s'),h_k(s',x_i),b_k(s',x_i)) \Vert\\
\leq (\Gamma_k + \eta c(\Gamma_k+1))|s-s'|.
\end{multline*}
Hence $s\mapsto Z_{k+1}(s)$ is $\Gamma_{k+1}$-Lipschitz with $\Gamma_{k+1}\leq \Gamma_k (1+c\eta)+\eta c$. Then we can apply Lemma~\ref{lem:ODE-bound} to obtain the well-posedness and Lipschitz regularity (independent of $k$) of $s\mapsto h_{k+1,i}(s)$ and of $s\mapsto b_{k+1,i}(s)$, in this order, $\forall i\in [1:n]$. This concludes the proof by recursion. For the expression of $\Gamma_k$: since $\Gamma_0=0$ and $\Gamma_{k+1}\leq \Gamma_k (1+c\eta)+c\eta$ by discrete Gr\"onwall's inequality\footnote{If $u_{k+1}\leq (1+\alpha)u_k+\beta$ for $k\geq 0$ $\alpha,\beta>0$, then $u_k\leq e^{k\alpha}(u_0+\beta/\alpha)-\beta/\alpha$.}, we get $\Gamma_k\leq e^{kc\eta}-1$.
\end{proof}

The next result controls the growth of the subgaussian variance-proxy (see definition in Section~\ref{sec:subgaussian}) of the parameters.
\begin{lemma}[Propagation of subgaussian tails]
\label{lem:propagation-subgaussianity} Let Assumption~\ref{ass:regularity} hold with $B>0$ and consider the Mean ODE dynamics~\eqref{eq:GD-limit} with $\mu_0\in \Pp(\RR^p)$ and $\alpha=1$. Assume moreover that $\mu_0$ is subgaussian. Then there exists $c>0$ that only depends on $B$ and $D$ such that $\forall k\geq 0$ and $s\in [0,1]$,
$$\Vert Z_k(s)\Vert_{vp}\leq e^{c k\eta}\Vert Z_0\Vert_{vp}. $$
\end{lemma}
\begin{proof}
By definition, $Z_0$ is subgaussian. As recalled in Section~\ref{sec:subgaussian}, if $X\in \RR^D$ is subgaussian and $f:\RR^D\to \RR^D$ is $L_f$-Lipschitz, then $\Vert f(X)\Vert_{vp}\leq c_0L_f\sqrt{D}\Vert X\Vert_{vp}$ for some absolute $c_0>0$. Therefore, using the fact that $z\mapsto \Update(z,h,b)$ is Lipschitz when $b$ is restricted to a ball (which it is by Lemma~\ref{lem:propagation-regularity}), there exists $c$ that only depends on $B$ and $D$ such that for $k\geq 0$ it holds (using the seminorm properties of $\Vert \cdot \Vert_{vp}$):
\begin{align*}
\Vert Z_{k+1}(s)\Vert_{vp} &\leq \Vert Z_{k}(s)\Vert_{vp}+\eta \max_{i\in [1:n]} \Vert \Update(Z_k(s),h_{k,i}(s),b_{k,i}(s))\Vert_{vp}
\leq (1+c\eta)\Vert Z_{k}(s)\Vert_{vp}.
\end{align*}
The result follows by recursion.
\end{proof}

\subsection{Stochastic approximation of Mean ODEs}\label{sec:stochastic-lemma}

In this section, we derive an approximation result which we will use to bound the error induced by each forward and backward pass of the training dynamics. It is inspired by results related to the so-called ``ODE method'' in the stochastic approximation literature~\citep{kushner2003stochastic}. This version of the result considers strong regularity assumptions and in order to cover the case of 2LP blocks, a more technical version of this lemma under weaker assumptions can be found in Lemma~\ref{lem:SA-relaxed}.

\begin{lemma}[Stochastic approximation of Mean ODEs]\label{lem:SA}
Let $f:[0,1]\times \RR^D\times \RR^p\to\RR^D$ be deterministic and let $Z: [0,1]\to \RR^D$ be a random function. Assume that $F$ is continuous in $s$ and that there exists $B$, $L_x$ and $\Gamma$ such that $x\mapsto \E[f(s,x,Z(s))]$ is $L_x$-Lipschitz and bounded by $B$ at $x=0$ (uniformly in $s\in [0,1]$) and $s\mapsto Z(s)$ is a.s.~$\Gamma$-Lipschitz. 
Let $R>0$ be given by Lemma~\ref{lem:ODE-bound} (depending only on $B$, $\Vert a(0)\Vert_2$ and $L_x$) such that the unique solution of the Mean ODE
\begin{align}
a(0)\in \RR^D,&& a'(s)=F(s,a(s)),&& F(s,x)\coloneqq \E[ f(s,x,Z(s))]
\end{align}
satisfies  $\sup_{s\in [0,1]}\Vert a(s)\Vert_2 \leq R$ and $s\mapsto a(s)$ is $R$-Lipschitz.
Moreover, we assume the following \emph{local} controls:
\begin{itemize}
\item there exist $\sigma_R>0$ such that for all $\Vert x\Vert_2\leq R$ and $s\in [0,1]$, the random variable $f(s,x,Z(s))$ is subgaussian with variance proxy $\sigma_R^2$.
\item there exist $L_R>0$ such that the function $f$ restricted to $\Vert x\Vert_2\leq R$ is $L_R$-Lipschitz (jointly in all its variables).
\end{itemize}
For integers $M,L\geq 1$, let $s_\ell\coloneqq \ell/L$ and consider the ``inexact Euler Monte-Carlo'' scheme
\begin{align}
\hat a^0\in \RR^D,&& \hat a^{\ell}=\hat a^{\ell-1}+\frac{1}{LM}\sum_{j=1}^M \hat f(s_{\ell-1},\hat a^{\ell-1},\hat Z^{j,\ell}),\quad \ell\in [1,L]
\end{align}
where $(\hat Z^{j,\ell})_{j,\ell}$ are random vectors and, for some error levels $\eps_0,\eps_1,\eps_2 \geq 0$, the discrete model satisfies:
\begin{itemize}
\item[(i)] Bounded initial mismatch: $\|\hat a^0-a(0)\|_2\le \eps_0$.
\item[(ii)] Bounded model error: $\sup_{z\in \RR^p}\sup_{\Vert x\Vert_2\leq 2R} \sup_{\ell \in [1:L]} \|\hat f(s_\ell,x,z)-f(s_\ell,x,z)\|_2\le \eps_1$ ;
\item[(iii)] Approximately independent parameters: there exists a family of independent samples $Z^{j,\ell}$ of $Z$ for ${j\in [1:M],\ell\in [1:L]}$ such that $\Vert\hat Z^{j,\ell}- Z^{j,\ell}(s_{\ell-1})\Vert_2\leq \eps_2$ a.s.
\end{itemize}
Then there exists $c_1,c_2>0$ that only depends on $L_x,L_R$, $\Gamma$ and $\Vert a(0)\Vert_2$ such that with probability at least $1-\delta$, it holds
\begin{align*}
\sup_{\ell\in [1:L]} \Vert \hat a^\ell -a(s_\ell)\Vert_2 \leq c_1\left(\eps_0+\eps_1+\eps_2+\frac{1}{L} + \sigma_R \frac{\sqrt{D}+\sqrt{\log(1/\delta)}}{\sqrt{LM}}\right)
\end{align*}
provided that the right-hand side is smaller than $c_2$.
\end{lemma}

\begin{proof} 
For $\ell\in [0:L-1]$, it holds
\begin{align*}
a(s_{\ell+1})-\hat a^{\ell+1} &= a(s_{\ell})-\hat a^{\ell}+\int_{s_{\ell}}^{s_{\ell+1}} \dot a(s)\d s -\frac{1}{ML} \sum_{j=1}^M \hat f(s_\ell,\hat a^{\ell},\hat Z^{j,\ell+1})\\
&= a(s_{\ell})-\hat a^{\ell} + \underbrace{\int_{s_{\ell}}^{s_{\ell+1}} \dot a(s)\d s - \frac1L F(s_{\ell}, a(s_{\ell}))}_{e_{euler}^{\ell+1}} \\
&\qquad +   \underbrace{ \Big(  \frac{1}{L}  F(s_\ell,a(s_\ell))-\frac{1}{ML}\sum_{j=1}^M f(s_\ell,a(s_\ell),Z^{j,\ell+1}(s_\ell))}_{e^{\ell+1}_{mc}} \Big)\\ &\qquad + \underbrace{\frac{1}{ML}\sum_{j=1}^M \big(f(s_\ell,a(s_\ell),Z^{j,\ell+1}(s_\ell))- \hat f(s_{\ell},\hat a^\ell,\hat Z^{j,\ell+1})\big)}_{e_{approx}^{\ell+1}}.
\end{align*}
By recursion, we have
\[
a(s_\ell)-\hat a^\ell = a(0)-\hat a^0 + \sum_{k=1}^{\ell} e_{euler}^{k} + \sum_{k=1}^{\ell} e_{mc}^{k} + \sum_{k=1}^{\ell} e_{approx}^{k}
\]
and therefore, with $\Delta^a_\ell\coloneqq \Vert a(s_\ell)-\hat a^\ell\Vert_2$, it holds
\begin{align}\label{eq:SA-proof-Delta}
\Delta^a_\ell \leq \Vert a(0)-\hat a^0 \Vert_2 +  \sum_{k=1}^{\ell} \Vert e_{euler}^{k}\Vert_2 +\sum_{k=1}^{\ell} \Vert e_{approx}^{k}\Vert_2 +  \Big\Vert   \sum_{k=1}^{\ell} e_{mc}^{k}\Big\Vert_2.
\end{align}
Note that for the Monte-Carlo error term, we take the norm \emph{after} summing across layers. Let us bound these error terms one by one. First, using the Lipschitz continuity of $f$, $a$ and $Z$, it holds for $\ell\in [0:L-1]$
\begin{align*}
\Vert e_{euler}^{\ell+1}\Vert_2 &=\Big\Vert \int_{s_\ell}^{s_{\ell+1}} \Big( F(s,a(s))-F(s_\ell,a(s_\ell))\Big)\d s\Big\Vert_2\\
&\leq \int_{s_\ell}^{s_{\ell+1}} \E\big[\Vert f(s,a(s),Z(s))- f(s_\ell,a(s_\ell),Z(s_\ell)))\Vert_2\big] \d s \\
&\leq L_R(1+ R+ \Gamma)\int_{s_\ell}^{s_{\ell+1}}  |s-s_\ell|\d s\leq \frac{c}{L^2}.
\end{align*}
for some $c>0$ that only depends on $R$, $L_R$ and $\Gamma$. Moreover, using the regularity of $f$ and Assumption (iii), \emph{under the assumption that $\Delta^a_{\ell-1}\leq R$} so that it holds $\Vert a(s_{\ell-1})\Vert_2, \Vert \hat a^{\ell-1}\Vert_2\leq 2R$, it holds 
\[
\Vert e_{approx}^\ell\Vert_2 \leq \frac{\eps_1 + L_R\Delta^a_{\ell-1} + L_R\eps_2}{L}.%\leq \frac{( +L_x\cdot \Delta_\ell}{L}.
\]
Finally, the random vectors $(e^\ell_{mc})_{\ell=1}^L$ are independent, centered and subgaussian with variance proxy $\sigma^2_R/(L^2M)$. It follows that $\sum_{k=1}^{\ell} e_{mc}^{k}$ is centered and subgaussian with variance proxy $\sigma^2_R \ell/(L^2M)$. By concentration of subgaussian vectors (Lemma~\ref{lem:subgaussian-vector-concentration}), there exists an absolute constant $c>0$ such that for all $\delta>0$ it holds with probability at least $1-\delta$
\[
\Big\Vert  \sum_{k=1}^{\ell} e_{mc}^{k}\Big\Vert_2 \leq c \sqrt{\frac{\sigma^2_R\ell}{L^2M}}(\sqrt{D}+\sqrt{\log(1/\delta)} \leq c \frac{\sigma_R(\sqrt{D}+\sqrt{\log(1/\delta)})}{\sqrt{ML}}
\]
By L\'evy-Ottaviani inequality (Lemma~\ref{lem:levy}), the same bounds holds for $\max_{\ell\leq L} \Big\Vert  \sum_{k=1}^{\ell} e_{mc}^{k}\Big\Vert_2 $ up to an absolute factor (this argument allows to avoid an extra $\sqrt{\log(L)}$ factor that a union bound over $\ell\in [1:L]$ would yield. Alternatively, one could directly apply Azuma-Hoeffding's lemma, see~\cite[Lemma~A.1]{mei2018mean}).

Plugging all these error estimates into~\eqref{eq:SA-proof-Delta}, we obtain that with probability at least $1-\delta$, for $\ell\in [1:L]$, provided $\Delta_k^a\leq R$ for $k\in [1:\ell-1]$, it holds
\[
\Delta_\ell^a \leq 
\frac{L_f}{L}\Big(\sum_{k=0}^{\ell-1} \Delta^a_k\Big) + \eps_0+\eps_1+L_f\eps_2+\frac{c}{L} + c \frac{\sigma(\sqrt{D}+\sqrt{\log(1/\delta))}}{\sqrt{LM}}
\]
where $c$ only depends on $L_R$, $L_f$, $\Gamma$ and $\Vert a(0)\Vert$. The result follows by recursion (the discrete Gr\"onwall lemma), and the condition on $\Delta_k^a$ is satisfied by requiring the upper-bound to be smaller than $R$, which is precisely what the control by $c_2$ in the statement achieves.
\end{proof}

\subsection{Proof of Theorem~\ref{thm:main}}\label{sec:proof-I}
\begin{itemize}
\item \textbf{Step 1: Set-up.}
In this proof, we denote by $c$ a positive real number that may depend on $B$, $k\eta$ and $D$, and that may change from line to line. First, by Lemma~\ref{lem:propagation-regularity}, there exists $R>0$ that only depends on $B$ such that 
\[
\max_{i\leq n,k'\leq k, s\in [0,1]} \big(\Vert h_{k',i}(s)\Vert_2 \wedge \Vert b_{k',i}(s)\Vert_2 \big)\leq R.
\]
Let $\tilde k\leq k$ be the largest index such that $\max_{i\leq n,k'\leq \tilde k,\ell\leq L} \big( \Vert \hat h_{k,i}^\ell \Vert_2 \wedge \Vert \hat b_{k,i}^\ell(s)\Vert_2\big)\leq 2R$ holds. We now work with time horizon $\tilde k$, and will show in the end of the proof that $\tilde k$ can be made equal to $k$ with high-probability provided $c_2$ is small enough. Recall that
\begin{align*}
\hat Z^{j,\ell}_{k+1}&=\hat Z^{j,\ell}_k + \frac{\eta}{n} \sum_{i=1}^n \Update(\hat Z^{j,\ell}_k, \hat h^{\ell-1}_{k,i}, \hat b^\ell_{k,i}),%& \hat w_{i,k}&=\nabla \loss_i(\hat h^L_k(x_i)) 
\\
Z_{k+1}(s) &= Z_{k}(s) + \frac{\eta}{n}\sum_{i=1}^n \Update(Z_k(s), h_{k,i}(s), b_{k,i}(s))%, & w_{i,k}&=\nabla \loss_i(h_k(1,x_i))
\end{align*}
for the same map $\Update(z,h,b)=-D_2\phi(h,z)^\top b$. Under Assumption~\ref{ass:regularity}, the map $\Update$ restricted to $b$ in a ball of radius $2R$ is $L_R$-Lipschitz. Therefore, $\forall k\leq \tilde k$,
\begin{align}\label{eq:proof-iteration-inequality}
\Delta^Z_{k+1}\leq \Delta^Z_{k}+\eta L_R(\Delta^Z_{k}+\Delta^h_{k}+\Delta^b_{k}).
\end{align}
Let us now fix $k\leq \tilde k$ and control these various terms with high-probability.

\item \textbf{Step 2: Control on $\Delta_k^h$.} Let us verify that we can apply Proposition~\ref{lem:SA} with $f=f_k^h:(s,x,z)\mapsto \phi(a,z)$ and $Z=Z_k$. Clearly $f_k^h$ is Lipschitz and by Lemma~\ref{lem:propagation-regularity}, $s\mapsto Z_k(s)$ also. By Lemma~\ref{lem:propagation-subgaussianity} and composing with a Lipschitz function, $f_k^h(s,x,Z_k(s))$ is subgaussian with variance proxy $c\sigma_0^2$ (where $c$ depends only on $B$, $D$ and $k\eta$). Therefore the proposition applies (with $\epsilon_0=\epsilon_1=0$ and $\epsilon_2=\Delta^Z_k$). By a union bound over $i\in [1:n]$, there exists $c_{1,k}$ such that with probability at least $1-\delta$, it holds
\[
\Delta_k^h \leq c_{1,k}\Big(\Delta_k^Z+\frac1L +\frac{\sigma_0(1+\sqrt{\log(n/\delta)})}{\sqrt{ML}}\Big).
\]
\item \textbf{Step 3: Control on $\Delta_k^b$.} Let us verify that we can apply Proposition~\ref{lem:SA} with $f_k^b:(s,x,z)\mapsto D_1\phi(h_k(s,x_i),z)^\top x$, $Z=Z_k$ and $\hat f_k^b:(s_\ell,x,z)\mapsto D_1\phi(\hat h_k^\ell(x_i),z)^\top x$. Although $f_k^b$ is not globally Lipschitz, it is Lipschitz when its argument $x$ is restricted to a ball by Lemma~\ref{lem:propagation-subgaussianity} (the assumptions of Proposition~\ref{lem:SA} are designed precisely to cover this case). Also, by the subgaussian tail estimates of Lemma~\ref{lem:propagation-subgaussianity}, $f_k^b(s,x,Z_k(s))$ is subgaussian with variance proxy $c\sigma_0^2$ (where $c$ depends only on $B$, $D$ and $k\eta$). Therefore the proposition applies (with $\epsilon_0=\Vert w_{i,k}-\hat w_{i,k}\Vert_2\leq c\Delta_k^h$, $\epsilon_1\leq c \Delta_k^h$ and $\epsilon_2\leq \Delta_k^Z$). By a union bound over $i\in [1:n]$, there exists $c_{2,k}$ such that with probability at least $1-\delta$, it holds
\[
\Delta_k^b \leq c_{2,k}\Big(\Delta_k^Z+\Delta_k^h+\frac1L +\frac{\sigma_0(1+\sqrt{\log(n/\delta)})}{\sqrt{ML}}\Big).
\]
\item \textbf{Step 4: Conclusion.} Take a union bound over the at most $2\times k$ events where all the previous bounds hold for $k'\leq \tilde k \vee (k-1)$. Plugging into~\eqref{eq:proof-iteration-inequality}, there exists $c$ such that with probability at least $1-\delta$, for $0\leq k'\leq \tilde k \vee (k-1)$, it holds
\[
\Delta_{k'+1}^Z \leq  \Delta_{k'}^Z + \eta c \Big(\Delta_{k'}^Z+\frac1L +\frac{\sigma_0(1+\sqrt{\log(kn/\delta)})}{\sqrt{ML}}\Big).
\]
Since $\Delta^Z_0=0$, the conclusion for $(\Delta^Z_k)$ follows by Gr\"onwall's inequality, and for $(\Delta^h_k)$ and $(\Delta^b_k)$ by the bounds in Step 2 and Step 3 respectively. Finally, by taking $c_2$ small enough in the statement of the theorem, we can ensure $\tilde k\geq k$ under the same event. This concludes the proof.
\end{itemize}

\subsection{Proof of Theorem~\ref{thm:main-lazy}}\label{sec:proof-II}

The general structure of the proof follows that of the proof of Theorem~\ref{thm:main}. However there is an additional error term arising from the linearization of $\phi$ in its second argument, and the scaling of the noise variance differs. In this proof, we denote by $c$ a positive real number that depends only on $B$, $k\eta$ and $D$, and that may change from line to line.
\begin{itemize}
\item \textbf{Step 1: set-up.}
The functions appearing in the Mean ODE of the limit model are:
\begin{itemize}
\item Forward pass: $f_k^{\underline h}:(s,x,z)=D_2\phi(x,z_0)z_1$ where $z=(z_0,z_1)\in \RR^{2p}$ 
\item Backward pass: $f_k^{\underline b}:(s,x,z)=D_{1,2}\phi(\underline h_{k,i}(s),z_0)^{*1}[x,z_1]$ where $z=(z_0,z_1)\in \RR^{2p}$ 
\end{itemize}

Under Assumption~\ref{ass:lazy-regularity}, one can prove using similar arguments as for the $\alpha=1$ limit that $\underline h_{k,i}$ and $\underline b_{k,i}$ have the same regularity properties as $h_{k,i}$ and $b_{k,i}$, respectively.
The main difference is that one should complement the recursion of Lemma~\ref{lem:propagation-regularity} with the property that $\sup_{s\in [0,1]} \Vert \zeta_k(s)\Vert $ is bounded by a constant depending on $B$ and $k\eta$ only (this follows from the boundedness of the update map along the iterations of GD). We can also bound the subgaussian norm of  $Z_k(s)=Z_0+\alpha^{-1}\zeta_k(s)$  by $c\sigma_0$ using the approach of Lemma~\ref{lem:propagation-subgaussianity}.

As in the proof of Theorem~\ref{thm:main}, we consider a (random) index $\tilde k$ such that the norms of $\underline h_{k',i}, \underline b_{k',i},\hat h_{k',i}$ and $\hat b_{k',i}$ are controlled by $2R$ for all $k'\leq \tilde k \vee k$ and $i\in [1:n]$. Recalling
\[
\Delta_k^\zeta \coloneqq \max_{\substack{j\in [1:M] \\ \ell\in [1:L]}} \Vert \alpha(\hat Z_k^{j,\ell}-\hat Z_0^{j,\ell}) -\zeta_k^{j,\ell}(s_{\ell-1}) \Vert
\]
and the expression of the updates, it holds (a factor $\alpha$ gets canceled out):
\begin{align}\label{eq:proof-lazy-step}
\Delta_{k+1}^\zeta \leq \Delta_{k}^\zeta + c\eta(\Delta_{k}^{\underline h}+\Delta_{k}^{\underline b}).
\end{align}
\item \textbf{Step 2: Control on $\Delta_k^{\underline h}$.} Here we would like to apply Proposition~\ref{lem:SA}, but it does not apply as such, due to the fact that the regularity estimates of the map $\alpha\phi$ diverges. Instead of the error decomposition in the beginning of the proof of Proposition~\ref{lem:SA}, we consider the following decomposition:
\begin{align*}
\underline h_{k,i}(s_{\ell+1})- \hat h^{\ell+1}_{k,i} & = \underline h_{k,i}(s_{\ell})- \hat h^{\ell}_{k,i} \\
&+  \int_{s_\ell}^{s_{\ell+1}} \E[ D_2\phi (\underline  h_{k,i}(s),Z_0) \zeta_k(s)]\d s - \frac1L \E[ D_2\phi (\underline  h_{k,i}(s_\ell),Z_0) \zeta_{k}(s_\ell)] \\
&+ \frac1L \E[ D_2\phi (\underline h_{k,i}(s_\ell),Z_0) \zeta_{k}(s)] -  \frac1L \E[ D_2\phi ({\color{red}\hat h}_{k,i}^\ell,Z_0) \zeta_k(s)] \\
&+ \frac1L \E[ D_2\phi (\hat h_{k,i}^\ell,Z_0) \zeta_{k}(s)] - \frac{\alpha}{L} \E[\phi(\hat h_{k,i}^\ell,Z_0+\alpha^{-1}\zeta_k(s_\ell))]\\
&+\frac{\alpha}{L} \E[\phi(\hat h_{k,i}^\ell,Z_0+\alpha^{-1}\zeta_k(s_\ell))]-\frac{\alpha}{ML} \sum_{j=1}^M  \phi(\hat h_k^\ell,Z^{j,\ell+1}_0+\alpha^{-1}\zeta_k^{j,\ell+1})\\
&+ \frac{\alpha}{ML} \sum_{j=1}^M  \Big(\phi(\hat h_k^\ell,Z^{j,\ell+1}_0+\alpha^{-1}\zeta_k^{j,\ell+1})-\phi(\hat h_{k,i}^\ell,Z^{j,\ell+1}_0+\alpha^{-1}{\color{red}\hat \zeta}_k^{j,\ell+1})\Big)
\end{align*}
 The terms from line to line are respectively (i) the Euler discretization error bounded by $c/L^2$, (ii) a term bounded by $(c/L)\Vert \underline h_{k,i}(s_{\ell})- \hat h^{\ell}_{k,i} \Vert_2$, (iii) the linearization error bounded by $c/(\alpha L)$, (iv) the Monte-Carlo sampling error multiplied by $\alpha$ and (v) a term bounded by $c\Delta_k^{\zeta}/L$.  Of all these terms, only the linearization term (iii) is new compared to the proof of Proposition~\ref{lem:SA}. There is also a difference in the handling of the Monte-Carlo term (iv) because here $\hat h^\ell_{k,i}$ is not independent from $(Z_0^{j,\ell+1},\zeta_k^{j,\ell+1})$. This can be handled by using a \emph{uniform} concentration bound over $\hat h^\ell_k$ in the ball of radius $2R$ in $\RR^D$ which leads to the same high probability bound up to terms depending on $D$ only.
 
 Then, proceeding as in the proof of Proposition~\ref{lem:SA}, we get that with probability at least $1-\delta$, 
\[
\Delta_k^{\underline h}\leq c_{1,k}\Big( \Delta_k^{\zeta} +\frac{1}{\alpha} +\frac1L +\frac{\alpha \sigma_0 (1+\sqrt{\log(n/\delta)})}{\sqrt{LM}}\Big).
\]

Next, to bound $\Delta_k^{\underline b}$, we can proceed along the same line as in Step 3 of the proof of Theorem~\ref{thm:main} and using an error decomposition similar to the one above. We obtain that with probability at least $1-\delta$, 
\[
\Delta_k^{\underline b}\leq c_{2,k}\Big( \Delta_k^{\zeta}+\Delta_k^{\underline h} +\frac{1}{\alpha} +\frac1L +\frac{\alpha \sigma_0 (1+\sqrt{\log(n/\delta)})}{\sqrt{LM}}\Big).
\]
\item \textbf{Conclusion.} We finally take a union bound over the at most $2k$ events where the previous bounds hold up to $\tilde k \vee (k-1)$, we plug these estimates into~\eqref{eq:proof-lazy-step} and conclude exactly as in the proof of Theorem~\ref{thm:main} (ensuring in particular that $\tilde k\geq k$ by taking $c_2$ small enough).
\end{itemize}

\section{Proofs for ResNets with 2LP blocks}\label{sec:proof-2LP}
In this whole section, we work under  Assumption~\ref{ass:high-dim} with $B>0$, and we denote by $c$ a positive real number that may depend on $B$ and the number of iterations $k$, and that may change from line to line.

\subsection{High-dimensional regularity estimates on the Mean ODE}\label{eq:proof-limit-phase}
We first state a simple probability result that is key to track the scale of various quantities in the limit model.

\begin{lemma}[Decorrelation lemma]\label{lem:scalar-vector-expectation}
Let $(X,Y)\in \RR^D\times \RR$ be a pair of $L^2$ random variables. 
\begin{enumerate}[label=(\roman*)]
\item 
Then
\begin{align}
\Vert \E[XY]\Vert_{2} \leq \E^{1/2}[\Vert X\Vert_2^2 ] \cdot \E^{1/2} [Y^2].
\end{align}
%\begin{align}
%\Vert \E[XY]\Vert_{\RMS} \leq \Vert X\Vert_{L^2(\RMS)} \cdot \Vert Y\Vert_{L^2}.
%\end{align}
\item If moreover the coordinates of $X$ satisfy $\E[X[i]X[j]]= 0$ if $i\neq j$ and $\E[X[i]^2]\leq \sigma^2$, then 
\begin{align}
\Vert \E[XY]\Vert_{2} \leq \sigma \E^{1/2} [Y^2].
\end{align}
In particular, if the entries of $X$ are  zero mean, of variance bounded by $\sigma^2$ and independent (but not necessarily independent of $Y$) it holds $\Vert \E[XY]\Vert_{\RMS} \leq \frac{\sigma}{\sqrt{D}} \E^{1/2} [Y^2]$.
\end{enumerate}
\end{lemma}
The last claim plays an important role in  the stability of the Mean ODE in the residual scaling $\Theta\big(\frac{\sqrt{D}}{LM}\big)$. In words, it states that a scalar random variable cannot fully correlate with all the entries of a random vector with independent entries.
\begin{proof}
 The first property is direct by Cauchy-Schwartz inequality applied entrywise
$$
\Vert \E[XY]\Vert_{2}^2 = \sum_{i=1}^D |\E[X[i]Y]|^2\leq \sum_{i=1}^D \E[ X[i]^2]\cdot  \E [Y^2]= \E[\Vert X\Vert_2^2] \cdot \E [Y^2].
$$
For the second property,
\begin{align*}
\Vert \E[XY]\Vert_2 = \sup_{\Vert z\Vert_{2}\leq 1} \E[Y z^\top X]\leq \sup_{\Vert z\Vert_{2}\leq 1} \E^{1/2}[Y^2] \cdot \E^{1/2}[(z^\top X)^2]
\end{align*}
but since the entries of $X$ are uncorrelated, we have for any fixed $z\in \RR^D$
\begin{equation*}
\E[(z^\top X)^2] =\E\Big[ \big(\sum_{i=1}^D z[i]X[i]\big)^2\Big] = \sum_{i=1}^D \E[z[i]^2 X[i]^2]\leq \sigma^2 \Vert z\Vert_2^2.\qedhere
\end{equation*}
\end{proof}

We now prove uniform-in-$D$ stability estimates for the Mean ODE dynamics, where the decorrelation lemma will play a key role. Let us introduce the convenient notations
\begin{align*}
\Delta U_k(s) &= U_k(s)- U_0(s), & \Delta V_k(s) &= V_k(s)- V_0(s).
\end{align*}

The goal of the next proposition is to obtain regularity estimates on the forward and backward pass. The other estimates will be improved in the next statements under stronger tail assumptions on the initialization.

\begin{proposition}[Large-dimensional stability of the Mean ODE dynamics]\label{thm:phase-diagram-D} 
Let Assumption~\ref{ass:high-dim} hold for some $B>0$ (note: we do not use the boundedness of $\clip_u$ or $\clip_b$ here) and consider the Mean ODE dynamics~\eqref{eq:limit-dynamics-2LP}. Then for all $k\geq 0$, there exists $c>0$ that only depends on $B$ and $k$ such that for all $i\in [1:n]$, $s\in [0,1]$,  
%Then:
\begin{align*}
\Vert h_{k,i}(s)\Vert_\RMS\leq c, && \Vert b_{k,i}(s)\Vert_\RMS\leq c\\
\E^{1/2}[\Vert \Delta U_k(s)\Vert_\RMS^2] \leq c && \E^{1/2}[\Vert \Delta V_k(s)\Vert_\RMS^2] \leq c\\
\E^{1/2}[\vert P_{k,i}(s)\vert^2] \leq c && \E^{1/2}[\vert Q_{k,i}(s)\vert^2] \leq c
\end{align*}
Moreover, the functions $s\mapsto h_{k,i}(s)$ and  $s\mapsto b_{k,i}(s)$ are $c$-Lipschitz in RMS norm on $[0,1]$. 
\end{proposition}
\begin{proof}
Let us prove the result by recursion over $k$.

\textbf{Initial step.} The first forward and backward passes are trivial and given by 
\begin{align}
h_{0,i}(s) = W_\tin x_i,&& b_{0,i}(s) = g_i(W_\tin x_i), && \forall  s\in[0,1].
\end{align} 
Moreover, using that the coordinates of $U_0$ are zero mean and independent,
\begin{align}\label{eq:proof-stability-D-1}
\E[ |P_{0,i}(s)|^2 ] =\frac{1}{D^2} \E \Big[\sum_{d=1}^D U_0[d]^2(W_\tin x_i) [d]^2\Big] = \frac{\sigma_u^2}{D^2}\Vert W_\tin x_i\Vert_2^2 \leq c
\end{align}
and a similar bound holds for $\E[ |Q_{0,i}(s)|^2 ]$. Also, $\Delta U_0(s)=\Delta V_0(s)=0$. Therefore the proposition is true for $k=0$.

\textbf{Inductive step.}  Let us assume that the proposition is true at iteration $k-1$ for some $c$, and let us use $c'$ for scalars that depend only on $c$ and $B$ (but, crucially, independent of $D$). 
Using that $\rho'$ is bounded, we have, 
\begin{align*}
\E^{1/2}[\Vert U_{k}(s)-U_{k-1}(s)\Vert_\RMS^2] &= \frac{\eta}{n}\sum_{i=1}^n \E^{1/2}[\Vert \rho'(P_{k-1,i}(s))\clip_u(Q_{k-1,i}(s))h_{k-1,i}(s)\Vert^2_\RMS] \leq c',\\
\E^{1/2}[\Vert V_{k}(s)-V_{k-1}(s)\Vert_\RMS^2] &= \frac{\eta}{n}\sum_{i=1}^n \E^{1/2}[\Vert \rho(P_{k-1,i}(s))b_{k-1,i}(s)\Vert^2_\RMS] \leq c',
\end{align*}
from which it follows $\E^{1/2}[\Vert \Delta U_{k}(s)\Vert_\RMS^2] \wedge \E^{1/2}[\Vert \Delta V_{k}(s)\Vert_\RMS^2] \leq c'$. Therefore it holds
 \begin{align*}
 \E^{1/2} [P_{k,i}(s)^2] &\leq  D^{-1}\E^{1/2}[U_0^\top h_{k,i}(s)] + D^{-1}\E^{1/2}[\Delta U_k(s)^\top h_{k,i}(s)]\\
 &\leq c' \Vert h_{k,i}(s)\Vert_{\RMS}.
 \end{align*}
 where we used the computations~\eqref{eq:proof-stability-D-1} for the term involving $U_0$, and analogously $ \E^{1/2} [Q_{k,i}(s)^2] \leq c' \Vert b_{k,i}(s)\Vert_{\RMS}$.
Now to control the forward pass, we have
\begin{align*}
\Vert \partial_s h_{k,i}(s)\Vert_\RMS &= \Vert \E[\rho(P_k(s)) V_k(s)]\Vert_\RMS\\
&\leq \E [\Vert  \rho(P_k(s)) \Delta V_k(s)\Vert_\RMS] + \Vert \E[\rho(P_k(s))V_0(s)]\Vert_\RMS \\
&\leq  \E^{1/2}[ \Vert \Delta V_{k}\Vert^2_\RMS]\cdot \E^{1/2}[ \rho(P_k(s))^2] +\frac{\sigma_v}{\sqrt{D}} \E^{1/2}[ \rho(P_k(s))^2]
\end{align*}
where we have used Lemma~\ref{lem:scalar-vector-expectation}-(i) and (ii) to bound the first and second term, respectively. We deduce the bound
\begin{align}
\Vert \partial_s h_{k,i}(s)\Vert_\RMS \leq c'(1+\Vert h_{k,i}(s)\Vert_\RMS).
\end{align}
By Gr\"onwall's lemma, and since the initial point satisfies $\Vert W_\tin x_i\Vert_\RMS\leq c$, we deduce that there exists $c'>0$ such that  $\Vert h_{k,i}(s)\Vert_\RMS\leq c'$ and $s\mapsto h_{k,i}(s)$ is $c'$-Lipschitz in RMS norm.

Finally, to control the backward pass, we have 
\begin{align*}
\Vert \partial_s b_{k,i}(s)\Vert_\RMS &= \Vert \E[\rho'(P_k(s))\clip_b(Q_k(s))U_k(s)]\Vert_\RMS\\
&\leq \E [\Vert \rho'(P_k(s))\clip_b(Q_k(s)) \Delta U_k(s)\Vert_\RMS]+ \Vert \E[\rho'(P_k(s))\clip_b(Q_k(s)) U_0(s)]\Vert_\RMS  \\
&\leq  \E^{1/2}[ \Vert \Delta V_{k}\Vert^2_\RMS]\cdot \E^{1/2}[ \rho'(P_k(s))^2Q_k(s)^2] +\frac{\sigma_v}{\sqrt{D}} \E^{1/2}[ \rho'(P_k(s))^2Q_k(s)^2]
\end{align*}
using again Lemma~\ref{lem:scalar-vector-expectation}-(i) and (ii) to bound the first and second term, respectively.
We deduce the bound
\begin{align}
\Vert \partial_s b_{k,i}(s)\Vert_\RMS \leq c'\Vert b_{k,i}(s)\Vert_\RMS.
\end{align}
By Gr\"onwall's lemma, and since the initial point satisfies $\Vert g_i( h_{k,i}(1))\Vert_\RMS\leq c'$ under Assumption~\ref{ass:high-dim}, we deduce that there exists $c'>0$ such that  $\Vert b_{k,i}(s)\Vert_\RMS\leq c'$ and $s\mapsto b_{k,i}(s)$ is $c'$-Lipschitz in RMS norm. The result follows by recursion.
\end{proof}

\begin{lemma}[Propagation of subgaussian tails]\label{lem:subgaussian-bis}
Let Assumption~\ref{ass:high-dim} hold for some $B>0$ and consider the Mean ODE dynamics~\eqref{eq:limit-dynamics-2LP} (note: we do not use that $\clip_u$ or $\clip_b$ are bounded here). Suppose moreover that there exists $\bar \sigma_0 \leq B\sqrt{D}$ such that $\Vert U_0\Vert_{\psi_2}, \Vert V_0\Vert_{\psi_2}\leq \bar \sigma_0$. Then for any $k\geq 0$, there exists $\kappa_k\geq 0$ that only depends on $B$ and $k$ such that 
\begin{align*}
\forall s\in [0,1],\quad  \Vert \Vert \Delta U_k(s)\Vert_\RMS \Vert_{\psi_2}\wedge \Vert \Vert \Delta V_k(s)\Vert_\RMS \Vert_{\psi_2} \leq \kappa_k, && \Vert P_{i,k}(s)\Vert_{\psi_2}\wedge \Vert Q_{i,k}(s)\Vert_{\psi_2} \leq \kappa_k.
\end{align*}
In particular, it follows
$\Vert   \Delta U_k(s)\Vert_{\psi_2}\wedge
\Vert   \Delta V_k(s)\Vert_{\psi_2}\wedge
\Vert   U_k(s)\Vert_{\psi_2}\wedge 
\Vert  V_k(s)\Vert_{\psi_2}\leq \kappa_k\sqrt{D}$.
\end{lemma}
\begin{proof}
The second claim follows from the first one because of the inequality $\Vert X\Vert_{\psi_2}\leq \sqrt{D}\Vert \Vert X\Vert_\RMS\Vert_{\psi_2}$ that holds for any subgaussian random vector $X\in \RR^D$ (see details in Appendix~\ref{sec:subgaussian}), and also using $\Vert U_k(s)\Vert_{\psi_2}\leq \Vert U_0(s)\Vert_{\psi_2}+\Vert \Delta U_k(s)\Vert_{\psi_2}$. 

Let us prove the first claim by recursion. The case $k=0$ holds because $\Delta U_k(s)=\Delta V_k(s)=0$ and  also
\[
\Vert P_{0,i}(s)\Vert_{\psi_2} \leq c \frac{\bar \sigma_0}{\sqrt{D}} \Vert h_{0,i}\Vert_\RMS\leq c'
\]
and $\Vert Q_{0,i}(s)\Vert_{\psi_2}$ can be bounded similarly, for some $c,c'$ that only depend on $B$.

Assume that the property holds at $k\geq 0$ for some $\kappa_{k}$ and let us use $c$ for a scalar that only depends on $B$ and $\kappa_{k}$. Using the recursion hypothesis, the boundedness of $\rho'$ and the bounds from Proposition~\ref{thm:phase-diagram-D}, it holds
\begin{align*}
\Vert \Vert \Delta U_{k+1}(s)\Vert_{\RMS}\Vert_{\psi_2} &\leq \Vert \Vert \Delta U_{k}(s)\Vert_{\RMS}\Vert_{\psi_2}+ \frac{\eta_u}{n}\sum_{i=1}^n \Vert \Vert \rho'(P_{k,i}(s))\clip_u(Q_{k,i}(s))h_{k,i}(s)\Vert_{\RMS}\Vert_{\psi_2}\\
&\leq \Vert \Vert \Delta U_{k}(s)\Vert_{\RMS}\Vert_{\psi_2} + \frac{c}{n}\sum_{i=1}^n \Vert h_{k,i}(s)\Vert_{\RMS}\cdot \Vert  Q_{k,i}(s))\Vert_{\psi_2} \leq  \kappa_k+ c\kappa_k.
\end{align*}

Similarly, 
\begin{align*}
\Vert \Vert \Delta V_{k+1}(s)\Vert_{\RMS}\Vert_{\psi_2} &\leq \Vert \Vert \Delta V_{k}(s)\Vert_{\RMS}\Vert_{\psi_2}+ \frac{\eta_v}{n}\sum_{i=1}^n \Vert \Vert \rho(P_{k,i}(s))b_{k,i}(s)\Vert_{\RMS}\Vert_{\psi_2}\\
&\leq \Vert \Vert \Delta V_{k})(s)\Vert_{\RMS}\Vert_{\psi_2}+ \frac{c}{n}\sum_{i=1}^n (1+\Vert  P_{k,i}(s)\Vert_{\psi_2}) \Vert b_k(s)\Vert_{\RMS} \leq  \kappa_k+c(1+\kappa_k)
\end{align*}
where we used that by the Lipschitz continuity of the univariate map $\rho$, $\Vert \rho( P_{k,i}(s))\Vert_{\psi_2}\leq c(1+\Vert  P_{k,i}(s)\Vert_{\psi_2})$. From these bounds, we directly deduce 
\[
\Vert   \Delta U_{k+1}(s)\Vert_{\psi_2}\wedge
\Vert   \Delta V_{k+1}(s)\Vert_{\psi_2}\wedge
\Vert   U_{k+1}(s)\Vert_{\psi_2}\wedge 
\Vert  V_{k+1}(s)\Vert_{\psi_2}\leq c(\kappa_k+1)\sqrt{D}.
\]

Finally, using the definition of vector subgaussian norm, we have
 \begin{align*}
 \Vert P_{k+1,i}(s)\Vert_{\psi_2} \leq    D^{-1}\Vert U_{k+1}(s)^\top h_{k+1,i}(s)\Vert_{\psi_2}\leq \frac{c}{\sqrt{D}}\Vert U_{k+1}(s) \Vert_{\psi_2}\leq c'(\kappa_k+1)
 \end{align*}
 and analogously $\Vert Q_{k+1,i}(s)\Vert_{\psi_2}\leq c'(\kappa_k+1)$. This proves the existence of $\kappa_{k+1}$ with the suitable dependency, and concludes the proof.
\end{proof}

We now prove propagation of Lispschitz regularity. We do so in subgaussian norm as this will be convenient later, but an analogous statement holds in any $L^p$ norm under $L^p$ initialization (here we crucially use the boundedness of $\clip_u$).
\begin{lemma}[Propagation of Lipschitz regularity]\label{lem:Lipschitz-bis}
Let Assumption~\ref{ass:high-dim} hold for some $B>0$. Suppose moreover that there exists $\bar \sigma_0 \leq B\sqrt{D}$ such that $\Vert U_0\Vert_{\psi_2}, \Vert V_0\Vert_{\psi_2}\leq \bar \sigma_0$. Consider the Mean ODE dynamics~\eqref{eq:limit-dynamics-2LP}. Then for all $k\geq 0$, there exists $\Gamma_k\geq 0$ that only depends on $B$ and $k$ such that for all $i\in [1:n]$, $s\in [0,1]$, 
\begin{align*}
\Vert P_{k,i}(s)-P_{k,i}(s')\Vert_{\psi_2} &\leq \Gamma_k |s-s'|,&
\Vert Q_{k,i}(s)-Q_{k,i}(s')\Vert_{\psi_2} &\leq \Gamma_k |s-s'|,\\
\Vert \Vert  V_k(s)-V_k(s')\Vert_\RMS \Vert_{\psi_2}&\leq \Gamma_k |s-s'|,&
\Vert\Vert U_k(s)-U_k(s')\Vert_\RMS \Vert_{\psi_2} &\leq \Gamma_k  |s-s'|.
\end{align*}
\end{lemma}
\begin{proof}
Recall that by Proposition~\ref{thm:phase-diagram-D}, we already know that for all $k\geq 0$, there exists $c\geq 0$ that only depends on $B$ and $k$ such that $h_{k,i}$ and $b_{k,i}$ are $c$-Lipschitz. Let us now prove the claim by recursion over $k$. Thanks to our choice of initialization of the dynamics, the claim is trivially satisfied at $k=0$ with $\Gamma_0=0$. 

Let us assume that the claim is true at $k-1\geq 0$ for some $\Gamma_{k-1}\geq 0$. Let us use $c$ for a scalar that only depends on $B$ and $\Gamma_{k-1}$. Then using some estimates from Proposition~\ref{thm:phase-diagram-D} $\forall s,s'\in [0,1]$,
\begin{align*}
&\Vert U_k(s)-U_k(s')\Vert_\RMS   \\
&\leq \Vert U_{k-1}(s)-U_{k-1}(s')\Vert_\RMS \\
&\qquad +\frac{\eta}{n} \sum_{i=1}^n \Vert \rho'(P_{i,k-1}(s))\clip_u(Q_{i,k-1}(s))h_{i,k-1}(s) - \rho'(P_{i,k-1}(s'))\clip_u(Q_{i,k-1}(s'))h_{i,k-1}(s')\Vert_\RMS\\
& \leq \Vert U_{k-1}(s)-U_{k-1}(s')\Vert_\RMS + c |P_{i,k-1}(s)-P_{i,k-1}(s')|\cdot |\clip_u(Q_{i,k-1}(s))|\\ 
&\qquad  + c |\clip_u(Q_{i,k-1}(s))-\clip_u(Q_{i,k-1}(s'))| + c |\clip_u(Q_{i,k-1}(s'))|\cdot \Vert h_{i,k-1}(s)-h_{i,k-1}(s')\Vert_\RMS.
\end{align*}
(Here observe that if $\clip$ was not bounded, the second term would cause a loss of integrability at each iteration).
Using  estimates from Lemma~\ref{lem:subgaussian-bis}, the recursion hypothesis, the fact that $\rho'$ is $B$-Lipschitz continuous and that $\clip$ is Lipschitz (for the third term) and bounded (for the second term), we deduce
\begin{align*}
\Vert \Vert U_k(s)-U_k(s')\Vert_\RMS\Vert_{\psi_2} \leq (\Gamma_{k-1}+ c\Gamma_{k-1}+c)|s-s'|.
\end{align*}
Analogously,
\begin{align*}
\Vert V_k(s)-V_k(s')\Vert_\RMS &\leq \Vert V_{k-1}(s)-V_{k-1}(s')\Vert_\RMS \\
&+\frac{\eta}{n} \sum_{i=1}^n \Vert \rho(P_{i,k-1}(s))b_{i,k-1}(s) - \rho(P_{i,k-1}(s'))b_{i,k-1}(s')\Vert_\RMS
\end{align*}
so using Proposition~\ref{thm:phase-diagram-D} and Lemma~\ref{lem:subgaussian-bis}, the recursion hypothesis and the fact that $\rho$ is $B$-Lipschitz continuous, we deduce
\begin{align*}
\Vert \Vert V_k(s)-V_k(s')\Vert_\RMS\Vert_{\psi_2} \leq (\Gamma_{k-1}+ c\Gamma_{k-1}+c)|s-s'|.
\end{align*}
Now, consider the preactivation random variables:
\begin{align*}
&| P_{k,i}(s)-P_{k,i}(s')| \\
&\quad  \leq \Vert U_k(s)-U_k(s')\Vert_\RMS \Vert h_{k,i}(s)\Vert_\RMS+ |\ipD{U_0}{h_{k,i}(s)-h_{k,i}(s')}|+\Vert \Delta U_k(s')\Vert_\RMS\Vert h_{k,i}(s)-h_{k,i}(s')\Vert_\RMS,
\end{align*}
hence, since $\Vert \ipD{U_0}{h_{k,i}(s)-h_{k,i}(s')}\Vert \leq D^{-1}\Vert U_0\Vert_{\psi_2}\cdot \Vert h_{k,i}(s) - h_{k,i}(s')\Vert_2$ by definition of the subgaussian vector norm,
$$
\Vert P_k(s)-P_k(s') \Vert_{\psi_2} \leq (c \Gamma_k+c) |s-s'|.
$$
We can prove $\Vert Q_k(s)-Q_k(s')\Vert_{\psi_2} \leq (c \Gamma_k+c) |s-s'|$ analogously and the claim follows by recursion.
\end{proof}

\subsection{A refined stochastic approximation result}
In this section, we prove a variant of the stochastic approximation lemma (Lemma~\ref{lem:SA}) which is used later in the proof of Theorem~\ref{claim:D-dependence}. 

\begin{lemma}[Stochastic approximation, bis]\label{lem:SA-relaxed}
Let $f:[0,1]\times \RR^D\times \RR^p\to\RR^D$ a measurable function and $(Z(s))_{s\in [0,1]}$ a $\RR^p$-valued stochastic process. %Consider the Mean ODE
%\begin{align}
%a(0)\in \RR^d,&& a'(s)=F(s,a(s)),&&F(s,x) \coloneqq \E [f(s,x,Z(s))].
%\end{align}
Assume that there exists $L_x,B>0$ such that 
\begin{align}\label{eq:SA-regularity-bis}
\Vert F(s,0)\Vert_\RMS\leq B, && \Vert F(s,x)-F(s,x')\Vert_\RMS\leq L_x\Vert x-x'\Vert_\RMS,\quad \forall s\in [0,1], x,x'\in \RR^D.
\end{align}
Let $R>0$ be given by Lemma~\ref{lem:ODE-bound} (depending only on $B$, $\Vert a(0)\Vert_2$ and $L_x$) such that the unique solution of the Mean ODE
\begin{align}\label{eq:mean-ODE-lemma-bis}
a(0)\in \RR^D,&& a'(s)=F(s,a(s)),&& F(s,x)\coloneqq \E[ f(s,x,Z(s))]
\end{align}
satisfies  $\sup_{s\in [0,1]}\Vert a(s)\Vert_{\RMS} \leq R$ and $s\mapsto a(s)$ is $R$-Lipschitz in RMS norm. Moreover, we assume the following local controls:
\begin{itemize}
\item  there exists $L_s>0$ such that $\forall \Vert x\Vert_\RMS\leq R$, $\forall s,s'\in [0,1]$, $\Vert F(s,x)-F(s',x)\Vert_\RMS\leq L_s|s-s'|$.
\item there exists $K_1>0$ such that $\forall s\in [0,1]$ and $\forall \Vert x\Vert_\RMS\leq R$, the $\RR^D$-valued random variable $f(s,x,Z(s))$ is subexponential with $\Vert f(s,x,Z(s))-\E[f(s,x,Z(s))]\Vert_{\psi_1}\leq K_1$.
\end{itemize}
For integers $M,L\geq 1$, let $s_\ell\coloneqq \ell/L$ and consider the ``inexact Euler Monte-Carlo'' scheme
\begin{align}
\hat a^0\in \RR^D,&& \hat a^{\ell}=\hat a^{\ell-1}+\frac{1}{LM}\sum_{j=1}^M \hat f(s_{\ell-1},\hat a^{\ell-1},\hat Z^{j,\ell}),\quad \ell\in [1,L]
\end{align}
where $(\hat Z^{j,\ell})_{j,\ell}$ are random vectors and, there exists $\eps_0,\eps_1 \geq 0$  such that:
\begin{itemize}
\item[(i)] Bounded initial mismatch: $\|\hat a^0-a(0)\|_\RMS \le \eps_0$.
\item[(ii)] Error control: there exists $\epsilon_1,K_2(\delta_0)>0$ and a family $(Z^{j,\ell})$ of independent samples of $Z$ such that with probability at least $1-\delta_0$, $\forall x, \hat x\in \RR^D$ such that $\Vert x\Vert_\RMS,\Vert \hat x\Vert_{\RMS}\leq 2R$ and $\forall \ell\in [0:L-1]$, it holds
$$
\Big\Vert \frac1M \sum_{j=1}^M  \big( f(s_{\ell},x,Z^{j,\ell+1}(s_{\ell}))-\hat f(s_\ell,\hat x,\hat Z^{j,\ell+1}\big)\Big\Vert_\RMS \leq K_2(\delta_0) (\epsilon_1+\Vert x-\hat x\Vert_\RMS).
 $$
\end{itemize}
Then there exists and absolute $c_1>0$ such that for any $\delta\in (0,1)$ with probability at least $1-\delta-\delta_0$, it holds
\begin{align*}\label{eq:SA-lemma-subexp}
\sup_{0\leq \ell \leq L} \Vert \hat a^\ell -a(s_\ell)\Vert_\RMS 
\leq c_2 e^{K_2(\delta_0)}\Big(\epsilon_0 +K_2(\delta_0) \epsilon_1 + \frac{L_s+L_x R}{L} +K_1 \frac{1+\log(1/\delta)/\sqrt{D}}{\sqrt{ML}}\Big).
\end{align*}
provided that the right-hand side is smaller than $R$.
\end{lemma}

\begin{proof}  
For $\ell\in [0:L-1]$, it holds
\begin{align*}
a(s_{\ell+1})-\hat a_{\ell+1} &= a(s_\ell)-\hat a_\ell + \int_{s_\ell}^{s_{\ell+1}} \dot{a}(s)\d s -\frac{1}{ML}\sum_{j=1}^M \hat f(s_\ell,\hat a^\ell,\hat Z^{j,\ell+1}) \\
&= a(s_\ell)-\hat a_\ell + \underbrace{ \int_{s_\ell}^{s_{\ell+1}} \dot{a}(s)\d s-\frac1L F(s_\ell,a(s_\ell)}_{e_{euler}^{\ell+1}} \\
&\qquad + \underbrace{\Big( \frac{1}{L} F(s_\ell,a(s_\ell))- \frac1{ML} \sum_{j=1}^M f(s_\ell,a(s_\ell),Z^{j,\ell+1}(s_\ell))\Big)}_{e_{mc}^{\ell+1}}\\
&\qquad + \underbrace{\frac{1}{LM} \sum_{i=1}^M \Big( f(s_\ell,a(s_\ell),Z^{j,\ell+1}(s_\ell))-\hat f(s_\ell,\hat a^\ell,\hat Z^{j,\ell+1})\Big)}_{e_{approx}^{\ell+1}}.
\end{align*}
By recursion, we have
\[
a(s_\ell)-\hat a_\ell = a(0)-\hat a_0 + \sum_{k=1}^{\ell} e_{euler}^{k}+\sum_{k=1}^{\ell} e_{mc}^k +  \sum_{k=1}^{\ell}   e_{approx}^{k} %+ \frac1{L} \sum_{k=0}^{\ell-1} .
\]
and therefore with $\Delta_\ell^a \coloneqq \Vert a(s_\ell)-\hat a_\ell\Vert_\RMS$, it holds
\begin{equation}\label{eq:SA-proof-bis-Delta}
\Delta_\ell^a \leq \epsilon_0 + \sum_{k=1}^{\ell} \Vert e_{euler}^{k}\Vert_\RMS+\sum_{k=1}^{\ell} \Vert e_{approx}^{k}\Vert_\RMS +  \Big\Vert \sum_{k=1}^{\ell} e_{mc}^{k}\Big\Vert_\RMS . %\\ +\max_{\ell'\leq \ell}\Big\Vert \frac{1}{ML}\sum_{k=0}^{\ell'-1} \sum_{j=1}^M  A^{j,k}y^{j,k} \Big\Vert_\RMS  + \frac{1}{ML}\sum_{k=0}^{\ell-1} \sum_{j=1}^M  \Vert Y^{j,\ell}\Vert_\RMS.
\end{equation}
As in Lemma~\ref{lem:SA}, we have for $\ell \in [1:L]$,
$
\Vert e_{euler}^\ell\Vert_\RMS \leq \frac{L_s+L_x\cdot R}{2L^2}
$.
Let $L'\leq L$ be such that $\Delta_k^a\leq R$ for all $k\leq L'$ (so that $\Vert \hat a^k\Vert_\RMS\leq 2R$). Then by Assumption (ii), the term involving $e_2^k$ is bounded for $\ell\leq L'+1$ under an event of probability at least $1-\delta_0$ as
$$
\sum_{k=1}^{\ell} \Vert e_{approx}^k\Vert_\RMS \leq \frac{K_2(\delta_0)}{L}\Big(\epsilon_1 + \sum_{k=0}^{\ell-1} \Delta_\ell\Big).
$$
 In the last term, we have $e_{mc}^k=\frac{1}{L}\frac{1}{M}\sum_{j=1}^M \xi^{j,k}$ where the random variables $\xi^{j,k}$ are independent, centered and subexponential with $\Vert \xi^{j,\ell}\Vert_{\psi_1}\leq K_1$. By subexponential concentration  (Lemma~\ref{lem:subexp-concentration}) there exists an absolute constant $c>0$ such that,
with probability at least $1-\delta$ it holds
$$
\max_{1\leq \ell<L} \Big \Vert \frac{1}{LM}\sum_{k=1}^{\ell}\sum_{j=1}^M \xi^{j,k}\Big\Vert_\RMS \leq cK_1 \frac{1+\log(1/\delta)/\sqrt{D}}{\sqrt{ML}}.
$$
(Note the division by $\sqrt{D}$ which comes from the switch between $\ell_2$ to RMS norm.)
Plugging all the error estimates into~\eqref{eq:SA-proof-Delta} and by a union bound, we obtain that with probability at least $1-\delta-\delta_0$, for $\ell\in [1:L']$,
\[
\Delta_\ell^a \leq \epsilon_0 + \frac{L_s+L_x\cdot R}{2L} +cK_1 \frac{1+\log(1/\delta)/\sqrt{D}}{\sqrt{ML}}
+ \frac{K_2(\delta_0)}{L}\Big(\epsilon_1 + \sum_{k=0}^{\ell-1} \Delta_\ell\Big).
\]
The result follows for $\ell<L'$ by discrete Gronwall's lemma. If the right-hand side is smaller than $R$, then $L'=L$ and the claim follows.
\end{proof}

\subsection{Proof of Theorem~\ref{claim:D-dependence}}
Recall that $(Z^{j,\ell}_k=(U^{j,\ell}_k,V^{j,\ell}_k))_{k\geq 0}$ are iid samples from the limit dynamics satisfying $Z^{j,\ell}_0=\hat Z_0^{j,\ell}$. Let us consider the decomposition $U_k^{j,\ell}=U_0^{j,\ell}+\Delta U_k^{j,\ell}$ and $\hat U_k^{j,\ell}=U_0^{j,\ell}+\Delta \hat U_k^{j,\ell}$ and similarly for $V_k^{j,\ell}$ and $\hat V_k^{j,\ell}$, $ Z_k^{j,\ell}$ and $\hat Z_k^{j,\ell}$.

To slightly lighten the notations, we consider a single sample $x_i$ since we can deal with any number of samples via a union bound as in the proof of Theorem~\ref{thm:main}. 

 In this proof, we also introduce the notation $U^{\ell}_k\in \RR^{M\times D}$ and $V^{\ell}_k \in \RR^{M\times D}$ (without the $j$ index in the exponent) to represent the weights in one layer organized in a matrix (each row contains the weights $U^{j,\ell}_k(s_{\ell-1})\in \RR^D$ of one unit). For a matrix $U\in \RR^{M\times D}$ and $p\geq 2$, we will consider the mixed normalized $\ell_2/\ell_p$ norm
 \[
 \Vert U\Vert_{\RMS, \bar p} = \Big( \frac{1}{M} \sum_{j=1}^M \Vert U^{j}\Vert_\RMS^p\Big)^{1/p}.
 \]
(the $\bar p$-th norm is always along the dimension of length $M$).
 For $x\in \RR^D$ and $y\in \RR^M$, we will repeatedly use the inequalities
 \[
 \Vert U^\ell_k x\Vert_{\bar p} \leq D \Vert U^\ell_k\Vert_{\RMS, \bar p} \Vert x\Vert_{\RMS}, \qquad  \Vert (U^\ell_k)^\top  y\Vert_{\bar 2} \leq M \Vert U^\ell_k\Vert_{\RMS, \bar p} \Vert y\Vert_{\bar p}
 \]
 which can be deduced from $ \Vert Ux\Vert_{\bar p}^p = M^{-1}\sum_{j=1}^M |u_j^\top x|^p\leq D^p \Vert x\Vert^p_\RMS (M^{-1}\sum_j \Vert u_j\Vert_\RMS^p)$ where $u_j$ is the $j$-th row of $U$.
 We also introduce $P_k^\ell, Q_k^\ell\in \RR^M$ for the vector of activations in one layer.
 
 Throughout the proof, $c$ denotes a scalar that only depends on $B$ (from Assumption~\ref{ass:high-dim}), the exponent $p$ and $k$. We will take $p=2$ if $\clip_b$ is bounded and $p=4$ if not. We redefine $ \Delta_k^U\coloneqq \max_{\ell\in [1:L]} \Vert U^{\ell}_k-\hat U^{\ell}_k\Vert_{\bar 2,\bar p}$ and similarly for $\Delta_k^V$ and we also use the shorthand $\Delta_k^Z \coloneqq \Delta_k^U \wedge \Delta_k^V$. %Finally, we also use $\Delta_k^P \coloneqq \max_{\ell\in [1:L]}\Vert P^\ell_k -\hat P^\ell_k\Vert_{\bar p}$ and similarly for $\Delta_k^Q$.
 %For matrices, $\Vert U^\ell_k \Vert_{\RMSF} = \big(\frac{1}{DM} \sum_{i,j}U^\ell_k[i,j]^2 \big)^{\frac12}$ represents the normalized Frobenius norm. We will use the inequality $\Vert Ux\Vert_\RMS\leq D\Vert U\Vert_\RMSF \Vert x\Vert_\RMS$ for $x\in \RR^D$ and $U\in \RR^{M\times D}$.

\paragraph{Step 1. Update error bound.} Let $K$ be the terminal time index. We know by Proposition~\ref{thm:phase-diagram-D} that there exists $c>0$ such that for $k\leq K$, $\forall s\in [0,1]$ $\Vert h_k(s)\Vert_\RMS\wedge \Vert b_k(s)\Vert_\RMS \leq c$.
Moreover, by Lemma~\ref{lem:subgaussian-bis}, there exists $c>0$ such that $ \Vert \Vert \Delta U^{j,\ell}_k\Vert_\RMS \Vert_{\psi_2} \leq c$ for $\ell\in[1:L]$, $k\in [0:K]$, $j\in [1:M]$. Since the random variables $U^{j,\ell}_k$ are iid across $j$, it follows from the concentration of $\ell_p$ norm (e.g.~\cite[Lemma 3.3]{sheu2023matrix}) that with probability at least $1-\delta_0$,
\[
\Vert \Delta U_k^\ell\Vert_{\bar 2, \bar p} \leq c\Big(1+\frac{\sqrt{\log(1/\delta_0)}}{M^{1/p}}\Big).
\]

Reasonning similarly for $\Delta V_k^\ell$ and by a union bound, with probability at least $1-\delta_0$, \[
\max_{\ell\in [1:L]} \Vert \Delta U^{\ell}_k \Vert_{\RMS,\bar p} \wedge \Vert \Delta V^{\ell}_k \Vert_{\RMS,\bar p} < c\Big( 1+ \frac{\sqrt{\log L}}{M^{1/p}}+ \frac{\sqrt{\log(1/\delta_0)}}{M^{1/p}}\Big)\leq c\Big( 1+ \frac{\sqrt{\log(1/\delta_0)}}{M^{1/p}}\Big)
\]
since we have assumed $\log L \leq c M^{2/p}$. 

By~\cite[Theorem 1.2]{sheu2023matrix}, and by a union bound over $L$, we also have with probability at least $1-\delta_0$ that
$$
\max_{\ell\in [1:L]} \Vert U_0^\ell\Vert_{\bar 2\to \bar p} \wedge \Vert V_0^\ell \Vert_{\bar 2\to \bar p} \leq cD \Big(1 +\frac{\sqrt{\log(1/\delta_0)}}{M^{1/p}}\Big).
$$
and also, for the transpose matrices, we have the more classical bound
$$
\max_{\ell\in [1:L]} \Vert (U_0^\ell)^\top \Vert_{\bar 2\to \bar 2} \wedge \Vert (V_0^\ell)^\top \Vert_{\bar 2\to \bar 2} \leq cM \Big(1 +\frac{\sqrt{\log(1/\delta_0)}}{\sqrt{M}}\Big).
$$

Let $\tilde K\leq K$ be the largest integer such that the following holds:
\begin{align*}
\max_{k\leq \tilde K}\Delta_k^h \wedge \Delta_k^b &\leq \max_{k\leq K}\sup_{s\in [0,1]}\Vert h_k(s)\Vert_{\RMS} \wedge \Vert b_k(s)\Vert_{\RMS} \coloneqq R,\\
\max_{k\leq \tilde K} \Delta_k^U \wedge \Delta_k^V&\leq \max_{k\leq K}\sup_{\ell \in [1:L]} \Vert \Delta U^\ell_k\Vert_{\RMS,\bar p} \wedge \Vert \Delta V^\ell_k\Vert_{\RMS,\bar p}%\\
\end{align*} 
and let us from now on assume $k\leq \tilde K$. Later, we will ensure that it holds $\tilde K\geq K$ under the favorable event built in the proof by taking the scalar $c_2$ in the statement of the theorem small enough to conclude the proof.

For $k\leq \tilde K$, we have 
\begin{align}\label{eq:error-preactivations}
 \Vert P_k^\ell-\hat P^\ell_k\Vert_{\bar p} &\leq \frac1D \Vert (U_0^\ell+\Delta U_k^\ell)h_k^\ell - (U_0^\ell+\Delta \hat U_k^\ell)\hat h_k^\ell \Vert_{\bar p}\\
& \leq \frac{1}{D}\Vert U_0^\ell\Vert_{\bar 2 \to \bar p} \cdot \Vert h_k^\ell-\hat h_k^\ell\Vert_\RMS + \frac{D}{D}\Vert \Delta U^\ell_k \Vert_{\bar 2, \bar p} \cdot  \Vert h_k^\ell - \hat h_k^\ell\Vert_\RMS+ \frac{D}{D}\Vert \Delta U^\ell_k-\Delta \hat U^\ell_k \Vert_{\bar 2, \bar p} \cdot  \Vert  \hat h_k^\ell\Vert_\RMS
 \\
 &\leq c\Big(1+\frac{\sqrt{\log(1/\delta_0)}}{M^{1/p}}\Big) \Big( \Delta_k^h + \Delta_k^{U}\Big)
\end{align} 
and analogously
\begin{align*}
\Vert Q_k^\ell-\hat Q^\ell_k\Vert_{\bar p} \leq c\Big(1+\frac{\sqrt{\log(1/\delta_0)}}{M^{1/p}}\Big) \Big( \Delta_k^b + \Delta_k^{V}\Big). 
\end{align*}
It follows
\begin{align*}
\Vert V_{k+1}^{\ell}-\hat V_{k+1}^{\ell}\Vert_{\bar 2,\bar p} &\leq \Vert V_{k}^{\ell}-\hat V_{k}^{\ell}\Vert_{\bar 2, \bar p}+c \Vert  \rho(P_k^\ell) (b_k^\ell)^\top -  \rho(\hat P^\ell_k)(\hat b_k^\ell)^\top \Vert_{\bar 2, \bar p}\\
&\leq  \Vert V_{k}^{\ell}-\hat V_{k}^{\ell}\Vert_{\bar 2, \bar p}+ c \Vert P_k^\ell-\hat P^\ell_k\Vert_{\bar p} \cdot \Vert \hat b_k^\ell\Vert_\RMS + c \Vert b_k^\ell-\hat b_k^\ell\Vert_\RMS \cdot (1+\Vert P^\ell_k\Vert_{\bar p})\\
&\leq \Vert V_{k}^{\ell}-\hat V_{k}^{\ell}\Vert_{\bar 2, \bar p} + c\Big(1+\frac{\sqrt{\log(1/\delta_0)}}{M^{1/p}}\Big) \Big( \Delta_k^h +\Delta_k^b +\Delta_k^U \Big).
\end{align*}
By similar computations, and using in particular that $\phi:(x,y)\mapsto \rho'(x)\clip_u(y)$ is Lipschitz, we also have
\begin{align*}
\Vert U_{k+1}^{\ell}-\hat U_{k+1}^{\ell}\Vert_{\bar 2, \bar p} &\leq \Vert U_{k}^{\ell}-\hat U_{k}^{\ell}\Vert_{\bar 2, \bar p} + c\Vert  \phi(P_k^\ell, Q_k^\ell) ( h_k^\ell)^\top - \phi(\hat P_k^\ell,\hat Q_k^\ell) ( \hat h_k^\ell)^\top  \Vert_{\bar 2, \bar p}\\
&\leq  \Vert U_{k}^{\ell}-\hat U_{k}^{\ell}\Vert_{\bar 2, \bar p}+
c\Vert \phi(P_k^\ell,Q_k^{\ell})\Vert_{\bar p} \cdot 
\Vert  h^\ell_k-\hat h^\ell_k\Vert_{\bar 2}+c\Vert \hat h^\ell_k\Vert_{\bar 2}\Vert \phi(P_k^\ell,Q_k^\ell) -\phi(\hat P_k^{\ell},\hat Q_k^{\ell}\Vert_{\bar p}  \\
&\leq \Vert U_{k}^{\ell}-\hat U_{k}^{\ell}\Vert_{\bar 2,\bar p} + c \Big(1+\frac{\sqrt{\log(1/\delta_0)}}{M^{1/p}}\Big) \Big( \Delta_k^h +\Delta_k^b +\Delta_k^U +\Delta_k^V \Big).
\end{align*}

Overall, it holds by a union bound for $k\leq \tilde K$ with probability at least $1-\delta_0$,
\begin{align}\label{eq:recursion-bis-raw}
\Delta_{k+1}^Z &\leq c\Big(1+\frac{\sqrt{\log(1/\delta_0)}}{M^{1/p}}\Big)\Big( \Delta_k^Z +\Delta_k^h +\Delta_k^b \Big).
\end{align}
It is therefore sufficient to obtain controls on $\Delta_k^h$ and $\Delta_k^b$ (which are $\ell_2$ type controls) in order to conclude the proof.
We will do so by applying Lemma~\ref{lem:SA-relaxed} with the functions
\begin{align*}
f^h_{k}(s,x,(u,v))=v\rho(\ipD{u}{x}),&& f^b_{k}(s,x,(u,v))= \rho'(\ipD{u}{ h_k(s)}) \clip_b(\ipD{v}{x})u
\end{align*}
involved in the $k$-th forward pass and backward pass, respectively.

\paragraph{Step 2. Error on the forward pass} 
Let us verify the hypotheses of this lemma for the $k$-th forward pass $f^h_{k}$ where the corresponding mean ODE velocity field is
$$
F^h_{k}(s,x)=\E[V_k(s)\rho(\ipD{ U_k(s)}{x})].
$$
\begin{itemize}
\item \emph{Regularity of the Mean ODE~\eqref{eq:SA-regularity-bis}.}  Clearly, $\Vert F^h(s,0)\Vert_\RMS \leq c$ by Lemma~\ref{lem:subgaussian-bis}. The Lipschitz regularity can be shown using the regularity estimates of Lemma~\ref{lem:Lipschitz-bis} as follows:
\begin{align*}
\Vert F^h(s,x) - F^h(s',x)\Vert_{\RMS}&\leq \E[\Vert V_k(s)-V_k(s')\Vert_{\RMS}|\rho(U_k(s)^\top x/D)|]\\
&\quad + \E[\Vert (V_0+V_k(s'))(\rho(U(s)^\top x/D)-\rho(U(s')^\top x/D))\Vert_{\RMS}]\\
&\leq \E^{1/2} [\Vert V(s)-V(s')\Vert^2_{\RMS} ] \cdot \E^{1/2}[\rho(U_k(s)^\top x/D)^2]\\
&\quad +c \big({\sigma_0}/\sqrt{D}+1\big) \cdot \E^{1/2} [\Vert U(s)-U(s')\Vert_{\RMS}^2] \cdot \Vert x\Vert_{\RMS} \\
&\leq c|s-s'|(1+\Vert x\Vert_{\RMS})
\end{align*}
using in particular the decorellation Lemma~\ref{lem:scalar-vector-expectation}. The Lipschitz regularity of $F$ in $x$ can be derived similarly.
\item \emph{Subexponential fluctuations.}  Using the Lipschitz continuity of $\rho$, for $\Vert x\Vert_\RMS \leq R$, it holds
\begin{align*}
\Vert f^h_{k}(s,x,(U_k,V_k))\Vert_{\psi_1}&=\Vert V_k(s)\rho (U_k(s)^\top x/D)\Vert_{\psi_1} \\
&\leq \Vert V_k(s)\Vert_{\psi_2} \Vert \rho(U_k(s)^\top x/D)\Vert_{\psi_2}\\
&\leq c \Vert V_k(s)\Vert_{\psi_2} \Big(1+\frac{\Vert x\Vert_2}{D}\cdot \Vert U_k(s)  \Vert_{\psi_2}\Big).
\end{align*}
By the subgaussian bounds in Lemma~\ref{lem:subgaussian-bis}, we have $\Vert V_k\Vert_{\psi_2} \wedge \Vert U_k\Vert_{\psi_2}\leq c\sqrt{D}$.

Therefore, we have for $\Vert x\Vert_{\RMS}\leq R$ and $s\in [0,1]$ that $\Vert f_{k}^h(s,x,(U_k,V_k))\Vert_{\psi_1}\leq c\sqrt{D}\eqqcolon K_1$.
\item \emph{Error controls.} In the forward pass, the error in Assumption (i) of Lemma~\ref{lem:SA-relaxed} is $\epsilon_0=0$ (this error term only appears in the backward pass). Let us study the error in Assumption  (ii). Let $x,\hat x\in \RR^D$ such that $\Vert x\Vert_\RMS, \Vert \hat x\Vert_\RMS \leq 2R$ and $\ell\in [1:L']$. Using the operator norm bounds discussed before, and with $A_k^\ell = \rho(P_k^\ell)$, $\hat A_k^\ell=\rho(\hat P_k^\ell) \in \RR^M$ the activation vectors, it holds
\begin{align*}
&\Big \Vert \frac{1}{M} \sum_{j=1}^M \big(f^h_{k}(s_\ell,x,Z^{j,\ell}_k)-\hat f^h_{k}(s_\ell,\hat x,\hat Z^{j,\ell}_k)\big)\Big \Vert_\RMS \\
&= \Big \Vert \frac{1}{M} (V_0^\ell + \Delta V_k^\ell)^\top \rho(P_k^\ell) - (V_0^\ell + \Delta \hat V_k^\ell)^\top \rho(\hat P_k^\ell) \Big\Vert_\RMS\\
&\leq c \Big \Vert \frac{1}{M} (V_0^\ell)^\top (A_k^\ell-\hat A_k^\ell) \Big\Vert_\RMS+  \Big \Vert \frac{1}{M}(\Delta V_k^\ell - \Delta \hat V_k^\ell)^\top A_k^\ell \Big\Vert_\RMS+ \Big \Vert \frac{1}{M} (\Delta \hat V_k^\ell)^\top (A^{\ell}_k-\hat A^\ell_k)\Big\Vert_\RMS\\
&\leq c\Big(1+\frac{\sqrt{\log(1/\delta_0)}}{M^{1/p}}\Big) \Big(\Vert A_k^\ell - \hat A_k^\ell\Vert_{\RMS}+\Vert A_k^\ell\Vert_\RMS \Vert \Delta V_k^\ell - \Delta \hat V_k^\ell\Vert_{\bar 2, \bar 2} + \Vert \hat V_k^\ell\Vert_{\bar 2, \bar 2} \Vert A^{\ell}_k-\hat A^\ell_k\Vert_\RMS\Big)\\
&\leq c \Big(1+\frac{\sqrt{\log(1/\delta_0)}}{M^{1/p}}\Big)^2 (\Vert x-\hat x\Vert_\RMS + \Vert V_k^\ell -\hat V_k^\ell\Vert_{\RMS,\RMS} + \Vert U_k^\ell -\hat U_k^\ell\Vert_{\RMS,\RMS} )
\end{align*}
with probability at least $1-\delta_0$. Here the error $\Vert A^\ell_k-\hat A_k^\ell\Vert_\RMS$ was controlled as in~\eqref{eq:error-preactivations}. Since $\Vert V_k^\ell -\hat V_k^\ell\Vert_{\RMS,\RMS} + \Vert U_k^\ell -\hat U_k^\ell\Vert_{\RMS,\RMS}\leq \Delta_k^Z$,  Assumption (ii) holds with $K_2=c \Big(1+\frac{\sqrt{\log(1/\delta_0)}}{M^{1/p}}\Big)^2$ and $\epsilon_1=\Delta^U_k\wedge \Delta^V_k$. 
\end{itemize}
Therefore, by  Lemma~\ref{lem:SA-relaxed}, we have with probability at least $1-\delta_0-\delta'$
$$
\Delta_k^h \leq e^{c\Big(1+\frac{\sqrt{\log(1/\delta_0)}}{M^{1/p}}\Big)^2} \Big( \Big(1+\frac{\sqrt{\log(1/\delta_0)}}{M^{1/p}}\Big)^2\Delta_k^Z +\frac{1}{L} + \frac{\sqrt{D}+\log(1/\delta')}{\sqrt{ML}}\Big).
$$
Remark that at this stage we have never used the fact that we propagate controls in $\ell_p$ norms with $p$ potentially larger than $2$. This will be needed in the next step in case $\clip_b$ is not bounded.

\paragraph{Step 3. Error on the backward pass.}
Let us now verify the hypotheses of the stochastic approximation lemma for the $k$-th backward pass $f^b_{k}$ where the corresponding mean ODE velocity field is
$$
F^b_{k}(s,x)=\E[\rho'(P_k(s))\ipD{V_k(s)}{x} U_k(s)].
$$
To avoid overloading the expressions, we ignore for the moment the clipping function $\clip_b(\cdot)$ and will mention explicitly the moment in the proof where inserting this function and assuming it has a bounded range helps getting better estimates.

\begin{itemize}
\item \emph{Regularity of the Mean ODE~\eqref{eq:SA-regularity-bis}.}  Clearly, $\Vert F^b_k(s,0)\Vert_\RMS =0 \leq c$. For the Lipschitz regularity in $s$, using the decorellation Lemma~\ref{lem:scalar-vector-expectation}-(ii) in the second inequality:
\begin{align*}
&\Vert F^b_k(s,x) - F^b_k(s',x)\Vert_{\RMS}\\
& \leq \Vert \E[(\rho'(P_k(s))\ipD{V_k(s)}{x}-\rho'(P_k(s'))\ipD{V_k(s')}{x}) (U_0+\Delta U_k(s))]\Vert_\RMS \\
&\quad + \E [|\rho'(P_k(s'))\ipD{V_k(s')}{x}| \Vert \Delta U_k(s)-\Delta U_k(s')\Vert_\RMS] \\
&\leq c \E^{1/2}[|\rho'(P_k(s))\ipD{V_k(s)}{x}-\rho'(P_k(s'))\ipD{V_k(s')}{x}|^2]\\
&\quad + \E^{1/2}[ |\rho'(P_k(s'))\ipD{V_k(s')}{x}|^2] \E^{1/2}[\Vert \Delta U_k(s)-\Delta U_k(s')\Vert_\RMS^2]\\
&\leq c \Vert x\Vert_{\RMS}\Big( \E^{1/4}[|\rho'(P_k(s))-\rho'(P_k(s'))|^4]+  \E^{1/4}[\Vert \Delta V_k(s)-\Delta V_k(s')\Vert_\RMS^4]+  \E^{1/2}\Vert \Delta U_k(s)-\Delta U_k(s')\Vert_\RMS^2]\Big)\\
&\leq c|s-s'|\Vert x\Vert_{\RMS}
\end{align*}
using in particular the Lipschitz estimates in subgaussian norm (and therefore in any $L^p$ norm) from Lemma~\ref{lem:Lipschitz-bis}. The Lipschitz regularity of $F$ in $x$ can be derived similarly.
\item \emph{Subexponential fluctuations.}  Using that $\rho'$ is bounded, we have
\begin{align*}
\Vert f^b_{k}(s,x,(U_k,V_k)\Vert_{\psi_1}&=\Vert \rho' (P_k(s))\ipD{V_k(s)}{x}U_k(s)\Vert_{\psi_1} \\
&\leq c \Vert U_k(s)\Vert_{\psi_2} \Vert \ipD{V_k(s)}{x}\Vert_{\psi_2}\\
&\leq c \Vert U_k(s)\Vert_{\psi_2} \frac{\Vert x\Vert_2}{D}\cdot \Vert V_k(s)  \Vert_{\psi_2}.
\end{align*}
By the subgaussian bounds in Lemma~\ref{lem:subgaussian-bis}, we have $\Vert V_k\Vert_{\psi_2} \wedge \Vert U_k\Vert_{\psi_2}\leq c\sqrt{D}$.

Therefore, we have for $\Vert x\Vert_{\RMS}\leq R$ and $s\in [0,1]$ that $\Vert f^b_{k}(s,x,(U_k,V_k))\Vert_{\psi_1}\leq c\sqrt{D}\eqqcolon K_1$.
\item \emph{Error controls.}
The error in  Assumption (i) of Lemma~\ref{lem:SA-relaxed} is $\epsilon_0 \leq c\Vert h^L_k- \hat h^L_0\Vert_\RMS$. Let us study the error in Assumption  (ii). Let $x,\hat x\in \RR^D$ such that $\Vert x\Vert_\RMS, \Vert \hat x\Vert_\RMS \leq 2R$ and $\ell\in [1:L']$. Using the operator norm bounds discussed before, and letting $A^{j} = \rho'(P_k^{j,\ell})\cdot \ipD{V_k^{j,\ell}}{x}$, $\hat A^{j} = \rho'(\hat P_k^{j,\ell})\cdot \ipD{\hat V_k^{j,\ell}}{\hat x}$ (and similarly the vectors $A, \hat A\in \RR^M$), it holds
\begin{align*}
&\Big\Vert \frac{1}{M} \sum_{j=1}^M \big(f^b_{k}(s_\ell,x,Z^{j,\ell}_k)-\hat f^b_{k}(s_\ell,\hat x,\hat Z^{j,\ell}_k)\big)\Big \Vert_\RMS \\
&=\Big\Vert \frac1M \sum_{j=1}^M A^{j} U^{j,\ell}_k(s) - \hat A^j \hat U^{j,\ell}_k(s)\Big\Vert_\RMS\\
&\leq \Big\Vert \frac1M (U^\ell_0)^\top (A-\hat A)\Big\Vert_\RMS+ \Big\Vert \frac1M (\Delta U^\ell_k-\Delta \hat U^\ell_k)^\top A)\Big\Vert_\RMS+  \Big\Vert \frac1M (\Delta \hat U^\ell_k)^\top (A-\hat A)\Big\Vert_\RMS \\
&\leq c\Big(1+\frac{\sqrt{\log(1/\delta)}}{\sqrt{M}}\Big) \Vert A-\hat A\Vert_{\RMS} + \Vert\Delta U^\ell_k-\Delta \hat U^\ell_k\Vert_{\RMS, \RMS}\cdot \Vert A\Vert_{\RMS}+ \Vert \Delta \hat U^\ell_k\Vert_{\RMS, \RMS}\cdot \Vert A-\hat A\Vert_{\RMS}.
\end{align*}
The most challenging term is $\Vert A-\hat A\Vert_\RMS$. In case $\clip_b$ is \emph{not} bounded, we proceed as follows
\begin{align*}
&\Vert A-\hat A\Vert_\RMS\\ &\leq\Vert (\rho'(P_k^{\ell})-\rho'(\hat P_k^\ell))\odot [\ipD{V_k^{j,\ell}}{x}]_{j}\Vert_{\RMS} + \Vert \rho'(\hat P_k^\ell)\odot [\ipD{V_k^{j,\ell}}{x}-\ipD{\hat V_k^{j,\ell}}{\hat x}]_{j}\Vert_{\RMS}\\
&\leq \Vert \rho'(P_k^{\ell})-\rho'(\hat P_k^\ell)\Vert_{\bar 4} \Vert D^{-1}(V_0^\ell+\Delta V_k^\ell)x\Vert_{\bar 4}+  \Vert \rho'(\hat P_k^\ell)\Vert_{\bar 4}\Vert  [\ipD{V_k^{j,\ell}}{x}-\ipD{\hat V_k^{j,\ell}}{\hat x}]_{j}\Vert_{\bar 4}
\end{align*}
where the last line uses Cauchy-Schwartz's inequality (this is necessary for the first term; but for the second term we could alternatively use the boundedness of $\rho'$). It is because of these $\ell_4$ terms that higher order moments controls are needed through the proof. Now we can decompose further all these terms into quantities that we have already controlled and we finally obtain
\begin{align*}
\Big\Vert \frac{1}{M} \sum_{j=1}^M \big(f^b_{k}(s_\ell,x,Z^{j,\ell}_k)-\hat f^b_{k}(s_\ell,\hat x,\hat Z^{j,\ell}_k)\big)\Big \Vert_\RMS \leq c\Big(1+\frac{\sqrt{\log(1/\delta)}}{M^{1/p}}\Big)^3 \Big( \Vert x-\hat x\Vert_\RMS + \Delta_k^Z + \Delta_k^h \Big)
\end{align*}
if we take $p= 4$. In case $\clip_b$ is assumed bounded, then the Cauchy-Schwarz inequality above is not needed and $\Vert A-\hat A\Vert_\RMS$ can be controlled only in terms of RMS-norm and we can take $p=2$. In all cases, we have that Assumption (ii) holds with $K_2=c \Big(1+\frac{\sqrt{\log(1/\delta_0)}}{M^{1/p}}\Big)^3$ and $\epsilon_1=\Delta^Z_k \wedge \Delta_k^h$. 
\end{itemize}
Therefore, by Lemma~\ref{lem:SA-relaxed}, we get with probability at least $1-\delta_0- \delta'$
$$
\Delta_k^b \leq e^{c\Big(1+\frac{\sqrt{\log(1/\delta_0)}}{M^{1/p}}\Big)^3} \Big( \Vert h^L_k- \hat h^L_0\Vert_\RMS+ \Big(1+\frac{\log(1/\delta_0)}{M^{1/p}}\Big)^3(\Delta_k^Z+\Delta_k^h) +\frac{1}{L} + \frac{\sqrt{D}+\log(1/\delta')}{\sqrt{ML}}\Big).
$$

\paragraph{Step 3. Conclusion.}
Let $\delta$ be the high-probability parameter in Theorem~\ref{claim:D-dependence}. From this probability ``budget'', we take $\delta_0=e^{-M^{2/p}}/2$ so that $\frac{\sqrt{\log(1/\delta_0)}}{M^{1/p}}\leq c$ to control all the matrix norms by constants. Since we have assumed $\delta>e^{-M^{2/p}}$, the remainder $\delta-\delta_0$ is still comparable to $\delta$.
Plugging these estimates in~\eqref{eq:recursion-bis-raw} and by a union bound on $k$ and on the $n$ training samples, this leads to, with probability at least $1-\delta$, for all $k\leq \tilde K$,
\begin{align*}%\label{eq:recursion-bis-full}
\Delta_{k+1}^Z 
&\leq c\left( \Delta_{k}^Z  +\frac{1}{L} +  \frac{\sqrt{D}+\log(n/\delta)}{\sqrt{ML}}\right).
\end{align*}
By the discrete Gronwall's inequality, since $\Delta_0=0$ we get
$$
\Delta_k^Z \leq c  \left(\frac{1}{L} +  \frac{\sqrt{D}+\log(n/\delta)}{\sqrt{ML}}\right)
$$
with probability at least $1-\delta$ for $k\leq \tilde K$. Now, if this control on $\Delta_k^Z$ is small enough, this allows to ensure that $K'\geq K$ and therefore that this bound holds for $k\leq K$. This concludes the proof.

\newpage
\appendix

\bibliography{LC.bib}

\appendix

\section{Appendix: subgaussian and subexponential random variables}
\label{sec:subgaussian}
Let us recall some standard tools to control tails of random variables. For a real random variable $X$ and $\theta\geq 1$, we define the norm %(or quasi-norm when $\theta<1$)
$$
\Vert X\Vert_{\psi_{\theta}} \coloneqq \inf \{ t>0\;;\; \E[\exp((|X|/t)^\theta)]\leq 2\}
$$
and for a $\RR^d$-valued random vector $Y$, we define 
$$
\Vert Y\Vert_{\psi_{\theta}} \coloneqq \sup_{\substack{v\in \RR^d\\ \Vert v\Vert_2\leq 1}} \Vert v^\top Y \Vert_{\psi_\theta}.
$$
When $\theta=2$, this is called the subgaussian norm and when $\theta=1$, the subexponential norm. If $\Vert Y\Vert_{\psi_2}<+\infty$ we say that $Y$ is subgaussian and if $\Vert Y\Vert_{\psi_1}<+\infty$ we say that $Y$ is subexponential. Remark that the constant random variable equal to $1$ satisfies $\Vert 1\Vert_{\psi_\theta}=(\log(2))^{-1/\theta}$ which is smaller than $2$ for $\theta\in \{1,2\}$.

We will use the following facts about these norms:
\begin{enumerate}
\item A real-valued random variable $X$ is subgaussian iff $X^2$ is subexponential and $\Vert X\Vert_{\psi_2}^2=\Vert X^2\Vert_{\psi_1}$ (follows from the definition, or see~\cite[Lemma.~2.7.6]{vershynin2018high}). More generally, if $X,X'$ are scalar subgaussian then $\Vert X\cdot X'\Vert_{\psi_1}\leq \Vert X\Vert_{\psi_2}\cdot\Vert X'\Vert_{\psi_2}$.
\item If $X$ is a subgaussian in $\RR^D$ then $\Vert X\Vert_{\psi_2}\leq \Vert \Vert X\Vert_2\Vert_{\psi_2}\leq \sqrt{D}\Vert X\Vert_{\psi_2}$. The first inequality follows from the definition and the second follows from
\begin{align*}
\Vert \Vert X\Vert_2 \Vert^2_{\psi_2} = \Vert\Vert X\Vert_2^2\Vert_{\psi_1}=\Big\Vert \sum_{i=1}^D X_i^2\Big\Vert_{\psi_1}\leq \sum_{i=1}^D \Vert X_i^2\Vert_{\psi_1}= \sum_{i=1}^D \Vert X_i\Vert_{\psi_2}^2\leq D\Vert X\Vert^2_{\psi_2}.
\end{align*}
\item If $X$ is subgaussian in $\RR^D$ and $f:\RR^D\to \RR^D$ satisfies $\Vert f(x)\Vert_2\leq A+B\Vert x\Vert_2$ then $f(X)$ is subgaussian and $\Vert f(X)\Vert_{\psi_2}\leq 2A+B\sqrt{D}\Vert X\Vert_{\psi_2}$. This follows from
$$
\Vert f(X)\Vert_{\psi_2}\leq \Vert \Vert f(X)\Vert_2\Vert_{\psi_2}\leq \Vert A+B\Vert X\Vert_2\Vert_{\psi_2}\leq \frac{A}{\sqrt{\log(2)}}+B\Vert \Vert X\Vert_2\Vert_{\psi_2}.
$$
\item  If $X$ is subgaussian in $\RR^D$ then $\Vert X-\E[X]\Vert_{\psi_2}\leq c\Vert X\Vert_{\psi_2}$ for some absolute $c>0$ (for $X$  scalar this is~\cite[Lemma~2.6.8]{vershynin2018high}).
\end{enumerate}
For a subgaussian random vector $X$ in $\RR^D$, we also consider the variance proxy seminorm defined as
$$
\Vert X\Vert_{vp}\coloneqq \inf \Big\{ s>0\;;\; \E[ e^{u^\top (X-\E[X])}]\leq e^{s^2\Vert u\Vert^2_2/2}, \forall u\in \RR^D\Big\}.
$$
This is the infimum of all variance proxies of $X$ (defined in~\eqref{eq:variance-proxy}). As a seminorm, this quantity satisfies positive homogeneity, triangle inequality, and is nonnegative but does not separate points. There exists absolute constants $c,c'>0$ such that for any $X$ subgaussian in $\RR^D$ it holds $c\Vert X\Vert_{vp}\leq \Vert X-\E[X]\Vert_{\psi_2}\leq c'\Vert X\Vert_{vp}$. Moreover, if $f:\RR^D\to \RR^D$ is $L$-Lipschitz then 
\begin{align*}
\Vert f(X)\Vert_{vp} &\leq c \Vert f(X)-f(\E[X])+f(\E[X])-\E f(X)\Vert_{\psi_2}\\
&\leq 2cL\Vert \Vert X-\E[X]\Vert_2\Vert_{\psi_2}\leq 2c'L\sqrt{D} \Vert X\Vert_{vp}
\end{align*}

Subgaussian vectors satisfy the following classical concentration bound which can be obtained via an $\epsilon$-net argument on the sphere and a union bound.
\begin{lemma}[Subgaussian vector concentration]\label{lem:subgaussian-vector-concentration}
Let $X$ be a centered subgaussian random vector in $\RR^D$ with variance proxy $\sigma^2$. Then for all $\delta >0$, it holds with probability at least $1-\delta$
\[
\Vert X\Vert_2  \leq c\sigma (\sqrt{D}+\sqrt{\log(2/\delta)}) 
\]
where $c>0$ is an absolute constant.
\end{lemma}

We also need the concentration inequality for subexponential random variables (for convenience of the reader, we also give a proof in the precise form that we need as it is somewhat less classical than Lemma~\ref{lem:subgaussian-vector-concentration}).

\begin{lemma}[Subexponential vector concentration]\label{lem:subexp-concentration}
Let $(\xi^{j,\ell})_{j\in [1:M], \ell\in [1:L]}$ be a family of independent and centered subexponential random variables in $\RR^D$ and $K>0$ such that $\Vert \xi^{j,\ell}\Vert_{\psi_1}\leq K$. Assume that $D\leq ML$. Then there exists an absolute constant $c>0$ such that  with probability at least $1-\delta$ it holds
$$
\max_{1\leq \ell<L} \Big \Vert \frac{1}{LM}\sum_{k=1}^{\ell}\sum_{j=1}^M \xi^{j,k}\Big\Vert_2\leq cK \frac{\sqrt{D}+\log(1/\delta)}{\sqrt{ML}}.
$$
\end{lemma}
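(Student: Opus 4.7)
The plan is to combine an $\varepsilon$-net argument on the unit sphere with Bernstein's inequality for scalar sub-exponential sums, and to handle the maximum over partial sums via Ottaviani's (or Lévy's) maximal inequality. Write $S_\ell = \sum_{k=1}^\ell \sum_{j=1}^M \xi^{j,k}$, so that the quantity to bound is $\max_\ell \|S_\ell\|_2/(LM)$.

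First, I would fix a $1/4$-net $\mathcal{N}$ of the Euclidean sphere $S^{D-1}$ with $|\mathcal{N}|\leq 9^D$, reducing the vector norm to a scalar supremum via $\|S_\ell\|_2 \leq 2\sup_{u\in\mathcal{N}} u^\top S_\ell$. For each fixed $u\in S^{D-1}$ and each $\ell$, the scalar random variables $(u^\top \xi^{j,k})_{j\leq M, k\leq \ell}$ are independent, centered, and have $\psi_1$-norm at most $K$. Bernstein's inequality then gives
\[
\mathbb{P}\bigl(|u^\top S_\ell|>t\bigr)\leq 2\exp\!\left(-c\min\!\left(\frac{t^2}{\ell M K^2},\,\frac{t}{K}\right)\right).
\]

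Second, I would pass to the maximum over $\ell\in[1{:}L-1]$ for this fixed direction $u$. Since $(u^\top S_\ell)_\ell$ is the partial-sum sequence of independent centered scalar random variables, Ottaviani's inequality implies that, for any $t$ such that $\max_{\ell<L}\mathbb{P}(|u^\top(S_{L}-S_\ell)|>t)\leq 1/2$ (which can be ensured for $t\gtrsim K\sqrt{ML}+K$ by the above Bernstein bound),
\[
\mathbb{P}\!\bigl(\max_{\ell<L}|u^\top S_\ell|>2t\bigr)\leq 2\,\mathbb{P}(|u^\top S_L|>t).
\]
A union bound over $u\in\mathcal{N}$ then costs a factor $9^D = e^{D\log 9}$, replacing $\delta$ by $\delta/9^D$ in the tail. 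Inverting the Bernstein tail to solve for $t$ yields, with probability at least $1-\delta$,
\[
\max_{\ell<L}\|S_\ell\|_2 \leq CK\Bigl(\sqrt{LM\,(D+\log(1/\delta))} + D+\log(1/\delta)\Bigr).
\]

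Third, dividing by $LM$ and using the assumption $D\leq ML$ (which forces $D/\sqrt{LM}\leq\sqrt{D}$ so the linear Bernstein term is controlled by the Gaussian one), combined with the elementary bound $\sqrt{D+\log(1/\delta)}\leq \sqrt{D}+\sqrt{\log(1/\delta)}\leq \sqrt{D}+\log(1/\delta)$ for $\log(1/\delta)\geq 1$ (the case $\log(1/\delta)<1$ being absorbed in the constant), I recover the advertised bound $cK(\sqrt{D}+\log(1/\delta))/\sqrt{ML}$.

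The main obstacle is the balance between the cost $e^{D\log 9}$ of the spherical net and the two regimes of Bernstein's tail: one must verify that the hypothesis $D\leq ML$ is exactly enough so that paying for the net does not push the optimal $t$ out of Bernstein's Gaussian (sub-Gaussian) regime into its linear (sub-exponential) regime; otherwise the correct rate would degrade from $\sqrt{D/(LM)}$ to $D/(LM)$. Ottaviani's step is routine but is the reason why a bare union bound over $\ell$, which would cost an extra $\log L$, can be avoided.
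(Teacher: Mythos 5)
Your proposal is correct and follows essentially the same route as the paper's proof: an $\varepsilon$-net reduction on the sphere, Bernstein's inequality for the sub-exponential scalar sums, the Lévy--Ottaviani maximal inequality to avoid a union bound over $\ell$, and the hypothesis $D\leq ML$ to absorb the linear Bernstein term into the sub-Gaussian one. The only (immaterial) difference is that you apply Ottaviani to the scalar projections $u^\top S_\ell$ before the union bound over the net, whereas the paper applies it to the vector-valued partial sums at the end.
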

\begin{proof}
 By a usual $\epsilon$-net argument~\cite[Corollary 4.2.13]{vershynin2018high} with $\epsilon=\frac12$, it holds
\begin{align}
\P\Big( \Big\Vert \frac{1}{LM}\sum_{k=1}^{\ell}\sum_{j=1}^M \xi^{j,k}\Big\Vert_2>t\Big)\leq 5^D\max_{\Vert \lambda\Vert_2\leq 1}\P\Big( \frac{1}{LM}\sum_{k=1}^{\ell}\sum_{j=1}^M \lambda^\top \xi^{j,k} >t/2\Big) 
\end{align}
Now, for any $\lambda\in \RR^D$ with $\Vert \lambda\Vert_2= 1$, Bernstein's concentration inequality~\cite[Corollary~2.8.3]{vershynin2018high} yields, for some absolute constant $c>0$,
\begin{align}
\P\Big(\frac{1}{LM}\sum_{k=1}^{\ell}\sum_{j=1}^M \lambda^\top \xi^{j,k} >t \Big)\leq \exp\Big(-c \min \Big\{\frac{t^2}{K^2},\frac{t}{K}\Big\}ML\Big).
\end{align}
It follows that we can guarantee
$
\P\Big( \Big\Vert \frac{1}{LM}\sum_{k=1}^{\ell}\sum_{j=1}^M \xi^{j,k}\Big\Vert_2>t\Big)<e^{-s}
$
if $s$ and $t$ are such that
$$
cML \min \Big\{ \frac{t^2}{K^2},\frac{t}{K}\Big\}\geq D+s.
$$
This relation is satisfied for $t=c'K \left(\sqrt{\frac{D+s}{ML}}+\frac{D+s}{ML}\right) \leq c'' K \Big(\frac{\sqrt{D}}{\sqrt{ML}}+\frac{\sqrt{s}}{\sqrt{ML}}+\frac{s}{ML}\Big)$ using $D\leq LM$.
Then the claim follows by Lemma~\ref{lem:levy} and simplifying the expression.
\end{proof}

The next lemma shows that the tail of maximum of a running sum of independent variables is comparable to the tail of a single partial sum.
\begin{lemma}[Lévy-Ottaviani inequality]\label{lem:levy}~\cite[Proposition~1.1.2]{de2012decoupling}
Let $X_1,\dots X_L \in \RR$ be independent random variables (not necessarily centered). Then for all $t>0$, 
$$
\P\Big( \max_{1\leq k\leq L}\Big\Vert \sum_{i=1}^k X_i\Big\Vert>t \Big)\leq 3 \max_{1\leq k\leq L} \P\Big( \big\Vert \sum_{i=1}^k X_i\big\Vert>t/3 \Big).
$$
\end{lemma}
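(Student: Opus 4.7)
I would follow the same high-level architecture as the proof of Theorem~\ref{thm:main}, but with three essential modifications dictated by the 2LP setting: (i) all distances are measured in the RMS norm, which is the natural scale in the large-$D$ regime; (ii) the refined stochastic approximation Lemma~\ref{lem:SA-relaxed} replaces the simpler Lemma~\ref{lem:SA}, since the per-layer fluctuations of $\phi(x,(u,v))=v\rho(u^\top x/D)$ are sub-exponential (not sub-gaussian) and only locally regular; and (iii) because $\phi$ and $D\phi$ are only pseudo-Lipschitz, I would run the entire recursion as a \emph{bootstrap}, conditioning on a high-probability event on which both the ResNet and the Mean ODE iterates remain inside an RMS-ball of controlled radius where local Lipschitz estimates apply.

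First I would couple the two dynamics: for each $(j,\ell)$, let $(U^{j,\ell}_k,V^{j,\ell}_k)_{k\geq 0}$ be iid trajectories of the limit GD dynamics~\eqref{eq:GD-limit} initialized at $(\hat U^{j,\ell}_0,\hat V^{j,\ell}_0)$ (viewed as constant in $s$). Theorem~\ref{thm:phase-diagram-D}-(i) and the proof of Section~\ref{sec:stability-estimates} already give me uniform-in-$D$ control on $\Vert h_k(s)\Vert_\RMS, \Vert P_k(s)\Vert_{L^2}, D\Vert b_k(s)\Vert_\RMS$ and $\Vert \Vert\Delta U_k(s)\Vert_\RMS\Vert_{L^2}$, $\Vert \Vert\Delta V_k(s)\Vert_\RMS\Vert_{L^2}$, all of which are $O(1)$. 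In parallel I would propagate a sub-gaussian bound with variance proxy $O(\sqrt{D})$ on $(U_k(s),V_k(s))$, by a recursion analogous to Proposition~\ref{prop:propagation-subgaussianity}, using the fact that the per-sample gradient map $z\mapsto g_i(s,z,Z_k)$ is $O(1)$-Lipschitz in RMS norm on the relevant region. This implies that the driving random field $f(s,x,(U_k(s),V_k(s)))=V_k(s)\rho(U_k(s)^\top x/D)$ is sub-exponential with $\Vert \cdot\Vert_{\psi_1}=O(\sqrt{D})$ (a product of a bounded quantity and a sub-gaussian vector of variance proxy $O(\sqrt{D})$), and analogously for the backward driving field. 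Feeding these inputs into Lemma~\ref{lem:SA-relaxed}, with $K_1=O(\sqrt{D})$, yields a sampling contribution of the order $\sqrt{D}/\sqrt{ML}$, together with a $1/L$ Euler term and an $\epsilon_1$ term coming from the mismatch between $\hat Z^{j,\ell}_k$ and $Z^{j,\ell}_k(s_{\ell-1})$. The condition $\max\{D,\log L\}\leq BM$ is exactly what is needed to absorb the covering exponent in the uniform concentration over an RMS-ball, as in Lemma~\ref{lem:subexp-concentration} and the companion Lemma~\ref{lem:max-operator-norm}.

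Next I would close the recursion on $\Delta_k\coloneqq \sup_{j,\ell}\Vert \hat Z^{j,\ell}_k-Z^{j,\ell}_k(s_{\ell-1})\Vert_\RMS$. Given a bound on $\Delta_k$, the local (pseudo-)Lipschitz properties of $\phi$ and $D\phi$ on the RMS-bounded region, combined with the two applications of Lemma~\ref{lem:SA-relaxed} (to forward and backward passes), give
\[
\max_{i,\ell}\Vert \hat h^\ell(x_i,\hat\bZ_k)-h(s_\ell,x_i,Z_k)\Vert_\RMS + D\cdot \max_{i,\ell}\Vert \hat b^\ell(\cdot)-b(s_\ell,\cdot)\Vert_\RMS = O\Bigl(\Delta_k + \tfrac{1}{L}+\tfrac{\sqrt{D}}{\sqrt{LM}}\Bigr)
\]
with high probability. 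Plugging this into the GD update~\eqref{eq:2LP-dynamics} and comparing with~\eqref{eq:GD-limit}, the local Lipschitz regularity of the sample gradient maps (which for 2LP is multiplicative in the relevant RMS norms, all $O(1)$) yields $\Delta_{k+1}\leq (1+c\eta_0)\Delta_k + c(\tfrac{1}{L}+\tfrac{\sqrt{D}}{\sqrt{LM}})$, and a discrete Gr\"onwall argument gives the claimed rate.

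The main obstacle, and the reason the statement includes the condition $\tfrac{1}{L}+\sqrt{D/(LM)}\leq c_1$, is to make the bootstrap self-consistent: because $\phi$ is not globally Lipschitz, the local estimates are only valid as long as $\Delta_k$ (and hence the ResNet iterates in RMS norm) stay inside a fixed RMS-ball around the limit. For that I would set $c_1$ small enough that the Gr\"onwall-amplified error over $K$ iterations remains below this threshold, which then validates a posteriori the local Lipschitz estimates used at every step. A second, more delicate point is the verification of assumption (ii) of Lemma~\ref{lem:SA-relaxed}: for the backward and for the parameter updates, the ``error field'' $\hat f-f$ depends not only on direct parameter mismatch but also on the mismatch between $h(s_\ell,\cdot,Z_k)$ and $\hat h^\ell(\cdot,\hat\bZ_k)$. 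I would absorb this into the inductive hypothesis by running the three controls---forward error, backward error, parameter error---simultaneously inside the same recursion over $k$, taking a single union bound over $k\leq K$ and $i\leq n$ at the end, which produces the stated failure probability $Kne^{-M}$ (the $e^{-M}$ comes from choosing $\log(1/\delta)=M$ in Lemma~\ref{lem:SA-relaxed} so that the $\log(1/\delta)/\sqrt{D}$ term is absorbed by the $1$ thanks to $D\leq BM$).
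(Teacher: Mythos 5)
Your proposal does not address the statement at hand. The statement is Lemma~\ref{lem:levy}, the L\'evy--Ottaviani maximal inequality: for independent real random variables $X_1,\dots,X_L$ and partial sums $S_k=\sum_{i=1}^k X_i$, one has $\P(\max_k |S_k|>t)\leq 3\max_k \P(|S_k|>t/3)$. This is a classical fact which the paper does not prove at all — it simply cites \cite[Proposition~1.1.2]{de2012decoupling}. What you have written instead is a proof sketch of Theorem~\ref{claim:D-dependence} (the large-$D$ error bound for ResNets with 2LP blocks): coupling of dynamics, RMS norms, Lemma~\ref{lem:SA-relaxed}, bootstrap on $\Delta_k$, and so on. None of that has any bearing on the maximal inequality.

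If you want to actually prove Lemma~\ref{lem:levy}, the standard argument is a first-passage decomposition. Let $\tau=\min\{k: |S_k|>t\}$ and write $\{\max_k|S_k|>t\}$ as the disjoint union of the events $A_k=\{\tau=k\}$. On $A_k$, the triangle inequality $|S_k|\leq |S_L|+|S_L-S_k|$ forces either $|S_L|>t/3$ or $|S_L-S_k|>2t/3$; the latter event is independent of $A_k$ (it depends only on $X_{k+1},\dots,X_L$), and $|S_L-S_k|>2t/3$ implies $|S_L|>t/3$ or $|S_k|>t/3$. Summing over $k$ and bounding each resulting probability by $\max_{k}\P(|S_k|>t/3)$ yields the factor $3$. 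Either supply such an argument or, as the paper does, treat the lemma as a citation; but the current proposal proves a different theorem entirely.
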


\section*{Differences between the versions of the manuscript}
\begin{itemize}
\item Version 1 (Sept.~12, 2025) is the first upload of the paper.
\item Version 2 (Mar.~2, 2026): 
\begin{itemize}
    \item the proof of the theorems have been streamlined and some minor errors have been fixed. Most importantly, we have added the ``clipping'' functions in the statement of Theorem~\ref{claim:D-dependence}, which were missing in v1.
    \item we have converted the phase diagram section (Section~\ref{sec:2LP-phase-diagram}) from a formal theorem statement into a more readable and reader friendly discussion.
    \item We have introduced the terminology ``Maximum Local Update'' (MLU) regime, while v1 was using inconsistent terminology.
\end{itemize}
\end{itemize}

\end{document}